\newtheorem{proposition}{Proposition}[section]
\newtheorem{remark}{Remark}[section]
\DeclareMathOperator*{\argmin}{argmin}
\DeclareMathOperator*{\argmax}{argmax}
\DeclareMathOperator{\conv}{conv}
\DeclareMathOperator{\proj}{proj}
\DeclareMathOperator{\dom}{dom}
\DeclareMathOperator{\softmax}{softmax}
\newcommand{\onevector}{\bm{1}}
\newcommand{\oneindicator}{\mathbbm{1}}
\newcommand{\FW}{\mathrm{FW}}
\newcommand{\bbE}{\mathbb{E}}
\newcommand{\bbP}{\mathbb{P}}
\newcommand{\bbR}{\mathbb{R}}
\newcommand{\calL}{\mathcal{L}}
\newcommand{\calN}{\mathcal{N}}
\newcommand{\calP}{\mathcal{P}}
\newcommand{\calR}{\mathcal{R}}
\newcommand{\calV}{\mathcal{V}}
\newcommand{\rmd}{\mathrm{d}}
\newcommand{\rme}{\mathrm{e}}
\newcommand{\delaycost}{c_{\text{delay}}}
\newcommand{\vehiclecost}{c_{\text{vehicle}}}
\newcommand{\inline}{\jlinl}
\title{
    Learning with Combinatorial Optimization Layers: \\
    a Probabilistic Approach
}
\author[1]{Guillaume Dalle}
\author[1]{Léo Baty}
\author[1]{Louis Bouvier}
\author[1,*]{Axel Parmentier}
\affil[1]{\small CERMICS, Ecole des Ponts, Marne-la-Vallée, France}
\affil[*]{\small Corresponding author: \href{mailto:axel.parmentier@enpc.fr}{\texttt{axel.parmentier@enpc.fr}}}
\date{\today}
\begin{document}

\maketitle

\begin{abstract}
    Combinatorial optimization (CO) layers in machine learning (ML) pipelines are a powerful tool to tackle data-driven decision tasks, but they come with two main challenges.
    First, the solution of a CO problem often behaves as a piecewise constant function of its objective parameters.
    Given that ML pipelines are typically trained using stochastic gradient descent, the absence of slope information is very detrimental.
    Second, standard ML losses do not work well in combinatorial settings.
    A growing body of research addresses these challenges through diverse methods.
    Unfortunately, the lack of well-maintained implementations slows down the adoption of CO layers.

    In this paper, building upon previous works, we introduce a probabilistic perspective on CO layers, which lends itself naturally to approximate differentiation and the construction of structured losses.
    We recover many approaches from the literature as special cases, and we also derive new ones.
    Based on this unifying perspective, we present \texttt{InferOpt.jl}, an open-source Julia package that~1) allows turning any CO oracle with a linear objective into a differentiable layer, and~2) defines adequate losses to train pipelines containing such layers.
    Our library works with arbitrary optimization algorithms, and it is fully compatible with Julia's ML ecosystem.
    We demonstrate its abilities using a pathfinding problem on video game maps as guiding example, as well as three other applications from operations research.

    \paragraph{Keywords:} combinatorial optimization, machine learning, automatic differentiation, graphs, Julia programming language
\end{abstract}

\clearpage

\setcounter{tocdepth}{2}
\tableofcontents

\clearpage

\section{Introduction}

Machine learning (ML) and combinatorial optimization (CO) are two essential ingredients of modern industrial processes.
While ML extracts meaningful information from noisy data, CO enables decision-making in high-dimensional constrained environments.
But in many situations, combining both of these tools is necessary: for instance, we might want to generate predictions from data, and then use those predictions to make optimized decisions.
To do that, we need \emph{pipelines} that contain two types of \emph{layers}: ML layers and CO layers.

Due to their many possible applications, hybrid ML-CO pipelines currently attract a lot of research interest.
The recent reviews by \textcite{bengioMachineLearningCombinatorial2021} and \textcite{kotaryEndtoEndConstrainedOptimization2021} are excellent resources on this topic.
Unfortunately, relevant software implementations are scattered across paper-specific repositories, with few tests, minimal documentation and sporadic code maintenance.
Not only does this make comparison and evaluation difficult for academic purposes, it also hurts practitioners wishing to experiment with such techniques on real use cases.

Let us discuss a generic hybrid ML-CO pipeline, which includes a CO oracle amid several ML layers:
\begin{equation} \label{eq:hybrid_pipeline_0}
    \xrightarrow[]{\text{Input~$x$}}
    \ovalbox{\begin{Bcenter}ML layers \end{Bcenter}}
    \xrightarrow[]{\text{Objective~$\theta$}}
    \doublebox{\begin{Bcenter} CO oracle\end{Bcenter}}
    \xrightarrow[]{\text{Solution~$y$}}
    \ovalbox{More ML layers}
    \xrightarrow[]{\text{Output}}
\end{equation}
The inference problem consists in predicting an output from a given input.
It is solved online, and requires the knowledge of the parameters (weights) for each ML layer.
On the other hand, the learning problem aims at finding parameters that lead to \enquote{good} outputs during inference.
It is solved offline based on a training set that contains several inputs, possibly complemented by target outputs.

In Equation~\eqref{eq:hybrid_pipeline_0}, we use the term \emph{CO oracle} to emphasize that any algorithm may be used to solve the optimization problem, whether it relies on an existing solver or a handcrafted implementation.
Conversely, when we talk about a \emph{layer}, it is implied that we can compute meaningful derivatives using automatic differentiation (AD).
Since it may call black box subroutines, an arbitrary CO oracle is seldom compatible with AD.
And even when it is, its derivatives are zero almost everywhere, which gives us no exploitable slope information.
Therefore, according to our terminology, \emph{a CO oracle is not a layer (yet)}, and the whole point of this paper is to turn it into one.

\medskip

Modern ML libraries provide a wealth of basic building blocks that allow users to assemble and train complex pipelines.
We want to leverage these libraries to create hybrid ML-CO pipelines, but we face two main challenges.
First, while ML layers are easy to construct, it is not obvious how to transform a CO oracle into a usable layer.
Second, standard ML losses are ill-suited to our setting, because they often ignore the underlying optimization problem.

Our goal is to remove these difficulties.
We introduce \texttt{InferOpt.jl}\footnote{\url{https://github.com/axelparmentier/InferOpt.jl}}, a Julia package which 1) can turn any CO oracle into a layer with meaningful derivatives, and 2) provides structured loss functions that work well with the resulting layers.
It contains several state-of-the-art methods that are fully compatible with Julia's AD and ML ecosystem, making CO layers as easy to use as any ML layer.
To describe the available methods in a coherent manner, we leverage the unifying concept of probabilistic CO layer, hence the name of our package.

\subsection{Motivating example} \label{sec:motivation}

Let us start by giving an example of hybrid ML-CO pipeline.
Suppose we want to find shortest paths on a map, but we do not have access to an exact description of the underlying terrain.
Instead, all we have are images of the area, which give us a rough idea of the topography and obstacles.
To solve our problem, we need a pipeline comprising two layers of very different natures.
First, an image processing layer, which is typically implemented as a convolutional neural network (CNN).
The CNN is tasked with translating the images into a weighted graph.
Second, a CO layer performing shortest path computations on said weighted graph (\textit{e.g.} using Dijkstra's algorithm).

This pipeline is exactly the one considered by \textcite{vlastelicaDifferentiationBlackboxCombinatorial2020} and \textcite{berthetLearningDifferentiablePerturbed2020} for pathfinding on video game maps.
We illustrate it on Figure~\ref{fig:warcraft_pipeline}, and we describe it in more detail in Section~\ref{sec:warcraft}.
The goal is to learn appropriate weights for the CNN, so that it feeds accurate cell costs to Dijkstra's algorithm.
This is done by minimizing a loss function, such as the distance between the true optimal path and the one we predict.

\begin{figure}
    \centering
    \includegraphics[width=\textwidth]{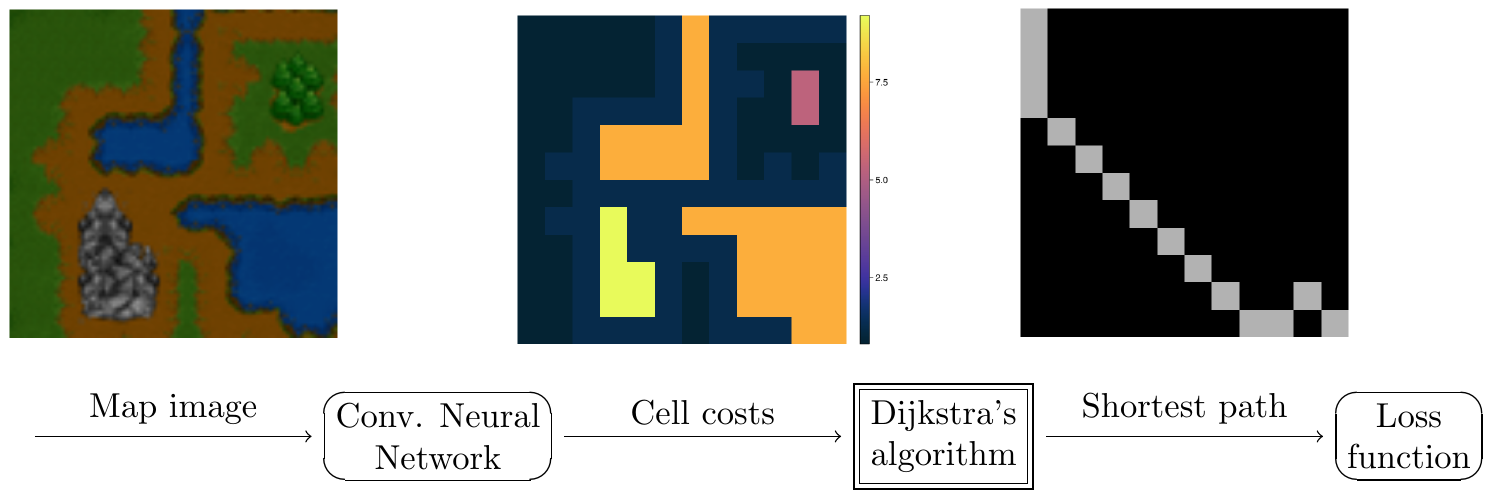}
    \caption{Pipeline for computing shortest paths on Warcraft maps -- data from \textcite{vlastelicaDifferentiationBlackboxCombinatorial2020}}
    \label{fig:warcraft_pipeline}
\end{figure}

\subsection{Our setting}

In our hybrid ML-CO pipelines, we consider CO oracles~$f$ that solve the following kind of problem:
\begin{equation} \label{eq:co_oracle}
    f: \theta \longmapsto \argmax_{v \in \calV} \theta^\top v
\end{equation}
Here, the input~$\theta \in \bbR^d$ is the \emph{objective direction}.
Meanwhile,~$\calV \subset \bbR^d$ (for \emph{vertices}) denotes a finite set of feasible solutions -- which may be exponentially large in~$d$ -- among which the optimal solution~$f(\theta)$ shall be selected.
For simplicity, we assume that~$f$ is single-valued, \textit{i.e.} that the optimal solution is unique.

The feasible set~$\calV$ and its dimension may depend on the instance.
For instance, if Equation~\eqref{eq:co_oracle} is a shortest path problem, the underlying graph may change from one input to another.
If we wanted to remain generic, we should therefore write~$\calV(x) \subset \bbR^{d(x)}$.
To keep notations simple, we omit the dependency in~$x$ whenever it is clear from the context.
Note that we could also study more general CO oracles given by
\begin{equation*}
    \argmax_{v \in \calV} \theta^\top g(v) + h(v)
\end{equation*}
where~$g$ is any function from an arbitrary finite set~$\calV$ to~$\bbR^d$, and~$h$ is any function from~$\calV$ to~$\bbR$.
As long as the objective is linear in~$\theta$, the theory we present generalizes seamlessly.
However, for ease of exposition, we keep~$g(v) = v$ and~$h(v) = 0$.
In this case, Equation~\eqref{eq:co_oracle} is equivalent to
\begin{equation*}
    \argmax_{v \in \conv(\calV)} \theta^\top v.
\end{equation*}
Indeed, when the objective is linear in~$v$, it makes no difference to optimize over the convex hull~$\conv(\calV)$ instead of optimizing over~$\calV$.

\subsubsection{From an optimization problem to an oracle}

It is important to note that the formulation~$\argmax_{v \in \calV} \theta^\top v$ is very generic.
Any linear program (LP) or integer linear program (ILP) can be written this way, as long as its feasible set is bounded.
Indeed, the optimum of an LP is always reached at a vertex of the polytope, of which there are finitely many.
Meanwhile, the optimum of an ILP is always reached at an integral point, or more precisely, at a vertex of the convex hull of the integral points.
As a result, Equation~\eqref{eq:co_oracle} encompasses a variety of well-known CO problems related to graphs (paths, flows, spanning trees, coloring), resource management (knapsack, bin packing), scheduling, \textit{etc.}
See \textcite{korteCombinatorialOptimizationTheory2006} for an overview of CO and its applications.

For every one of these problems, dedicated algorithms have been developed over the years, which sometimes exploit the domain structure better than a generic ILP solver (such as Gurobi or SCIP).
Thus, we have no interest in restricting the procedure used to compute an optimal solution: we want to pick the best algorithm for each application.
That is why the methods discussed in this paper only need to access the CO oracle~$f$ as a black box function, without making assumptions about its implementation.

\subsubsection{From an oracle to a probability distribution} \label{sec:oracle_probadist}

When using CO oracles within ML pipelines, the first challenge we face is the \emph{lack of useful derivatives}.
Training often relies on stochastic gradient descent (SGD), so we need to be able to backpropagate loss gradients onto the weights of the ML layers.
Unfortunately, since the feasible set~$\calV$ of Equation~\eqref{eq:co_oracle} is finite, the CO oracle is a piecewise constant mapping and its derivatives are zero almost everywhere.
To recover useful slope information, we seek \emph{approximate derivatives}, which is where the probabilistic approach comes into play.

To describe it, we no longer think about a CO oracle as a function returning a single element~$f(\theta)$ from~$\calV$.
Instead, we use it to define a probability distribution~$p(\cdot|\theta)$ on~$\calV$.
The naive choice would be the Dirac mass~$p(v|\theta) = \delta_{f(\theta)}(v)$, but it shares the lack of differentiability of the oracle itself.
Thus, our goal is to spread out the distribution~$p$ into an approximation~$\widehat{p}$, such that the probability mapping~$\theta \longmapsto \widehat{p}(\cdot | \theta)$ becomes smooth with respect to~$\theta$.
If we can do that, then the expectation mapping
\begin{equation} \label{eq:probabilistic_co_layer}
    \widehat{f}: \theta \longmapsto \bbE_{\widehat{p}(\cdot|\theta)}[V] = \sum_{v \in \calV} v \widehat{p}(v | \theta),
\end{equation}
where it is understood that~$V \sim \widehat{p}(\cdot|\theta)$, will be just as smooth.
This expectation mapping~$\widehat{f}$ is what we take to be our \emph{probabilistic CO layer}: see Section~\ref{sec:probabilistic} for detailed examples.
In what follows, plain letters ($p,f$) always refer to the initial CO oracle, while letters with a hat ($\widehat{p},\widehat{f}$) refer to the probabilistic CO layer that we wrap around it.

\subsubsection{From a probability distribution to a loss function}

The presence of CO oracles in ML pipelines gives rise to a second challenge: the choice of an appropriate loss function to learn the parameters.
As highlighted by \textcite{bengioMachineLearningCombinatorial2021}, this choice heavily depends on the data at our disposal.
They distinguish two main paradigms, which we illustrate using the pipeline of Figure~\ref{fig:warcraft_pipeline}.

If our dataset only contains the map images, then we are in a weakly supervised setting, which they call \emph{learning by experience} (see Section~\ref{sec:experience}).
In that case, the loss function will evaluate the solutions computed by our pipeline using the true cell costs.
On the other hand, if our dataset happens to contain precomputed targets such as the true shortest paths, then we are in a fully supervised setting, which they call \emph{learning by imitation} (see Section~\ref{sec:imitation}).
In that case, the loss function will compare the paths computed by our pipeline with the optimal ones, hoping to minimize the discrepancy.
For both of these cases, the probabilistic perspective plays an important role in ensuring smoothness of the loss.

\subsubsection{Complete pipeline}

The typical pipeline we will focus on is a special case of Equation~\eqref{eq:hybrid_pipeline_0}, which we now describe in more detail:
\begin{equation} \label{eq:hybrid_pipeline}
    \xrightarrow[]{\text{Input~$x$}}
    \ovalbox{\begin{Bcenter}ML layer~$\varphi_w$ \end{Bcenter}}
    \xrightarrow[]{\text{Objective~$\theta = \varphi_w(x)$}}
    \doublebox{\begin{Bcenter} CO oracle~$f$ \end{Bcenter}}
    \xrightarrow[]{\text{Solution~$y = f(\theta)$}}
    \ovalbox{Loss function~$\calL$}
\end{equation}
Our pipeline starts with an ML layer~$\varphi_w$, where~$w$ stands for the vector of weights.
Its role is to encode the input~$x$ into an objective direction~$\theta = \varphi_w(x)$, which is why we often refer to it as the \emph{encoder}.
Then, the CO oracle defined in Equation~\eqref{eq:co_oracle} returns an optimal solution~$y = f(\theta)$.
Finally, the loss function~$\calL$ is used during training to evaluate the quality of the solution.
If our dataset contains~$N$ input samples~$x^{(1)}, ..., x^{(N)}$, training occurs by applying SGD to the following loss minimization problem:
\begin{equation} \label{eq:loss_minimization}
    \min_w \frac{1}{N} \sum_{i=1}^N \calL\Big(\overbrace{f\big(\underbrace{\varphi_w(x^{(i)})}_{\theta^{(i)}}\big)}^{y^{(i)}}, \dots\Big).
\end{equation}
The dots~$\dots$ correspond to additional arguments that may be used by the loss.
For instance, the loss may depend on the input~$x^{(i)}$ itself, or require targets ~$\bar{t}^{(i)}$ for comparison (see Section~\ref{sec:imitation}).

\begin{remark}
    Our work focuses on individual layers and loss functions.
    Although we present various concrete examples, we do not give generic advice on how to build the whole pipeline for a specific application.
    If the use case corresponds to a \enquote{predict, then optimize} setting such as the one from Figure~\ref{fig:warcraft_pipeline}, then \textcite{elmachtoubSmartPredictThen2022} give a few useful pointers.
    If the goal is to approximate hard optimization problems with easier ones, the reader can refer to \textcite{parmentierLearningStructuredApproximations2021} for a general methodology.
\end{remark}

\subsection{Contributions}

Our foremost contribution is the open-source package \texttt{InferOpt.jl}, which is written in the Julia programming language \parencite{bezansonJuliaFreshApproach2017}.
Given a CO oracle provided as a callable object, our package wraps it into a probabilistic CO layer that is compatible with Julia's AD and ML ecosystem.
This is achieved thanks to the \texttt{ChainRules.jl}\footnote{\url{https://github.com/JuliaDiff/ChainRules.jl}} interface \parencite{framescatherinewhiteJuliaDiffChainRulesJl2022}.
Moreover, \texttt{InferOpt.jl} defines several structured loss functions, both for learning by experience and for learning by imitation.

\medskip

On top of that, we present theoretical insights that fill some gaps in previous works.
In addition to the framework of probabilistic CO layers, we propose:
\begin{itemize}
    \item A new perturbation technique designed for CO oracles that only accept objective vectors with a certain sign (such as Dijkstra's algorithm, which fails on graphs with negative edge costs): see Section~\ref{sec:multiplicative}.
    \item A way to differentiate through a large subclass of probabilistic CO layers (those that rely on convex regularization) by combining the Frank-Wolfe algorithm with implicit differentiation: see Section~\ref{sec:frank_wolfe}.
    \item A probabilistic regularization of the regret for learning by experience: see Section~\ref{sec:experience}.
    \item A generic decomposition framework for imitation losses, which subsumes most of the literature so far and suggests ways to build new loss functions: see Section~\ref{sec:loss_decomp}.
\end{itemize}

Finally, we describe numerical experiments on our motivating example of Warcraft shortest paths, as well as three combinatorial optimization problems from operations research: the stochastic vehicle scheduling problem, the single-machine scheduling problem, and the two-stage stochastic minimum weight spanning tree problem.
Here are a few highlights:
\begin{itemize}
    \item We benchmark and show the strengths of the different learning methods proposed in the literature on hybrid ML-CO pipelines.
    \item We use the pipeline of Figure~\ref{fig:warcraft_pipeline} for learning by experience on the Warcraft shortest paths problem, even though the CNN encoder has tens of thousands of parameters.
          To the best of our knowledge, previous attempts to learn such pipelines by experience were restricted to ML layers with fewer than 100 parameters.
    \item We tackle the stochastic vehicle scheduling problem efficiently by approximating it with its deterministic counterpart.
    \item We obtain a fast heuristic with state-of-the-art performance for the single machine scheduling problem, which has been a focus of the by the scheduling community for several decades.
    \item We show that our library enables the use of graph neural networks (GNNs) instead of generalized linear models (GLMs) for instance encoding, bringing additional performance on the two-stage spanning tree problem.
\end{itemize}

\subsection{Notations}

We write~$\onevector$ for the vector with all components equal to~$1$, and~$\rme_i$ for the basis vector corresponding to dimension~$i$.
The notation~$\oneindicator{\{E\}}$ corresponds to the indicator function of the set (or event)~$E$.
The operator~$\odot$ denotes the Hadamard (componentwise) product between vectors of the same size.
We use~$\Delta^d$ to refer to the unit simplex of dimension~$d$, and~$\bbE_p$ to denote an expectation with respect to the distribution~$p$.
If~$\mathcal{S}$ is a set, we write
\begin{equation} \label{eq:conv}
    \conv(\mathcal{S}) = \left\{\sum_i p_i s_i: s_i \in \mathcal{S}, p_i \geq 0, \sum_i p_i = 1\right\} = \{\bbE_p[S]: p \in \Delta^\mathcal{S}\}
\end{equation}
for its convex hull and~$\proj_{\mathcal{S}}$ for the orthogonal projection onto~$\mathcal{S}$.
If~$h$ is a real-valued function, we denote by~$\nabla_a h(x)$ the gradient of~$h$ with respect to parameter~$a$ at point~$x$ and by~$\partial_a h(x)$ its convex subdifferential (set of subgradients).
The notation~$\dom(h)$ stands for the domain of~$h$, \textit{i.e.} the set on which it takes finite values.
If~$h$ is a vector-valued function, we denote by~$J_a h(x)$ its Jacobian matrix.

\subsection{Outline}

In Section~\ref{sec:related_works}, we review the literature on differentiable optimization layers, before focusing on the Warcraft example.
Section~\ref{sec:probabilistic} introduces the family of probabilistic CO layers by splitting it into perturbed and regularized approaches.
Then, Section~\ref{sec:experience} gives tools for learning by experience, while Section~\ref{sec:imitation} discusses loss functions for learning by imitation.
Practical applications of our package are presented in Section~\ref{sec:applications}, before we conclude in Section~\ref{sec:conclusion}.
Proofs for our main theoretical results can be found in Appendix~\ref{sec:proofs}.

\section{Related work} \label{sec:related_works}

\subsection{Optimization layers in ML}

A significant part of modern ML relies on AD: see \textcite{baydinAutomaticDifferentiationMachine2018} for an overview and \textcite{griewankEvaluatingDerivativesPrinciples2008} for an in-depth treatment.
In particular, AD forms the basis of the backpropagation algorithm used to train neural networks.

\subsubsection{The notion of implicit layer}

Standard neural architectures draw from a small collection of \emph{explicit} layers \parencite{goodfellowDeepLearning2016}.
Whatever their connection structure (dense, convolutional, recurrent, \textit{etc.}) and regardless of their activation function, these layers all correspond to input-output mappings that can be expressed using an analytic formula.
This same formula is then used by AD to compute gradients.

On the other hand, the layers defined by \texttt{InferOpt.jl} are of the \emph{implicit} kind, which means they can contain arbitrarily complex iterative procedures.
While we focus here on optimization algorithms, those are not the only kind of implicit layers: fixed point iterations and differential equation solvers are also widely used, depending on the application at hand.
See the tutorial by \textcite{kolterDeepImplicitLayers2020} for more thorough explanations.

Due to the high computational cost of unrolling iterative procedures, efficient AD of implicit layers often relies on the implicit function theorem.
As long as we can specify a set of conditions satisfied by the input-output pair, this theorem equates differentiation with solving a linear system of equations.
See the Python package \texttt{jaxopt}\footnote{\url{https://github.com/google/jaxopt}} for an example implementation, and its companion paper for theoretical details \parencite{blondelEfficientModularImplicit2022}.
The recent Python package \texttt{theseus}\footnote{\url{https://github.com/facebookresearch/theseus}} showcases an application of this technique to robotics and vision \parencite{pinedaTheseusLibraryDifferentiable2022}.

\subsubsection{Convex optimization layers} \label{sec:convex_layers}

Among the early works on optimization layers for deep learning, the seminal OptNet paper by \textcite{amosOptNetDifferentiableOptimization2017} stands out.
It describes a way to differentiate through quadratic programs (QPs) by using the Karush-Kuhn-Tucker (KKT) optimality conditions and plugging them into the implicit function theorem.

More sophisticated tools exist for disciplined conic programs, such as the Python package \texttt{cvxpylayers}\footnote{\url{https://github.com/cvxgrp/cvxpylayers}} \parencite{agrawalDifferentiableConvexOptimization2019}.
The recent Julia package \texttt{DiffOpt.jl}\footnote{\url{https://github.com/jump-dev/DiffOpt.jl}} \parencite{sharmaFlexibleDifferentiableOptimization2022} extends these ideas beyond the conic case to general convex programs.
Note that both libraries only accept optimization problems formulated in a domain-specific modeling language, as opposed to arbitrary oracles.

Strong convexity makes differentiation easier because the solutions evolve smoothly as a function of the constraints and objective parameters.
In particular, this means the methods listed above return exact derivatives and do not rely on approximations.
Regrettably, this nice behavior falls apart as soon as we enter the combinatorial world.

\subsubsection{Linear optimization layers} \label{sec:linear_layers}

Let us consider an LP whose feasible set is a bounded polyhedron, also called \emph{polytope}.
It is well-known that for most objective directions, the optimal solution will be unique and located at a vertex of the polytope.
Even though LPs look like continuous optimization problems, this property shows that they are fundamentally combinatorial.
Indeed, a small change in the objective direction can cause the optimal solution to suddenly jump to another vertex, which results in a discontinuous mapping from objectives to solutions.
In fact, this mapping is piecewise constant, which means no useful differential information can come from it: its Jacobian is undefined at the jump points and zero everywhere else.

Therefore, when differentiating LPs with respect to their objective parameters, we need to resort to approximations.
\textcite{vlastelicaDifferentiationBlackboxCombinatorial2020} use interpolation to turn a piecewise constant mapping into a piecewise linear and continuous one.
However, the dominant approximation paradigm in the literature is regularization, as formalized by \textcite{blondelLearningFenchelYoungLosses2020}.

For instance, \textcite{wilderMeldingDataDecisionsPipeline2019} add a quadratic penalty to the linear objective, which allows them to reuse the QP computations of \textcite{amosOptNetDifferentiableOptimization2017}.
\textcite{mandiInteriorPointSolving2020} propose a log-barrier penalty, which lets them draw a connection with interior-point methods.
\textcite{berthetLearningDifferentiablePerturbed2020} suggest perturbing the optimization problem by adding stochastic noise to the objective direction, which is a form of implicit regularization.

When LP layers are located at the end of a pipeline, a clever choice of loss function can also simplify differentiation.
This is illustrated by the structured support vector machine (S-SVM) loss \parencite{nowozinStructuredLearningPrediction2010}, smart \enquote{predict, then optimize} (SPO+) loss \parencite{elmachtoubSmartPredictThen2022} and Fenchel-Young (FY) loss \parencite{blondelLearningFenchelYoungLosses2020}.

\subsubsection{Integer and combinatorial optimization layers}

In theory, the methods from the previous section still work in the presence of integer variables, that is, for ILPs.
To apply them, we only need to consider the polytope defined by the convex hull of integral solutions.
Alas, in the general case, there is no concise way to describe this convex hull.
This is why many authors decide to differentiate through the continuous relaxation of the ILP instead \parencite{mandiSmartPredictandOptimizeHard2020}: it is an outer approximation of the integral polytope, but it can be sufficient for learning purposes.
Some suggest taking advantage of techniques specific to integer programming, such as integrality cuts \parencite{ferberMIPaaLMixedInteger2020} or a generalization of the notion of active constraint \parencite{paulusCombOptNetFitRight2021}.

There has also been significant progress on finding gradient approximations for combinatorial problems such as ranking \parencite{blondelFastDifferentiableSorting2020} and shortest paths \parencite{parmentierLearningApproximateIndustrial2021}.
Yet these techniques are problem-specific, and therefore hard to generalize, which is why we leave them aside.

Instead, we want to allow implicit manipulation of the integral polytope itself, without making assumptions on its structure.
To achieve that, we can only afford to invoke CO oracles as black boxes \parencite{vlastelicaDifferentiationBlackboxCombinatorial2020, berthetLearningDifferentiablePerturbed2020}.
Compatibility with arbitrary algorithms is one of the fundamental tenets of \texttt{InferOpt.jl}.
The recent Python package \texttt{PyEPO}\footnote{\url{https://github.com/khalil-research/PyEPO}} \parencite{tangPyEPOPyTorchbasedEndtoEnd2022} shares our generic perspective, but it only implements a subset of \texttt{InferOpt.jl} and does not address the notion of probabilistic CO layer.

This probabilistic aspect is central to our proposal, and it is mostly inspired by the works of \textcite{blondelLearningFenchelYoungLosses2020} and \textcite{berthetLearningDifferentiablePerturbed2020}.
Another related approach is the Implicit Maximum Likelihood Estimation of \textcite{niepertImplicitMLEBackpropagating2021}, whose probabilistic formulation is slightly different and based on exponential families.

Finally, note that CO layers can be computationally heavy due to the frequent need to recompute optimal solutions at each epoch.
\textcite{mulambaContrastiveLossesSolution2021} address this issue by proposing a noise contrastive approach with solution caching, while \textcite{mandiDecisionFocusedLearningLens2022} build upon this method with a focus on learning to rank.

\subsection{Similarities and differences with reinforcement learning}

As we will see in Section~\ref{sec:experience}, learning by experience can be reminiscent of reinforcement learning (RL), which also relies on a reward or cost signal given by the environment \parencite{suttonReinforcementLearningIntroduction2018}.
Furthermore, the encoder layer of Equation~\eqref{eq:hybrid_pipeline} is similar to a parametric approximation of the value function, which forms the basis of deep RL approaches.
This prompts us to discuss a few differences between RL and the framework we study.

The standard mathematical formulation of RL is based on Markov decision processes (MDPs), where a reward and state transition are associated with each action.
Usually, the available actions are elementary decisions: pull one lever of a multi-armed bandit, cross one edge on a graph, select one move in a board game, etc.
The resulting value or policy update is local: it is specific to both the current state and the action taken.
The more reward information we gather, the more efficient learning becomes.

In our framework, the basic step is a call to the optimizer.
But combinatorial algorithms can go beyond simple actions: they often output a structured and high-dimensional solution, which aggregates many elementary decisions.
This in turn triggers a global update in our knowledge of the system, whereby the final reward is redistributed between all elementary decisions.
In a way, \emph{backpropagation through the optimizer enables efficient credit assignment}, even for sparse reward signals.

Another difference is related to the Bellman fixed point equation.
In an RL setting, the Bellman equation is used explicitly to derive parameter updates.
In our setting, the Bellman equation is used implicitly within optimizers such as Dijkstra's algorithm.

To conclude, while standard RL decomposes a policy into elementary decisions, the pipelines we study here are able to look directly for complex multistep solutions.
Note that a similar concept of \emph{option} exists in hierarchical RL \parencite{bartoRecentAdvancesHierarchical2003}: comparing both perspectives in detail would no doubt be fruitful, and we leave it for future work.

\subsection{Our guiding example: shortest paths on Warcraft maps} \label{sec:warcraft}

As a way to clarify the concepts introduced in this paper, we illustrate them on the problem of Warcraft shortest paths \parencite{vlastelicaDifferentiationBlackboxCombinatorial2020, berthetLearningDifferentiablePerturbed2020}, which was already introduced on Figure~\ref{fig:warcraft_pipeline}.
The associated dataset\footnote{\url{https://edmond.mpdl.mpg.de/dataset.xhtml?persistentId=doi:10.17617/3.YJCQ5S}}, assembled by \textcite{vlastelicaDifferentiationBlackboxCombinatorial2020}, contains randomly-generated maps similar to those from the Warcraft II video game.
Each of these maps is a red-green-blue (RGB) image of size~$ks \times ks$ containing~$k \times k$ square cells of side length~$s$.
Every cell has its own terrain type (grass, forest, water, earth, \textit{etc.}) which incurs a specific cost when game characters cross it.

The goal is to find the shortest path from the top left corner of the map to the bottom right corner.
At prediction time, cell costs are unknown, which means they must be approximated from the image alone.
Solving the problem of Warcraft shortest paths thus requires adapting the pipeline of Equation~\eqref{eq:hybrid_pipeline} as follows:
\begin{equation} \label{eq:pipeline_warcraft}
    \xrightarrow[x \in \bbR^{ks \times ks \times 3}]{\text{Map image}}
    \ovalbox{\begin{Bcenter} Conv. Neural \\ Network~$\varphi_w$ \end{Bcenter}}
    \xrightarrow[\theta \in \bbR^{k \times k}]{\text{Negative cell costs}}
    \doublebox{\begin{Bcenter} Dijkstra's \\ algorithm \\~$\argmax_{v \in \calP_k} \theta^\top v$ \end{Bcenter}}
    \xrightarrow[y \in \calP_k]{\text{Shortest path}}
    \ovalbox{\begin{Bcenter} Loss \\ function~$\calL$ \end{Bcenter}}
\end{equation}
where we have defined the set of feasible paths
\begin{equation*}
    \calP_k = \{v \in \{0, 1\}^{k \times k}: ~ \text{$v$ represents a path from~$(1, 1)$ to~$(k, k)$}\}.
\end{equation*}
Training proceeds based on the map images and (possibly) the true shortest paths or cell costs provided in the dataset.

As suggested by \textcite{vlastelicaDifferentiationBlackboxCombinatorial2020}, we design the CNN based on the first few layers of a ResNet18 \parencite{heDeepResidualLearning2016}.
We also append a negative softplus activation, in order to make sure that all outputs are negative.
The sign constraint is there to ensure that Dijkstra's algorithm will terminate, but it is not obvious why we require negative costs instead of positive ones.
The reason behind this sign switch is that Dijkstra's algorithm is a minimization oracle, whereas by convention \texttt{InferOpt.jl} works with maximization oracles.

\medskip

We now show how to implement this pipeline in Julia.
Throughout the paper, in addition to \texttt{InferOpt.jl}, the following packages are used: \texttt{Flux.jl}\footnote{\url{https://github.com/FluxML/Flux.jl}} \parencite{innesFashionableModellingFlux2018,innesFluxElegantMachine2018}, \texttt{Graphs.jl}\footnote{\url{https://github.com/JuliaGraphs/Graphs.jl}} \parencite{fairbanksJuliaGraphsGraphsJl2021}, \texttt{GridGraphs.jl}\footnote{\url{https://github.com/gdalle/GridGraphs.jl}}, \texttt{Metalhead.jl}\footnote{\url{https://github.com/FluxML/Metalhead.jl}} and \texttt{Zygote.jl}\footnote{\url{https://github.com/FluxML/Zygote.jl}} \parencite{innesDonUnrollAdjoint2019}, along with the base libraries \texttt{LinearAlgebra}  and \texttt{Statistics}.
Code sample~\ref{code:warcraft_encoder} creates the CNN encoder layer.
Meanwhile, Code sample~\ref{code:warcraft_maximizer} shows how to define the Dijkstra oracle and the true cost function.

\begin{figure}
    \begin{minipage}{0.48\textwidth}
        \lstinputlisting[
            caption={CNN encoder for Warcraft},
            label={code:warcraft_encoder},
        ]{code/warcraft_encoder.jl}
    \end{minipage}
    \hfill
    \begin{minipage}{0.48\textwidth}
        \lstinputlisting[
            caption={Dijkstra optimizer for Warcraft},
            label={code:warcraft_maximizer},
        ]{code/warcraft_maximizer.jl}
    \end{minipage}
\end{figure}

Finally, Code sample~\ref{code:warcraft_pipeline} demonstrates the full prediction and optimization pipeline.
It assumes that we have already parsed the Warcraft dataset into three vectors:
\begin{itemize}
    \item \inline{images}, whose elements are three-dimensional arrays representing map images;
    \item \inline{cells}, whose elements are floating-point matrices representing true cell costs;
    \item \inline{paths}, whose elements are binary matrices representing true shortest paths.
\end{itemize}

\begin{figure}
    \centering
    \begin{minipage}{0.8\linewidth}
        \lstinputlisting[
            caption={Full pipeline for Warcraft shortest paths},
            label={code:warcraft_pipeline}
        ]{code/warcraft_pipeline.jl}
    \end{minipage}
\end{figure}

These functions do not rely on \texttt{InferOpt.jl}, but they will be used throughout the paper inside the differentiable wrappers provided by the package.
Note that some snippets shown here have been shortened for clarity.
Please refer to the documentation of \texttt{InferOpt.jl} and satellite packages for actual runnable examples.

\section{Probabilistic CO layers} \label{sec:probabilistic}

In this section, we focus on the CO oracle~$f$ defined in Equation~\eqref{eq:co_oracle}, which is piecewise constant.
By adopting a probabilistic point of view, we construct several smooth approximations~$\widehat{f}$, which can be computed and differentiated based solely on calls to~$f$.

\subsection{The expectation of a differentiable probability distribution} \label{sec:proba_dist}

As announced in Section~\ref{sec:oracle_probadist}, a probabilistic CO layer works in two steps.
First, it constructs a probability distribution~$\widehat{p}(\cdot | \theta) \in \Delta^\calV$.
Second, it returns the expectation~$\widehat{f}(\theta) = \bbE_{\widehat{p}(\cdot | \theta)}[V] \in \conv(\calV)$.
Figure~\ref{fig:polytope} illustrates this behavior on a two-dimensional polytope, with a maximization problem defined by the vector~$\theta$ (black arrow).
While the CO oracle outputs a single optimal vertex (red square), the probabilistic CO layer defines a distribution on all the vertices (light blue circles).
Its output (dark blue hexagon) is a convex combination of the vertices with nonzero weights, which belongs to the convex hull of~$\calV$ (gray surface).
\begin{figure}
    \centering
    \includegraphics[width=0.8\linewidth,trim={80 55 0 5},clip]{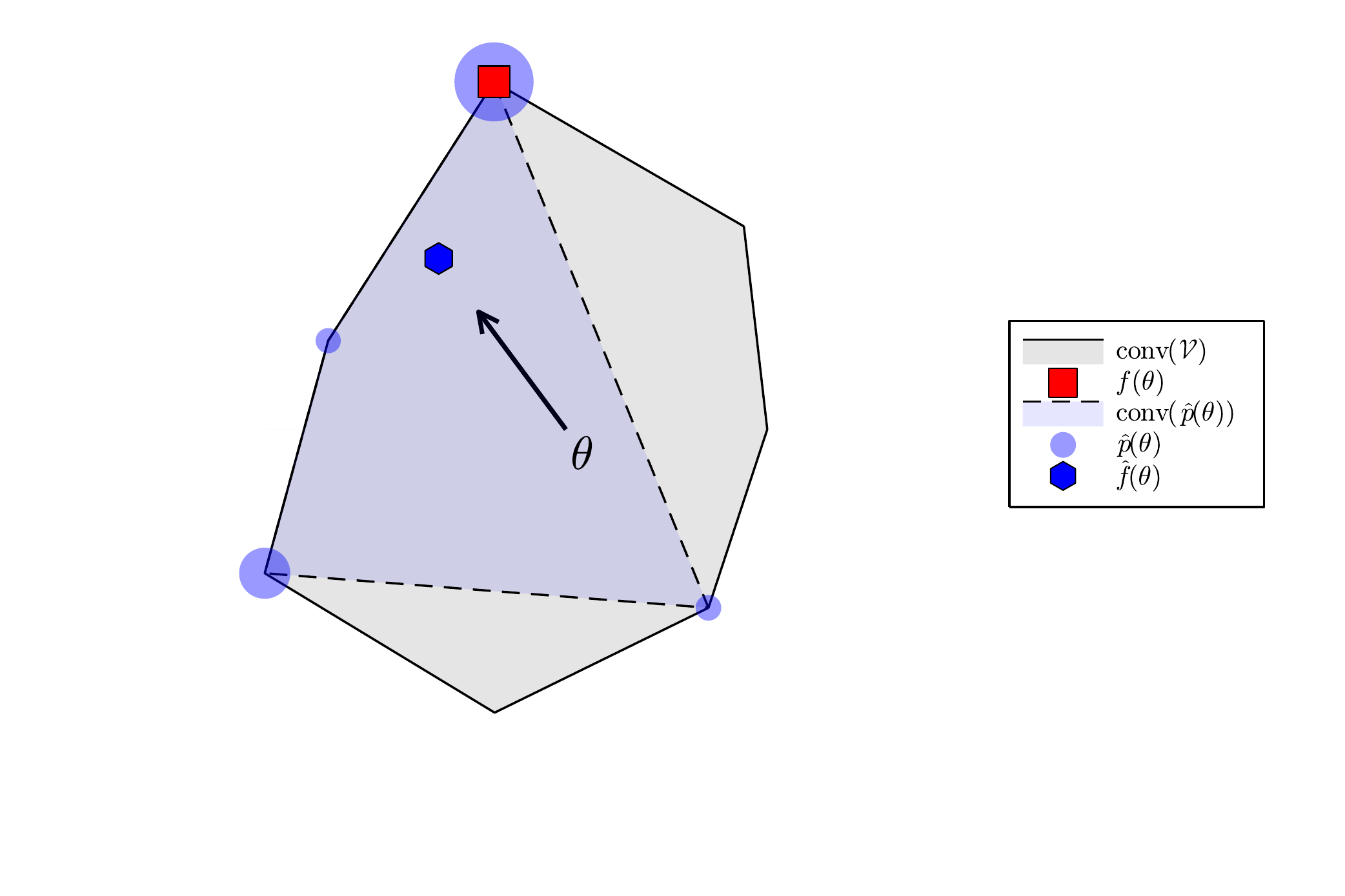}
    \caption{Effect of a probabilistic CO layer}
    \label{fig:polytope}
\end{figure}

For such a layer to be useful in our setting, we impose several conditions.
First, all computations must only require calls to the CO oracle~$f$.
Second, the expectation~$\bbE_{\widehat{p}(\cdot | \theta)}[V]$ must be tractable (whether it is with an explicit formula, Monte-Carlo sampling, variational inference, \textit{etc.}).
Third, the mapping~$\theta \longmapsto \widehat{p}(\cdot | \theta)$ must be differentiable.
If the last condition is satisfied, then the Jacobian of~$\widehat{f}$ is easily deduced from Equation~\eqref{eq:probabilistic_co_layer}:
\begin{equation} \label{eq:probabilistic_co_layer_jacobian}
    J_\theta \widehat{f}(\theta) = J_\theta \bbE_{\widehat{p}(\cdot | \theta)}[V] = \sum_{v \in \calV} v \nabla_\theta \widehat{p}(v | \theta)^\top
\end{equation}
Let us give an example where analytic formulas exist.
If~$\calV = \{\rme_1, ..., \rme_d\}$ is the set of basis vectors, then its convex hull~$\conv(\calV) = \Delta^d$ is the unit simplex of dimension~$d$.
Given an objective direction~$\theta$, solving~$\argmax_{v \in \calV} \theta^\top v$ yields the basis vector~$f(\theta) = \rme_i$ where~$i$ is the index maximizing~$\theta_i$.

We want a probability distribution that evolves smoothly with~$\theta$, so we need to spread out the naive Dirac mass~$\delta_{f(\theta)}(\cdot)$ by putting weight on several vertices instead of just one.
Let us assign to each vertex a probability that depends on its level of optimality in the optimization problem~$\argmax_{v \in \calV} \theta^\top v$, that is, on its inner product with~$\theta$.
The Boltzmann distribution is a natural candidate, leading to~$\widehat{p}(\rme_i | \theta) \propto e^{\theta^\top \rme_i} = e^{\theta_i}$.
Computing the expectation reveals a well-known operation:
\begin{equation*}
    \widehat{f}(\theta) = \bbE_{\widehat{p}(\cdot | \theta)}[V] = \sum_{i=1}^{d} \frac{e^{\theta_i}}{\sum_{j=1}^{d} e^{\theta_j}} \rme_i = \softmax(\theta)
\end{equation*}
Unlike the \enquote{hardmax} function~$f$, the softmax function~$\widehat{f}$ is differentiable, which justifies its frequent use as an activation function in classification tasks.

While the Boltzmann distribution can be used for specific sets~$\calV$, in the general case, it has an intractable normalizing constant.
This means we would need Markov chain Monte-Carlo (MCMC) methods to compute derivatives, which undermines the simplicity we are looking for.
Fortunately, Sections~\ref{sec:additive} and~\ref{sec:multiplicative} present other probability distributions which can be easily approximated through sampling.

\subsection{Regularization as another way to define a distribution} \label{sec:regularization}

Although this probabilistic point of view was recently put forward by \textcite{berthetLearningDifferentiablePerturbed2020}, the most popular paradigm in the literature remains regularization \parencite{blondelLearningFenchelYoungLosses2020}.
Instead of using the CO oracle~\eqref{eq:co_oracle}, regularization solves a different problem:
\begin{equation} \label{eq:regularized_co_layer}
    \widehat{f}_\Omega: \theta \longmapsto \argmax_{\mu \in \dom(\Omega)} \theta^\top \mu - \Omega(\mu)
\end{equation}
where~$\Omega: \bbR^d \to \bbR$ is a smooth and convex function that penalizes the output~$\mu$.
Usually,~$\Omega$ is chosen to enforce~$\Omega(\mu) = +\infty$ whenever~$\mu \notin \conv(\calV)$, which means~$\dom(\Omega) \subseteq \conv(\calV)$.
Remember that since~$\calV$ is finite,~$\conv(\calV)$ is a polytope whose vertices form a subset of~$\calV$.

The change of notation from~$v$ (vertex) to~$\mu$ (moment) stresses the fact that~$v$ is an element of~$\calV$, while~$\mu$ belongs to~$\dom(\Omega) \subseteq \conv(\calV)$.
By Equation~\eqref{eq:conv}, any feasible~$\mu$ is the expectation of some distribution over~$\calV$, hence our choice of letter.
This means we can write~$\widehat{f}_\Omega(\theta)$ as a convex combination of the elements of~$\calV$, whose weights are then interpreted as probabilities~$\widehat{p}_\Omega(\cdot|\theta)$.
In other words, the two perspectives are not opposed: regularization is just another way to define a probability distribution.

\medskip

For instance, if we go back to the concrete example from Section~\ref{sec:proba_dist} and select~$\Omega(\mu) = \sum_i \mu_i \log \mu_i$ (the negative Shannon entropy), we find once again that~$\widehat{f}_\Omega(\theta) = \softmax(\theta)$.
But the case of the unit simplex is very peculiar, because for any~$\mu \in \conv(\calV)$, the convex decomposition of~$\mu$ onto the vertices~$\calV$ is unique.
In other words, the correspondence between regularizations~$\Omega$ and probability mappings~$\widehat{p}_\Omega$ is one-to-one.

This does not hold for arbitrary polytopes.
As a result, we need to be more specific in how we choose the convex decomposition.
In particular, we need the weights to be differentiable, in order to compute the Jacobian with Equation~\eqref{eq:probabilistic_co_layer_jacobian}.
Section~\ref{sec:frank_wolfe} describes one possible approach, which relies on the Frank-Wolfe algorithm and implicit differentiation.

Conversely, the probability distributions given in Sections~\ref{sec:additive} and~\ref{sec:multiplicative} also give rise to an implicit regularization, which can be expressed using Fenchel conjugates.
This point of view is especially useful when we want to combine these layers with Fenchel-Young losses (Section~\ref{sec:fyl}).
In essence, we claim that \emph{probabilistic CO layers and regularization are two sides of the same coin}.

\subsection{Collection of probabilistic CO layers}

Our package implements various flavors of probabilistic CO layers, which are summed up in Table~\ref{tab:layers}.
Code sample~\ref{code:layers_operations} displays the operations they all support.

\begin{remark}
    We also implement an \inline{Interpolation} layer, corresponding to the piecewise linear interpolation of \textcite{vlastelicaDifferentiationBlackboxCombinatorial2020}.
    However, to the best of our knowledge it cannot be cast as a probabilistic CO layer, so it does not support as many operations, and we only mention it for benchmarking purposes.
\end{remark}

We now present each row of Table~\ref{tab:layers} in more detail.
The main goal of Sections~\ref{sec:additive},~\ref{sec:multiplicative} and~\ref{sec:frank_wolfe} is to explain how~$\widehat{p}(\cdot | \theta)$ and~$\widehat{f}(\theta)$ are computed, as well as their derivatives.
We also draw connections with the regularization paradigm, which will ease the introduction of Fenchel-Young losses in Section~\ref{sec:fyl}.
A hasty reader can safely skip to Section~\ref{sec:experience}.

\begin{table}[ht]
    \centering
    \begin{adjustbox}{width=\linewidth}
        \begin{tabular}{@{}cccc@{}}
            \toprule
            \textbf{Layer}                   & \textbf{Notations}                                                & \textbf{Probability}~$\widehat{p}(\cdot|\theta)$                              & \textbf{Regularization}     \\ \midrule
            \texttt{PerturbedAdditive}       & ~$\widehat{p}_\varepsilon^+$,~$\widehat{f}_\varepsilon^+$         & Explicit:~$f(\theta + \varepsilon Z)$                                         & Implicit: Fenchel conjugate \\
            \texttt{PerturbedMultiplicative} & ~$\widehat{p}_\varepsilon^\odot$,~$\widehat{f}_\varepsilon^\odot$ & Explicit: ~$f(\theta \odot e^{\varepsilon Z - \varepsilon^2 \onevector / 2})$ & Implicit: Fenchel conjugate \\
            \texttt{RegularizedGeneric}      & ~$\widehat{p}_\Omega^\FW$,~$\widehat{f}_\Omega^\FW$               & Implicit: Frank-Wolfe weights                                                 & Explicit: function~$\Omega$ \\ \bottomrule
        \end{tabular}
    \end{adjustbox}
    \caption{Probabilistic CO layers and their defining features}
    \label{tab:layers}
\end{table}

\begin{figure}
    \centering
    \begin{minipage}{0.8\linewidth}
        \lstinputlisting[
            caption={Supported operations for a probabilistic CO layer},
            label={code:layers_operations}
        ]{code/layers_operations.jl}
    \end{minipage}
\end{figure}

\subsubsection{Additive perturbation} \label{sec:additive}

A natural way to define a distribution on~$\calV$ is to solve~\eqref{eq:co_oracle} with a stochastic perturbation of the objective direction~$\theta$.
\textcite{berthetLearningDifferentiablePerturbed2020} suggest the following additive perturbation mechanism:
\begin{equation} \label{eq:perturbed_lp}
    \widehat{f}_\varepsilon^+(\theta) = \bbE \left[\argmax_{v \in \calV} (\theta + \varepsilon Z)^\top v \right] = \bbE \left[f(\theta + \varepsilon Z)\right]
\end{equation}
where~$\varepsilon > 0$ controls the amplitude of the perturbation,~$Z \sim \calN(0, I)$ is a standard Gaussian vector and the expectation is taken with respect to~$Z$ unless otherwise specified.
Choosing~$\varepsilon$ is a trade-off between smoothness (large~$\varepsilon$) and accuracy of the approximation (small~$\varepsilon$).
The associated probability distribution on~$\calV$ can be described explicitly:
\begin{equation}
    \widehat{f}_\varepsilon^+(\theta) = \sum_{v \in \calV} v \widehat{p}_\varepsilon^+(v | \theta) \qquad \text{with} \qquad \widehat{p}_\varepsilon^+(v|\theta) = \bbP \left(f(\theta + \varepsilon Z) = v\right).
\end{equation}
Meanwhile, Proposition~\ref{prop:additive_perturbation} allows us to compute differentials.
Although the expectations cannot be expressed in closed form, they can be estimated using~$M$ Monte-Carlo samples~$Z_1,...,Z_M \sim \mathcal{N}(0, I)$.
Increasing~$M$ yields smoother approximations but makes the complexity grow linearly.

\begin{proposition}[Differentiating through an additive perturbation \parencite{berthetLearningDifferentiablePerturbed2020}] \label{prop:additive_perturbation}
    We have:
    \begin{align*}
        \nabla_\theta \widehat{p}_\varepsilon^+(v|\theta)
         & = \frac{1}{\varepsilon} \bbE \left[\oneindicator{\{f(\theta + \varepsilon Z) = v\}} Z\right] \\
        J_\theta \widehat{f}_\varepsilon^+(\theta)
         & = \frac{1}{\varepsilon} \bbE \left[f(\theta + \varepsilon Z) Z^\top\right]
    \end{align*}
\end{proposition}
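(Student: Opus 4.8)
The plan is to treat $\widehat{p}_\varepsilon^+(v\mid\theta)$ as a Gaussian-smoothed indicator and to push the $\theta$-derivative onto the Gaussian kernel via the score (Stein) identity. Let $\nu$ denote the density of $\calN(0,I)$ on $\bbR^d$, so that
\[
    \widehat{p}_\varepsilon^+(v\mid\theta)
    = \bbE\big[\oneindicator{\{f(\theta + \varepsilon Z) = v\}}\big]
    = \int_{\bbR^d} \oneindicator{\{f(\theta + \varepsilon z) = v\}}\, \nu(z)\, \rmd z .
\]
First I would perform the change of variables $u = \theta + \varepsilon z$ (Jacobian factor $\varepsilon^{-d}$), which moves the entire $\theta$-dependence into the smooth kernel:
\[
    \widehat{p}_\varepsilon^+(v\mid\theta)
    = \varepsilon^{-d}\int_{\bbR^d} \oneindicator{\{f(u) = v\}}\, \nu\!\left(\tfrac{u - \theta}{\varepsilon}\right) \rmd u .
\]

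Next I would differentiate under the integral sign. The key elementary fact is the Gaussian log-derivative identity $\nabla \log \nu(z) = -z$, which gives $\nabla_\theta\, \nu\!\big(\tfrac{u-\theta}{\varepsilon}\big) = \tfrac{1}{\varepsilon}\big(\tfrac{u-\theta}{\varepsilon}\big)\,\nu\!\big(\tfrac{u-\theta}{\varepsilon}\big)$. Substituting this and changing variables back to $z = (u-\theta)/\varepsilon$ yields
\[
    \nabla_\theta \widehat{p}_\varepsilon^+(v\mid\theta)
    = \frac{1}{\varepsilon}\int_{\bbR^d}\oneindicator{\{f(\theta+\varepsilon z) = v\}}\, z\, \nu(z)\, \rmd z
    = \frac{1}{\varepsilon}\,\bbE\big[\oneindicator{\{f(\theta+\varepsilon Z) = v\}}\, Z\big],
\]
which is the first claimed formula. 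The second one follows immediately from Equation~\eqref{eq:probabilistic_co_layer_jacobian}: summing $v\,\nabla_\theta \widehat{p}_\varepsilon^+(v\mid\theta)^\top$ over the finite set $v \in \calV$ and pulling the sum inside the expectation gives $J_\theta \widehat{f}_\varepsilon^+(\theta) = \tfrac1\varepsilon\,\bbE\big[\big(\sum_{v \in \calV} v\,\oneindicator{\{f(\theta+\varepsilon Z)=v\}}\big)Z^\top\big] = \tfrac1\varepsilon\,\bbE[f(\theta+\varepsilon Z)\,Z^\top]$.

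The main obstacle is the measure-theoretic bookkeeping needed to make the above rigorous. I would need to justify the Leibniz rule (differentiation under the integral) by exhibiting a local domination bound on $\nabla_\theta\, \nu\!\big(\tfrac{u-\theta}{\varepsilon}\big)$ that is integrable in $u$ uniformly for $\theta$ in a neighbourhood — straightforward since this expression is a polynomial times a Gaussian and a bounded shift of $\theta$ only perturbs it mildly. I would also note that $f$, being an $\argmax$ over a finite set, is well-defined off a Lebesgue-null set: the directions with a non-unique optimizer lie in the finite union of hyperplanes $\{\theta : \theta^\top(v - v') = 0\}$ over pairs $v, v' \in \calV$, so the indicator integrand is defined a.e. and the change of variables is unaffected. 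Integrability of the estimators themselves is a non-issue because $\calV$ is finite, hence $f$ is bounded, and $Z$ has finite moments of all orders. The rest is routine.
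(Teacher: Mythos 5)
Your argument is correct and follows essentially the same route as the paper's proof in Appendix~\ref{sec:proof_additive_perturbation}: the change of variables $u = \theta + \varepsilon z$, differentiation under the integral sign using the Gaussian identity $\nabla\nu(z) = -z\,\nu(z)$, the reverse change of variables, and finally summing $v\,\nabla_\theta\widehat{p}_\varepsilon^+(v\mid\theta)^\top$ over the finite set $\calV$ to recover the Jacobian. Your added remarks on domination and on the null set where the $\argmax$ is non-unique are sound and only make explicit what the paper handles implicitly.
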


\begin{proof}
    See Appendix~\ref{sec:proof_additive_perturbation}.
\end{proof}

In order to recover the regularization associated with~$p_\varepsilon^+$, we leverage convex conjugation.
Let~$F_\varepsilon^+$ be the function defined by
\begin{equation*}
    F_\varepsilon^+(\theta) = \bbE \left[\max_{v \in \calV} (\theta + \varepsilon Z)^\top v \right]
\end{equation*}
and let~$\Omega_\varepsilon^+ = (F_\varepsilon^+)^*$ denote its Fenchel conjugate.

\begin{proposition}[Regularization associated with an additive perturbation \parencite{berthetLearningDifferentiablePerturbed2020}] \label{prop:additive_perturbation_omega}
    The function~$\Omega_\varepsilon^+$ is convex, it satisfies~$\dom(\Omega_\varepsilon^+) \subset \conv(\calV)$ and
    \begin{equation*}
        \widehat{f}_\varepsilon^+(\theta) = \argmax_{\mu \in \conv(\calV)} \theta^\top \mu - \Omega_\varepsilon^+(\mu) = \widehat{f}_{\Omega_\varepsilon^+}(\theta).
    \end{equation*}
\end{proposition}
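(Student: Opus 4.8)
The plan is to identify $\widehat{f}_\varepsilon^+$ with the gradient of the smooth convex potential $F_\varepsilon^+$, and then invoke the classical duality between a closed convex function and its Fenchel conjugate. First I would check that $F_\varepsilon^+$ is finite, convex and differentiable everywhere. For each fixed $z$, the map $\theta \mapsto \max_{v \in \calV}(\theta + \varepsilon z)^\top v$ is a finite maximum of affine functions, hence convex and finite, with $|\max_{v}(\theta+\varepsilon z)^\top v| \le (\|\theta\| + \varepsilon\|z\|)\max_{v\in\calV}\|v\|$; taking the expectation over $Z \sim \calN(0,I)$ preserves convexity and the bound is integrable, so $F_\varepsilon^+$ is finite and convex. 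For differentiability, the key point is that for any fixed $\theta$ the argmax $f(\theta + \varepsilon z)$ is unique for all $z$ outside a Lebesgue-null set (the finite union of hyperplanes where two vertices are tied), so $\theta \mapsto \max_v(\theta + \varepsilon z)^\top v$ is differentiable at $\theta$ with gradient $f(\theta + \varepsilon z)$ for a.e. $z$; a differentiation-under-the-integral argument (difference quotients are uniformly bounded by $\max_{v\in\calV}\|v\|$) then gives that $F_\varepsilon^+$ is differentiable with $\nabla F_\varepsilon^+(\theta) = \bbE[f(\theta + \varepsilon Z)] = \widehat{f}_\varepsilon^+(\theta)$.

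Next I would establish the two structural properties of $\Omega_\varepsilon^+$. Convexity is automatic, since a Fenchel conjugate is always convex and lower semicontinuous; properness follows because $F_\varepsilon^+$ is finite convex, hence closed, so $(F_\varepsilon^+)^{**} = F_\varepsilon^+$ and $\Omega_\varepsilon^+$ cannot be identically $+\infty$. For the domain inclusion, I would bound $F_\varepsilon^+$ from above by splitting the inner maximum:
\[
F_\varepsilon^+(\theta) \le \max_{v \in \calV}\theta^\top v + \varepsilon\, \bbE\Big[\max_{v \in \calV} Z^\top v\Big] = \sigma_{\conv(\calV)}(\theta) + C,
\]
where $\sigma_{\conv(\calV)}$ is the support function of $\conv(\calV)$ and $C < \infty$ since $\calV$ is finite. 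Conjugation reverses inequalities, so $\Omega_\varepsilon^+ = (F_\varepsilon^+)^* \ge (\sigma_{\conv(\calV)})^* - C = \iota_{\conv(\calV)} - C$, where $\iota_{\conv(\calV)}$ is the convex indicator of the polytope $\conv(\calV)$. In particular $\Omega_\varepsilon^+(\mu) = +\infty$ for every $\mu \notin \conv(\calV)$, i.e. $\dom(\Omega_\varepsilon^+) \subseteq \conv(\calV)$.

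Finally I would conclude by conjugate duality. Since $F_\varepsilon^+$ is a proper closed convex function that is differentiable at every $\theta$, the gradient-of-conjugate identity gives $\nabla F_\varepsilon^+(\theta) = \argmax_{\mu \in \bbR^d} \theta^\top \mu - \Omega_\varepsilon^+(\mu)$, with the maximizer unique (the argmax set equals $\partial F_\varepsilon^{+**}(\theta) = \partial F_\varepsilon^+(\theta) = \{\nabla F_\varepsilon^+(\theta)\}$). Combining this with the identity $\widehat{f}_\varepsilon^+ = \nabla F_\varepsilon^+$ from the first step and the domain inclusion (which lets the feasible set be restricted to $\conv(\calV)$ without changing the maximizer) yields
\[
\widehat{f}_\varepsilon^+(\theta) = \argmax_{\mu \in \conv(\calV)} \theta^\top\mu - \Omega_\varepsilon^+(\mu) = \widehat{f}_{\Omega_\varepsilon^+}(\theta),
\]
which is the claim.

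The hard part will be the first step: rigorously justifying that $F_\varepsilon^+$ is differentiable everywhere with gradient $\bbE[f(\theta + \varepsilon Z)]$. This requires observing that the Gaussian density smooths out the non-differentiability of the inner maximum — that for every fixed $\theta$ the set of perturbations $z$ producing a tie has measure zero — and then carefully verifying the domination hypothesis needed to differentiate under the expectation. The remaining steps are routine convex analysis (conjugate monotonicity, the support-function/indicator duality, and the Legendre relation for differentiable closed convex functions).
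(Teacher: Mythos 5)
Your proof is correct and follows the same overall skeleton as the paper's: identify $\widehat{f}_\varepsilon^+$ with $\nabla F_\varepsilon^+$, then pass to the conjugate $\Omega_\varepsilon^+ = (F_\varepsilon^+)^*$. Two of your sub-arguments differ genuinely from the paper's, and both are defensible. For the domain inclusion, the paper runs an explicit separation argument (pick $\tilde\theta$ separating $\mu$ from $\conv(\calV)$ with margin $\alpha>0$, scale by $t\to\infty$, and show $\Omega_\varepsilon^+(\mu)\geq t\alpha - \varepsilon\,\bbE[\max_{v}Z^\top v]\to+\infty$), whereas you bound $F_\varepsilon^+$ above by the support function of $\conv(\calV)$ plus a constant and invoke monotonicity of conjugation together with the support-function/indicator duality; the two arguments are essentially the same inequality packaged differently, yours being slightly more compact at the cost of quoting the duality $\sigma_{\conv(\calV)}^* = \iota_{\conv(\calV)}$. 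For the final step, the paper appeals to the fact (cited from Berthet et al., Proposition 2.2) that $\Omega_\varepsilon^+$ is of Legendre type, so that $\nabla F_\varepsilon^+ = (\nabla\Omega_\varepsilon^+)^{-1}$; you instead use the elementary subdifferential inversion for a proper closed convex function, $\operatorname{argmax}_\mu\{\theta^\top\mu - (F_\varepsilon^+)^*(\mu)\} = \partial F_\varepsilon^{+**}(\theta) = \partial F_\varepsilon^+(\theta) = \{\nabla F_\varepsilon^+(\theta)\}$, which is self-contained and avoids the external citation. Your treatment of the differentiability of $F_\varepsilon^+$ (almost-everywhere uniqueness of the argmax in $z$ plus uniformly bounded difference quotients) is also more careful than the paper's bare invocation of Danskin's theorem under the expectation, and correctly identifies this as the point where rigor is actually needed.
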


\begin{proof}
    See Appendix~\ref{sec:proof_additive_perturbation_omega}.
\end{proof}

Code sample~\ref{code:perturbed} shows how this translates into \texttt{InferOpt.jl} syntax.

\subsubsection{Multiplicative perturbation} \label{sec:multiplicative}

Since the Gaussian distribution puts mass on all of~$\bbR^d$, it can happen that some components of~$\theta + \varepsilon Z$ switch their sign with respect to~$\theta$.
This may cause problems whenever the CO oracle for~$f$ has sign-dependent behavior.
For instance, Dijkstra's algorithm for shortest paths requires all the edges of a graph to have a positive cost.
In those cases, we need a sign-preserving kind of perturbation.
Changing the distribution of~$Z$ to make it positive almost surely is not the right answer because it would bias the pipeline, leading to~$\bbE[\theta + \varepsilon Z] > \theta$ for the componentwise order.
So instead of being additive, the perturbation becomes multiplicative:
\begin{equation} \label{eq:perturbed_lp_sign}
    \widehat{f}_\varepsilon^\odot(\theta) = \bbE \left[\argmax_{v \in \calV} \left( \theta \odot e^{\varepsilon Z - \varepsilon^2 \onevector / 2}\right)^\top v \right] = \bbE \left[f\left(\theta \odot e^{\varepsilon Z - \varepsilon^2 \onevector / 2}\right)\right]
\end{equation}
Here,~$\odot$ denotes the Hadamard product, and the exponential is taken componentwise.
Since~$\bbE[e^{\varepsilon Z}] = e^{\varepsilon^2 \onevector / 2} \neq 1$, we add a correction term in the exponent to remove any bias:~$\bbE [\theta \odot e^{\varepsilon Z - \varepsilon^2 \onevector / 2}] = \theta$.
As before, the associated probability distribution is easy to describe:
\begin{equation}
    \widehat{f}_\varepsilon^\odot(\theta) = \sum_{v \in \calV} v \widehat{p}_\varepsilon^\odot(v | \theta) \qquad \text{with} \qquad \widehat{p}_\varepsilon^\odot(v | \theta) = \bbP \left(f\left(\theta \odot e^{\varepsilon Z - \varepsilon^2 \onevector / 2}\right) = v\right).
\end{equation}
And Proposition~\ref{prop:multiplicative_perturbation} provides differentiation formulas that are very similar to the additive case.

\begin{proposition}[Differentiating through a multiplicative perturbation] \label{prop:multiplicative_perturbation}
    We have:
    \begin{align*}
        \nabla_\theta \widehat{p}_\varepsilon^\odot(v|\theta)
         & = \frac{1}{\varepsilon \theta} \odot \bbE \left[\oneindicator{\left\{f\left(\theta \odot e^{\varepsilon Z - \varepsilon^2 \onevector / 2}\right) = v\right\}} Z\right] \\
        J_\theta \widehat{f}_\varepsilon^\odot(\theta)
         & = \frac{1}{\varepsilon \theta} \odot \bbE \left[f\left(\theta \odot e^{\varepsilon Z - \varepsilon^2 \onevector / 2}\right) Z^\top\right]
    \end{align*}
\end{proposition}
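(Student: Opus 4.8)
The plan is to reduce the multiplicative case to the additive case of Proposition~\ref{prop:additive_perturbation} by a logarithmic change of variables that turns multiplicative Gaussian noise into additive Gaussian noise. Fix a direction $\theta$ with all components nonzero (otherwise neither the sign-preserving perturbation nor the formulas make sense), set $s = \mathrm{sign}(\theta) \in \{-1,+1\}^d$, and introduce the auxiliary oracle $\check{f}(\eta) := f(s \odot \rme^{\eta})$, where $\mathrm{sign}$ and the exponential are componentwise. Since $\theta \odot \rme^{\varepsilon Z - \varepsilon^2 \onevector / 2} = s \odot \rme^{\log|\theta| - \varepsilon^2 \onevector / 2 + \varepsilon Z}$, Equation~\eqref{eq:perturbed_lp_sign} reads $\widehat{f}_\varepsilon^\odot(\theta) = \bbE[\check{f}(\eta_0 + \varepsilon Z)]$ with $\eta_0 := \log|\theta| - \varepsilon^2 \onevector / 2$, and similarly $\widehat{p}_\varepsilon^\odot(v|\theta) = \bbP(\check{f}(\eta_0 + \varepsilon Z) = v)$. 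So $\widehat{f}_\varepsilon^\odot$ is nothing but the additive perturbation of the oracle $\check{f}$, evaluated at the shifted point $\eta_0$.

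Next I would establish the additive differentiation formulas for $\check{f}$, that is, the analogue of Proposition~\ref{prop:additive_perturbation} with $f$ replaced by $\check{f}$. The crucial observation is that the proof of Proposition~\ref{prop:additive_perturbation} never uses the linearity of the objective: it is a Gaussian smoothing (Stein-type) identity valid for any bounded measurable map $\phi\colon \bbR^d \to \bbR^k$, obtained by writing $\bbE[\phi(\eta + \varepsilon Z)] = \int \phi(u) (2\pi\varepsilon^2)^{-d/2} \rme^{-\|u - \eta\|^2 / (2\varepsilon^2)} \, \rmd u$ and differentiating under the integral sign, which is legitimate because $\calV$ is finite (hence $\phi$ is bounded) and the Gaussian density depends smoothly on $\eta$ with dominated derivatives. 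Taking $\phi = \oneindicator{\{\check{f}(\cdot) = v\}}$ gives $\nabla_\eta \bbP(\check{f}(\eta + \varepsilon Z) = v) = \tfrac{1}{\varepsilon}\bbE[\oneindicator{\{\check{f}(\eta + \varepsilon Z) = v\}} Z]$, and combining over $v \in \calV$ with weights $v$ as in Equation~\eqref{eq:probabilistic_co_layer_jacobian} yields $J_\eta \bbE[\check{f}(\eta + \varepsilon Z)] = \tfrac{1}{\varepsilon}\bbE[\check{f}(\eta + \varepsilon Z) Z^\top]$.

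Finally I would apply the chain rule to pull these formulas back to the variable $\theta$. The map $\theta \longmapsto \eta_0(\theta) = \log|\theta| - \varepsilon^2 \onevector / 2$ has Jacobian $\diag(1/\theta)$, because $\partial_{\theta_j} \log|\theta_j| = \mathrm{sign}(\theta_j)/|\theta_j| = 1/\theta_j$ (the sign cancels, which is exactly why $1/\theta$, and not $1/|\theta|$, shows up in the statement). Composing, $\nabla_\theta \widehat{p}_\varepsilon^\odot(v|\theta) = \diag(1/\theta)\, \nabla_\eta \bbP(\check{f}(\eta_0 + \varepsilon Z) = v)$ and $J_\theta \widehat{f}_\varepsilon^\odot(\theta) = \big(\tfrac{1}{\varepsilon}\bbE[\check{f}(\eta_0 + \varepsilon Z) Z^\top]\big) \diag(1/\theta)$; substituting back $\check{f}(\eta_0 + \varepsilon Z) = f(\theta \odot \rme^{\varepsilon Z - \varepsilon^2 \onevector / 2})$ and reading the coordinatewise rescaling by $1/(\varepsilon\theta)$ as the $\odot$ notation of the statement gives the two claimed identities. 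The only genuine subtlety is the one flagged in the second paragraph — verifying that the additive differentiation formula survives the loss of objective linearity when applied to $\check{f}$ — plus mild care about the admissible domain (no zero coordinate of $\theta$) and about signs; the rest is bookkeeping.
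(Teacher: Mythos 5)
Your proof is correct, and it takes a genuinely different route from the paper's. The paper proceeds by direct computation: assuming~$\theta$ has positive components, it applies the change of variables~$u = \theta \odot \rme^{\varepsilon z - \varepsilon^2 \onevector / 2}$ to the integral defining~$\widehat{p}_\varepsilon^\odot(v|\theta)$, differentiates under the integral sign with respect to~$\theta$, reverses the change of variables, and uses~$\nabla \nu(z) = -z\nu(z)$, exactly mirroring the additive proof but with the multiplicative Jacobian~$1/(\varepsilon^d \prod_i u_i)$. You instead factor the same underlying calculus into two reusable pieces: (i) the observation that the smoothing identity of Proposition~\ref{prop:additive_perturbation} is a Gaussian integration-by-parts statement valid for any bounded measurable map --- neither the linearity of the objective nor the argmax structure is used, only the finiteness of~$\calV$ --- applied to the auxiliary oracle~$\check{f}(\eta) = f(s \odot \rme^{\eta})$; and (ii) the chain rule through~$\theta \longmapsto \log\lvert\theta\rvert - \varepsilon^2 \onevector/2$, whose Jacobian~$\diag(1/\theta)$ produces the~$1/(\varepsilon\theta)$ rescaling. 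Both routes hinge on the same logarithmic substitution, but yours is more modular and buys two things the paper's write-up does not: it treats~$\theta$ with negative (nonzero) components uniformly via the sign vector~$s$, whereas the paper's proof is stated only for componentwise positive~$\theta$; and it isolates the general smoothing lemma that also underlies the pushforward formulas of Proposition~\ref{prop:perturbation_regret_gradient}. The only point to state a little more carefully is the measurability of~$\eta \longmapsto \oneindicator{\{\check{f}(\eta) = v\}}$ (immediate, since~$\check{f}$ is~$f$ composed with a diffeomorphism), after which the dominated-convergence justification is identical to the additive case.
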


\begin{proof}
    See Appendix~\ref{sec:proof_multiplicative_perturbation}.
\end{proof}

As far as regularization is concerned, we need a slight tweak compared to the additive case.
Let~$F_\varepsilon^\odot$ be the function defined by
\begin{equation*}
    F_\varepsilon^\odot(\theta) = \bbE \left[\max_{v \in \calV} \left(\theta \odot e^{\varepsilon Z - \varepsilon^2 \onevector / 2}\right)^\top v \right]
\end{equation*}
and let~$\Omega_\varepsilon^\odot = (F_\varepsilon^\odot)^*$ denote its Fenchel conjugate.
We define
\begin{equation*}
    \widehat{f}_\varepsilon^{\odot \mathrm{scaled}}(\theta) = \bbE \left[ e^{\varepsilon Z - \varepsilon^2 \onevector / 2} \odot f \left(\theta \odot e^{\varepsilon Z - \varepsilon^2 \onevector / 2}\right) \right]
\end{equation*}

\begin{proposition}[Regularization associated with a multiplicative perturbation] \label{prop:multiplicative_perturbation_omega}
    The function~$\Omega_\varepsilon^\odot$ is convex and satisfies
    \begin{equation*}
        \widehat{f}_\varepsilon^{\odot \mathrm{scaled}}(\theta) = \argmax_{\mu \in \dom(\Omega_\varepsilon^\odot)} \theta^\top \mu - \Omega_\varepsilon^\odot(\mu) = \widehat{f}_{\Omega_\varepsilon^\odot}(\theta).
    \end{equation*}
\end{proposition}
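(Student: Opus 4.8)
The plan is to mimic the proof of Proposition~\ref{prop:additive_perturbation_omega}, the only new ingredient being the bookkeeping of the multiplicative factor $a(Z) := e^{\varepsilon Z - \varepsilon^2 \onevector/2}$. Write $F_\varepsilon^\odot(\theta) = \bbE[\Phi(\theta, Z)]$ with $\Phi(\theta, z) = \max_{v\in\calV}(\theta\odot a(z))^\top v$. For each fixed $z$, the map $\theta\mapsto\Phi(\theta,z)$ is a maximum of finitely many linear functions of $\theta$, hence convex and piecewise linear; since $\calV$ is finite and $a(Z)$ has finite first moment, the envelope $|\Phi(\theta,z)|\le \|\theta\|_\infty\,\|a(z)\|_\infty\max_{v}\|v\|_1$ is integrable, so $F_\varepsilon^\odot$ is a finite convex function on $\bbR^d$. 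Consequently $\Omega_\varepsilon^\odot=(F_\varepsilon^\odot)^*$ is a proper, lower semicontinuous, convex function, which settles the convexity part of the statement.

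Next I would differentiate $F_\varepsilon^\odot$. By the assumed single-valuedness of $f$ (which persists after the componentwise rescaling, since $a(Z)$ admits a density and the set of $\theta$ for which $\argmax_{v\in\calV}(\theta\odot a(Z))^\top v$ is not a singleton is contained in a finite union of hyperplanes, hence has probability zero for every fixed $\theta$), the map $\theta\mapsto\Phi(\theta,Z)$ is almost surely differentiable at $\theta$, with $\nabla_\theta\Phi(\theta,Z) = a(Z)\odot f(\theta\odot a(Z))$ by Danskin's theorem. The difference quotients are dominated by the integrable Lipschitz constant $\|a(Z)\|_\infty\max_{v}\|v\|_1$, so dominated convergence lets me exchange $\nabla_\theta$ and $\bbE$, giving
\begin{equation*}
    \nabla F_\varepsilon^\odot(\theta) = \bbE\bigl[a(Z)\odot f(\theta\odot a(Z))\bigr] = \widehat{f}_\varepsilon^{\odot\mathrm{scaled}}(\theta).
\end{equation*}
This is precisely why the scaled output appears rather than $\widehat{f}_\varepsilon^\odot$ itself: the chain rule through $\theta\mapsto\theta\odot a(Z)\mapsto f(\cdot)$ pulls out the factor $a(Z)$ sitting between $\theta$ and $f$, whereas in the additive case that factor is the identity.

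Finally I would invoke the equality case of the Fenchel--Young inequality. For a finite convex function $F$ on $\bbR^d$ that is differentiable at $\theta$, one has $F(\theta)+F^*(\mu)\ge\theta^\top\mu$ with equality if and only if $\mu\in\partial F(\theta)=\{\nabla F(\theta)\}$; equivalently, $\nabla F(\theta)$ is the unique maximizer of $\mu\mapsto\theta^\top\mu-F^*(\mu)$ over $\dom(F^*)$. Applying this with $F=F_\varepsilon^\odot$ together with the identification of $\nabla F_\varepsilon^\odot$ above yields
\begin{equation*}
    \widehat{f}_\varepsilon^{\odot\mathrm{scaled}}(\theta) = \argmax_{\mu\in\dom(\Omega_\varepsilon^\odot)}\theta^\top\mu-\Omega_\varepsilon^\odot(\mu) = \widehat{f}_{\Omega_\varepsilon^\odot}(\theta),
\end{equation*}
which is the claim. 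Note that, unlike the additive case, I would not assert $\dom(\Omega_\varepsilon^\odot)\subset\conv(\calV)$: the rescaled vertices $a(Z)\odot f(\theta\odot a(Z))$ need not lie in $\conv(\calV)$, so the domain of $\Omega_\varepsilon^\odot$ is deliberately left unspecified in the statement.

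The step I expect to be the main obstacle is the rigorous interchange of gradient and expectation: one must confirm that for every fixed $\theta$ the tie-breaking event has probability zero after the multiplicative perturbation (using that $a(Z)$ has a density and that ties occur only on a measure-zero union of hyperplanes), and then exhibit the integrable dominating function for the difference quotients so that Danskin's theorem and dominated convergence combine cleanly. Everything else — convexity of $F_\varepsilon^\odot$, properness of its conjugate, and the Fenchel--Young equality characterization of the $\argmax$ — is routine convex analysis, essentially identical to the additive case treated in Proposition~\ref{prop:additive_perturbation_omega}.
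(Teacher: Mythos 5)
Your proof is correct, and its backbone coincides with the paper's: convexity of $\Omega_\varepsilon^\odot$ comes for free from conjugation, and Danskin's theorem identifies $\nabla F_\varepsilon^\odot(\theta)$ with $\bbE\bigl[e^{\varepsilon Z - \varepsilon^2\onevector/2}\odot f\bigl(\theta\odot e^{\varepsilon Z - \varepsilon^2\onevector/2}\bigr)\bigr] = \widehat{f}_\varepsilon^{\odot\mathrm{scaled}}(\theta)$, with exactly the chain-rule factor you point out. Where you genuinely diverge is the closing step: the paper asserts (without proof, saying only that one \emph{could} argue as in Proposition~2.2 of \textcite{berthetLearningDifferentiablePerturbed2020}) that $\Omega_\varepsilon^\odot$ is a Legendre-type function, so that $\nabla F_\varepsilon^\odot = (\nabla\Omega_\varepsilon^\odot)^{-1}$ and the gradient is the maximizer of $\mu\mapsto\theta^\top\mu-\Omega_\varepsilon^\odot(\mu)$. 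You instead invoke the equality case of the Fenchel--Young inequality together with biconjugation ($F_\varepsilon^\odot$ finite convex on $\bbR^d$, hence lsc and equal to $F_\varepsilon^{\odot**}$), which needs only differentiability of $F_\varepsilon^\odot$ at $\theta$ rather than the stronger Legendre structure; this makes your argument more self-contained than the paper's. Two small caveats: your sentence about tie-breaking conflates the set of $\theta$ with the set of $Z$ (what you need is that, for fixed $\theta$ with all components nonzero, the set of perturbations $z$ producing a non-unique argmax is Lebesgue-null), and both you and the paper implicitly restrict to such $\theta$, since the multiplicative perturbation degenerates on zero components. You also correctly refrain from claiming $\dom(\Omega_\varepsilon^\odot)\subseteq\conv(\calV)$, consistent with the paper's remark that this inclusion fails in the multiplicative case.
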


Unlike in the additive case, it is not~$\widehat{f}_\varepsilon^\odot$ itself that can be viewed as the product of regularization with~$\Omega_\varepsilon^\odot$, but~$\widehat{f}_\varepsilon^{\odot \mathrm{scaled}}$.
Furthermore, this time we have~$\dom(\Omega_\varepsilon^\odot) \not\subseteq \conv(\calV)$.

\begin{proof}
    See Appendix~\ref{sec:proof_multiplicative_perturbation_omega}.
\end{proof}

Code sample~\ref{code:perturbed} shows how this translates into \texttt{InferOpt.jl} syntax.

\subsubsection{Generic regularization} \label{sec:frank_wolfe}

We now switch our focus to the case of an explicit regularization~$\Omega$.
Provided the regularization is convex and smooth, approximate computation of~$\widehat{f}_\Omega(\theta)$ is made possible by the Frank-Wolfe algorithm \parencite{frankAlgorithmQuadraticProgramming1956}.
This algorithm is interesting for two reasons.
First, it only requires access to the CO oracle~$f$ and the gradient of~$\Omega$.
Second, its output is expressed as a convex combination of only a few polytope vertices \parencite{jaggiRevisitingFrankWolfeProjectionFree2013}.
In other words, the Frank-Wolfe algorithm does not just return a single point~$\widehat{f}_\Omega(\theta) \in \conv(\calV)$: it also defines a sparse probability distribution~$\widehat{p}_\Omega^{\FW}(\cdot|\theta)$ over the vertices~$\calV$ such that
\begin{equation*}
    \widehat{f}_\Omega(\theta) = \sum_{v \in \calV} v \widehat{p}_\Omega^{\FW}(v|\theta).
\end{equation*}
This distribution is called sparse because most of the weights are actually zero.
Note that~$\widehat{p}_\Omega^{\FW}(\cdot|\theta)$ is not uniquely specified by the regularization~$\Omega$, but instead depends on the precise implementation of the Frank-Wolfe algorithm (initialization, step size, convergence criterion, \textit{etc.}).
In particular, the number of atoms in the distribution is upper-bounded by the number of Frank-Wolfe iterations.

\medskip

As pointed out by \textcite[Appendix C]{blondelEfficientModularImplicit2022}, there exists a function~$g(p, \theta)$ defined on~$\Delta^\calV \times \bbR^d$ such that~$\widehat{p}_\Omega^{\FW}(\cdot|\theta)$ is a fixed point of its projected gradient operator~$p \longmapsto \proj_{\Delta^\calV}(p - \nabla_p g(p, \theta))$.
Since the orthogonal projection onto the simplex~$\Delta^\calV$ is itself differentiable \parencite{martinsSoftmaxSparsemaxSparse2016}, we can apply the implicit function theorem to this fixed point equation.
Doing so yields gradients~$\nabla_\theta \widehat{p}_\Omega(v|\theta)$ that we use to compute a Jacobian for~$\widehat{f}_\Omega(\theta)$.
Again, by sparsity, this sum only has a few non-zero terms, which makes it tractable:
\begin{equation}
    J_\theta \widehat{f}_\Omega(\theta) = \sum_{v \in \calV} v \nabla_\theta \widehat{p}_\Omega^{\FW}(v|\theta)^\top.
\end{equation}
Among all the possible functions~$\Omega$, the quadratic penalty~$\Omega(\mu) = \tfrac12 \lVert \mu \rVert^2$ is particularly interesting.
It gives rise to the SparseMAP method \parencite{niculaeSparseMAPDifferentiableSparse2018}, whose name comes from the sparsity of the Euclidean projection onto a polytope:
\begin{equation*}
    \widehat{f}_\Omega(\theta) = \argmax_{\mu \in \conv(\calV)} \left\{\theta^\top \mu - \frac12 \lVert \mu \rVert^2 \right\} = \argmin_{\mu \in \conv(\calV)} \lVert \mu - \theta \rVert^2.
\end{equation*}
This is the one we used for the example of Code sample~\ref{code:regularized_generic}.
Our implementation relies on the recent package \texttt{FrankWolfe.jl}\footnote{\url{https://github.com/ZIB-IOL/FrankWolfe.jl}} \parencite{besanconFrankWolfeJlHighPerformance2022}.

\begin{remark}
    \textcite{blondelLearningFenchelYoungLosses2020} also suggest distribution regularization, whereby~$\Omega(\mu)$ is defined through a generalized entropy~$H(p)$ on~$\Delta^{\calV}$:
    \begin{equation*}
        \Omega(\mu) = -\max_{p \in \Delta^\calV} H(p) \quad \text{s.t.} \quad \bbE_p[V] = \mu.
    \end{equation*}
    Distribution regularization can only be computed explicitly for certain entropies~$H$ (Shannon entropy, Gini index) and certain polytopes~$\conv(\calV)$ (unit simplex, permutahedron, spanning trees, \textit{etc.}).
    In each case, a custom combinatorial algorithm is required.
    Since we aim for a generic approach, we only consider mean regularization, which is defined directly on the expectation~$\mu$.
\end{remark}

\begin{figure}
    \centering
    \begin{minipage}{0.50\linewidth}
        \lstinputlisting[
            caption={Probabilistic CO layers defined by perturbation},
            label={code:perturbed}
        ]{code/layers_perturbed.jl}
    \end{minipage}
    \hfill
    \begin{minipage}{0.45\linewidth}
        \lstinputlisting[
            caption={Probabilistic CO layer defined by regularization},
            label={code:regularized_generic}
        ]{code/layers_regularized.jl}
    \end{minipage}
\end{figure}

\subsection{The case of inexact CO oracles}

In our discussion so far, an implicit assumption was that the CO oracle~$f$ returns an exact solution to Equation~\eqref{eq:co_oracle}.
For most polynomial problems (as well as some NP-hard problems which are tractable in practice), this is perfectly reasonable.
But in some cases, exact solutions are too expensive to compute.
Then, our CO oracle may only be able to return an inexact solution, for instance because branch \& bound has to be interrupted before the whole tree can be explored.
What kind of impact does this have on the precision of the computed Jacobian?

Let us denote by~$f$ a hypothetical exact oracle, and by~$g$ an inexact oracle.

\begin{proposition}[Jacobian precision for inexact oracles -- perturbed case]
    \label{prop:inexact_oracles}
    Suppose we use~$g$ instead of~$f$ with additive (resp. multiplicative) perturbation.
    Then the error on the Jacobian of the probabilistic CO layer satisfies:
    \begin{align*}
        \left\lVert J_\theta \widehat{g}_\varepsilon^+(\theta) - J_\theta \widehat{f}_\varepsilon^+(\theta) \right\lVert^2
         & \leq \frac{\sqrt{d}}{\varepsilon} \lVert g - f \rVert_{\infty}                                \\
        \left\lVert J_\theta \widehat{g}_\varepsilon^\odot(\theta) - J_\theta \widehat{f}_\varepsilon^\odot(\theta) \right\lVert^2
         & \leq \frac{\sqrt{d}}{\varepsilon \min_i \lvert \theta_i \rvert} \lVert g - f \rVert_{\infty}.
    \end{align*}
\end{proposition}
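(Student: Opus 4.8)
The plan is to read off closed-form expressions for the two Jacobians directly from Proposition~\ref{prop:additive_perturbation} (additive case) and Proposition~\ref{prop:multiplicative_perturbation} (multiplicative case), applied once with the exact oracle~$f$ and once with the inexact oracle~$g$, then to control the difference of these expectations by combining a \emph{pointwise} bound on~$g - f$ with a moment bound on the Gaussian perturbation. No differentiability or structural property of~$f$ or~$g$ is needed: the whole argument lives at the level of the integral formulas.

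Concretely, in the additive case, subtracting the formula of Proposition~\ref{prop:additive_perturbation} from itself with the two oracles gives
\[
    J_\theta \widehat{g}_\varepsilon^+(\theta) - J_\theta \widehat{f}_\varepsilon^+(\theta) = \frac{1}{\varepsilon}\, \bbE\!\left[\left(g(\theta + \varepsilon Z) - f(\theta + \varepsilon Z)\right) Z^\top\right].
\]
I would then push the matrix norm inside the expectation (Jensen's inequality, since a norm is convex), observe that the matrix inside is rank one so that its norm factors as $\lVert g(\theta + \varepsilon Z) - f(\theta + \varepsilon Z) \rVert \cdot \lVert Z \rVert$, bound the first factor uniformly by $\lVert g - f \rVert_\infty$ (the only place the hypothesis enters — this bound is pointwise and therefore survives the random argument $\theta + \varepsilon Z$), and finally bound $\bbE[\lVert Z \rVert] \leq (\bbE[\lVert Z \rVert^2])^{1/2} = \sqrt{d}$ by Jensen once more, using that $\lVert Z \rVert^2$ has a $\chi^2_d$ distribution. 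Collecting the factors yields the announced $\tfrac{\sqrt{d}}{\varepsilon} \lVert g - f \rVert_\infty$.

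The multiplicative case follows the same template, with $\theta + \varepsilon Z$ replaced everywhere by $\theta \odot e^{\varepsilon Z - \varepsilon^2 \onevector / 2}$; again the uniform bound $\lVert g(\cdot) - f(\cdot) \rVert \leq \lVert g - f \rVert_\infty$ holds whatever the (random, sign-preserving) argument is. The one extra ingredient is the prefactor $\tfrac{1}{\varepsilon \theta} \odot$ that appears in Proposition~\ref{prop:multiplicative_perturbation}: Hadamard-scaling the rows of a matrix by $1/(\varepsilon \theta_i)$ amounts to left-multiplication by $\diag(1/(\varepsilon \theta_1), \dots, 1/(\varepsilon \theta_d))$, whose operator norm is exactly $1/(\varepsilon \min_i \lvert \theta_i \rvert)$; submultiplicativity of the operator norm then produces the extra $1/\min_i \lvert \theta_i \rvert$ factor, and the rest of the estimate is unchanged.

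I do not anticipate a real obstacle: the whole proof reduces to Jensen's inequality, the rank-one norm identity $\lVert u v^\top \rVert = \lVert u \rVert \, \lVert v \rVert$, and a $\chi^2$ moment bound. The only delicate point is bookkeeping of norm conventions — making precise which matrix norm is intended and how the oracle-error quantity $\lVert g - f \rVert_\infty$ is compared with the Euclidean norm of the $d$-dimensional discrepancy vector $g(\cdot) - f(\cdot)$ — and, if desired, remarking that the bound is a crude worst case, since it ignores that $g$ and $f$ coincide on most of the perturbed objective space whenever the oracles rarely disagree.
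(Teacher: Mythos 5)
Your proof is correct and follows essentially the same route as the paper's: difference the Jacobian formulas from Propositions~\ref{prop:additive_perturbation} and~\ref{prop:multiplicative_perturbation}, apply Jensen's inequality, factor the rank-one matrix norm as $\lVert u v^\top\rVert = \lVert u\rVert\,\lVert v\rVert$, bound the oracle discrepancy uniformly by $\lVert g-f\rVert_\infty$, and use $\bbE[\lVert Z\rVert^2]=d$; the paper simply applies Jensen to the squared norm rather than to the norm itself, and your diagonal-matrix treatment of the $\tfrac{1}{\varepsilon\theta}\odot$ prefactor makes explicit a step the paper leaves implicit. Both arguments yield $\lVert\cdot\rVert\le\tfrac{\sqrt d}{\varepsilon}\lVert g-f\rVert_\infty$ (and its multiplicative analogue), which suggests the exponent~$2$ on the left-hand sides of the stated inequalities is a typo.
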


While requiring the inexact oracle~$g$ to be uniformly close to~$f$ is quite restrictive, this result does provide heuristic justification for the use of inexact oracles in practice.

\begin{proof}
    See Appendix~\ref{sec:proof_inexact_oracles}.
\end{proof}

\section{Learning by experience} \label{sec:experience}

Now that we have seen several ways to construct probabilistic CO layers, we turn to the definition of an appropriate loss function.
Let us start with learning by experience, which takes place when we only have access to input samples without target outputs.
In that case, Equation~\eqref{eq:loss_minimization} simplifies as
\begin{equation}\label{eq:pb_learning_by_experience}
    \min_w \frac{1}{N} \sum_{i=1}^N \calL\Big(f\big(\varphi_w(x^{(i)})\big)\Big).
\end{equation}
As we will see below, the \emph{regret}, which is the natural choice of loss, does not yield interesting gradients.
That is why we propose a family of \emph{smooth regret surrogates} derived from our probabilistic CO layers, and explain how to differentiate them.
While similar losses have been hinted at in previous works, to the best of our knowledge, our general point of view is new.

To make notations lighter, we restrict ourselves to a single input~$x$.
Furthermore, we write losses as functions of~$\theta$ instead of~$w$.
Indeed, our losses do not just rely on~$y = f(\theta)$: they use~$f$ as an ingredient internally.
In practice, we leave it to AD to exploit the relation~$\theta = \varphi_w(x)$ in order to compute gradients with respect to~$w$.

\subsection{Minimizing a smooth regret surrogate}

When we learn by experience, the problem statement usually includes a cost function~$c: \calV \to \bbR$, and we want our pipeline to generate solutions that are as cheap as possible.
Internally, this cost function may use parameters that are unknown to us at prediction time: typically, it may assess the quality of our solution using the true objective direction~$\bar{\theta}$.
It may be useful to think about~$c$ as the feedback provided by an outside evaluator, rather than a function we implement ourselves.

The natural loss to minimize is the cost incurred by our prediction pipeline, also called regret:
\begin{equation} \label{eq:regret}
    \calR(\theta) = c(f(\theta)).
\end{equation}
This function relies on the CO oracle~$f$, which is piecewise constant.
Our spontaneous impulse would be to replace the CO oracle~$f$ with a probabilistic CO layer~$\widehat{f}$, thus minimizing~$c(\widehat{f}(\theta))$.
Unfortunately, the cost function~$c$ is not necessarily smooth either.
To make matters worse,~$c$ may only be defined on vertices~$v \in \calV$, and not on general convex combinations~$\mu \in \conv(\calV)$.

The solution we propose relies on the \emph{pushforward measure} (also called image measure) of~$\widehat{p}(\cdot | \theta)$ with respect to the function~$c$.
Recall that a probabilistic CO layer is defined by~$\widehat{f}(\theta) = \bbE_{\widehat{p}(\cdot | \theta)}[V]$.
To compose it with an arbitrary cost, instead of applying~$c$ outside the expectation, we apply it inside the expectation.
In other words, we first push the measure~$\widehat{p}(\cdot | \theta)$ forward through the function~$c$, before taking the expectation.
This gives rise to the notion of \emph{expected regret}:
\begin{equation} \label{eq:expected_regret}
    \calR_{\widehat{p}}(\theta) = \bbE_{\widehat{p}(\cdot|\theta)}[c(V)]
\end{equation}
By integration, this loss is just as smooth as the probability mapping~$\theta \longmapsto \widehat{p}(\cdot|\theta)$, which means we can compute its gradient easily.
We therefore suggest using the expected regret~$\calR_{\widehat{p}}$ that stems from the probabilistic CO layers~$\widehat p_\varepsilon^+$,~$\widehat{p}_\varepsilon^\odot$, and~$\widehat{p}_{\Omega}^{\mathrm{FW}}$ defined in Section~\ref{sec:probabilistic}.

Note that if~$c$ is linear and defined on all of~$\conv(\calV)$, then~$\bbE_{\widehat{p}(\cdot|\theta)}[c(V)] = c\left(\bbE_{\widehat{p}(\cdot|\theta)}[V]\right)$ and the two quantities coincide.
Furthermore, if~$c$ is convex, then~$\bbE_{\widehat{p}(\cdot|\theta)}[c(V)] \geq c\left(\bbE_{\widehat{p}(\cdot|\theta)}[V]\right)$ by Jensen's inequality, which means the expected regret is an upper bound.

\medskip

Code sample~\ref{code:regrets} demonstrates how to define an expected regret from probabilistic CO layers, while Code sample~\ref{code:regret_operations} shows that we can compute and differentiate it automatically.
Finally, Code sample~\ref{code:experience_learning} displays a complete program for learning by experience.
The rest of this section explains how to compute derivatives of the expected regret~$\calR_{\widehat{p}}$ and can be skipped without danger.

\begin{figure}
    \centering
    \begin{minipage}{0.48\textwidth}
        \lstinputlisting[
            caption={Expected regrets associated with probabilistic CO layers},
            label={code:regrets}
        ]{code/regrets.jl}
    \end{minipage}
    \hfill
    \begin{minipage}{0.48\textwidth}
        \lstinputlisting[
            caption={Supported operations for an expected regret},
            label={code:regret_operations}
        ]{code/regret_operations.jl}
    \end{minipage}
\end{figure}

\begin{center}
    \begin{minipage}{0.80\textwidth}
        \lstinputlisting[
            caption={Learning with an expected regret},
            label={code:experience_learning}
        ]{code/experience_learning.jl}
    \end{minipage}
\end{center}

\begin{remark}
    Since the learning problem is non-convex, we may also try to minimize the (non-smooth) regret~$\calR$ using global optimization algorithms such as DIRECT \parencite{jonesLipschitzianOptimizationLipschitz1993}.
    Perhaps surprisingly, this has been shown to yield good results when~$\varphi_w$ is a generalized linear model and the dimension of the weights~$w$ is not too large, i.e., no greater that~$100$ \parencite{parmentierLearningStructuredApproximations2021}.
    When the ML layer~$\varphi_w$ is a large neural network, we cannot use this approach anymore.
\end{remark}

\subsection{Derivatives of the regret for learning by experience}

When~$\widehat{p}$ comes from a random perturbation, we can formulate the gradient of the expected regret as an expectation too, and approximate it with Monte-Carlo samples.

\begin{proposition}[Gradient of the expected regret, perturbation setting] \label{prop:perturbation_regret_gradient}
    We have:
    \begin{align*}
        \nabla_\theta \calR_{\widehat{p}_\varepsilon^+}(\theta)
         & = \frac{1}{\varepsilon} \bbE \left[(c \circ f)(\theta + \varepsilon Z) Z\right]                                                                 \\
        \nabla_\theta \calR_{\widehat{p}_\varepsilon^\odot}(\theta)
         & = \frac{1}{\varepsilon \theta} \odot \bbE \left[(c \circ f)\left(\theta \odot e^{\varepsilon Z - \varepsilon^2 \onevector / 2}\right) Z\right].
    \end{align*}
\end{proposition}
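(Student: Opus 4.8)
The plan is to mimic the proof of Proposition~\ref{prop:additive_perturbation}, but with the cost function composed \emph{inside} the expectation. Start from the definition~$\calR_{\widehat{p}_\varepsilon^+}(\theta) = \bbE_{\widehat{p}_\varepsilon^+(\cdot|\theta)}[c(V)]$. Because~$\widehat{p}_\varepsilon^+(\cdot|\theta)$ is the pushforward of the Gaussian noise through~$Z \mapsto f(\theta + \varepsilon Z)$, this is exactly~$\calR_{\widehat{p}_\varepsilon^+}(\theta) = \bbE_Z[(c \circ f)(\theta + \varepsilon Z)]$. So the quantity whose gradient we want is the smoothing of the (generally discontinuous, bounded) function~$\psi := c \circ f$ by a Gaussian of scale~$\varepsilon$. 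This is precisely the situation handled by the Gaussian Stein-type identity (the same one underpinning Proposition~\ref{prop:additive_perturbation}), so the additive formula should follow by quoting that machinery with~$\psi$ in place of a coordinate indicator.

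Concretely, I would write~$\calR_{\widehat{p}_\varepsilon^+}(\theta) = \int_{\bbR^d} \psi(\theta + \varepsilon z)\, \phi(z)\, \rmd z$ where~$\phi$ is the standard Gaussian density, then change variables~$u = \theta + \varepsilon z$ to get~$\varepsilon^{-d}\int \psi(u)\, \phi\big((u-\theta)/\varepsilon\big)\, \rmd u$. Differentiating under the integral sign in~$\theta$ and using~$\nabla_\theta \phi\big((u-\theta)/\varepsilon\big) = \varepsilon^{-1}\, \phi\big((u-\theta)/\varepsilon\big)\cdot \big((u-\theta)/\varepsilon\big)$ (the log-derivative of the Gaussian), then changing variables back to~$z$, yields~$\nabla_\theta \calR_{\widehat{p}_\varepsilon^+}(\theta) = \varepsilon^{-1}\bbE_Z[\psi(\theta + \varepsilon Z)\, Z] = \varepsilon^{-1}\bbE[(c\circ f)(\theta + \varepsilon Z)\, Z]$, as claimed. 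For the multiplicative case, I would reduce to the additive one by the substitution~$\theta_i \mapsto \log \theta_i$ (valid componentwise on the support, where~$\theta_i \neq 0$), observing that~$\theta \odot e^{\varepsilon Z - \varepsilon^2 \onevector/2} = \exp(\log\theta + \varepsilon Z - \varepsilon^2\onevector/2)$; applying the additive result to the function~$\eta \mapsto (c\circ f)(e^{\eta})$ at the shifted point~$\log\theta - \varepsilon^2\onevector/2$ and then converting the gradient in~$\log\theta$ back to a gradient in~$\theta$ via the chain rule (which produces the~$1/\theta$ Hadamard factor) gives the second formula. Alternatively one differentiates directly, as in the proof of Proposition~\ref{prop:multiplicative_perturbation}, using the log-density of~$\log\theta + \varepsilon Z$.

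The main obstacle is purely technical: justifying the interchange of~$\nabla_\theta$ and the expectation. Here~$\psi = c\circ f$ is typically not continuous (it inherits the piecewise-constant nature of~$f$), so one cannot appeal to smoothness of the integrand; instead the differentiation must be transferred onto the smooth Gaussian density, and one needs a domination argument. Assuming~$\psi$ is bounded (which holds since~$\calV$ is finite and~$c$ is finite-valued on~$\calV$), the integrand~$\psi(\theta+\varepsilon z)\,\lVert z\rVert\,\phi(z)$ is dominated locally uniformly in~$\theta$ by an integrable function, so dominated convergence applies; I would state this hypothesis explicitly. A secondary subtlety in the multiplicative case is that the change of variables~$\theta_i \mapsto \log\theta_i$ requires~$\theta$ to have no zero component and a fixed sign pattern, which is exactly the regime in which the multiplicative perturbation was introduced, so this is consistent with the setting of Section~\ref{sec:multiplicative}. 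I would defer the full details to Appendix~\ref{sec:proofs}, mirroring the treatment of Propositions~\ref{prop:additive_perturbation} and~\ref{prop:multiplicative_perturbation}.
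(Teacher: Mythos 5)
Your proposal is correct and matches the paper's argument in substance: the paper obtains both formulas as immediate corollaries of Propositions~\ref{prop:additive_perturbation} and~\ref{prop:multiplicative_perturbation} by writing $\calR_{\widehat{p}}(\theta) = \sum_{v \in \calV} c(v)\,\widehat{p}(v|\theta)$ and summing the per-vertex gradients, which is exactly the same Gaussian change-of-variables identity you apply directly to $c \circ f$. Your domination remark and the sign restriction on $\theta$ in the multiplicative case are consistent with the assumptions the paper already makes in Appendix~\ref{sec:proofs}.
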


\begin{proof}
    For the additive perturbation, it is a consequence of Proposition~\ref{prop:additive_perturbation}.
    For the multiplicative perturbation, it is a consequence of Proposition~\ref{prop:multiplicative_perturbation}.
\end{proof}

These gradients obey a simple logic: the more the perturbation~$Z$ increases the cost of a solution, the more positive weight it gets, and vice versa.
To see it, we remember that~$\bbE[Z] = 0$ for a standard Gaussian, and rewrite the regret gradients as follows:
\begin{align*}
    \nabla_\theta \calR_{\widehat{p}_\varepsilon^+}(\theta)
     & = \frac{1}{\varepsilon} \bbE \left[(c \circ f)(\theta + \varepsilon Z)Z - (c \circ f)(\theta)Z\right]                                                                   \\
    \nabla_\theta \calR_{\widehat{p}_\varepsilon^\odot}(\theta)
     & = \frac{1}{\varepsilon \theta} \odot \bbE \left[(c \circ f)\left(\theta \odot e^{\varepsilon Z - \varepsilon^2 \onevector / 2}\right) Z - (c \circ f)(\theta) Z\right].
\end{align*}
On the other hand, when~$\widehat{p}$ is derived from an explicit regularization~$\Omega$, the expected regret is amenable to implicit differentiation of the Frank-Wolfe algorithm.
Once more, the sparsity property makes exact computation tractable by reducing the number of terms in the sum:
\begin{equation*}
    \nabla_\theta \calR_{\widehat{p}_\Omega^{\FW}}(\theta) = \sum_{v \in \calV} c(v) \nabla_\theta \widehat{p}_\Omega^{\FW}(v|\theta)
\end{equation*}

\begin{remark}
    Although the previous discussion focuses on a scalar-valued cost, it actually applies to any pushforward function~$c$, even with vector values.
    The formulas for the generic Jacobian are given below:
    \begin{align*}
        J_\theta \bbE_{\widehat{p}_\varepsilon^+(\cdot | \theta)}[c(V)]
         & = \frac{1}{\varepsilon} \bbE \left[(c \circ f)(\theta + \varepsilon Z) Z^\top\right]                                                                \\
        J_\theta \bbE_{\widehat{p}_\varepsilon^\odot(\cdot | \theta)}[c(V)]
         & = \frac{1}{\varepsilon \theta} \odot \bbE \left[(c \circ f)\left(\theta \odot e^{\varepsilon Z - \varepsilon^2 \onevector / 2}\right) Z^\top\right] \\
        J_\theta \bbE_{\widehat{p}_\Omega^{\FW}(\cdot | \theta)}[c(V)]
         & = \sum_{v \in \calV} c(v) \nabla_\theta \widehat{p}_\Omega^{\FW}(v|\theta)^\top
    \end{align*}
    At the moment, \texttt{InferOpt.jl} only handles the case where~$c$ is a fully-defined function without free parameters.
    In the near future, we will add support for the case where~$c$ is itself an ML layer with learnable weights.
\end{remark}

\section{Learning by imitation} \label{sec:imitation}

We now move on to learning by imitation, where additional information is used to guide the training procedure.
For each input sample~$x^{(i)}$, we assume we have access to a \emph{target}~$\bar{t}^{(i)}$.
In that case, Equation~\eqref{eq:loss_minimization} simplifies as
\begin{equation}\label{eq:pb_learning_by_imitation}
    \min_w \frac{1}{N} \sum_{i=1}^N \calL\Big(f\big(\varphi_w(x^{(i)})\big), \bar{t}^{(i)}\Big),
\end{equation}
and we can see that the loss takes the target as an additional argument.
In this section, we introduce imitation losses that are well-suited to hybrid ML-CO pipelines, and explain how to compute their gradients.
As in Section~\ref{sec:experience}, we only consider a single input~$x$, and we write losses as~$\calL(\theta, \bar{t})$.

\subsection{A loss that takes the optimization layer into account} \label{sec:loss_decomp}

There are two main kinds of target.
The first one is a good quality solution~$\bar{t} = \bar{y}$.
The second one is the true objective direction~$\bar{\theta}$, from which we can also deduce~$\bar{y} = f(\bar{\theta})$, so that~$\bar{t} = (\bar{\theta}, \bar{y})$.
When learning by imitation, it is tempting to focus only on reproducing the targets, but this would be misguided.
To explain why, we revisit the pipeline of Equation~\eqref{eq:hybrid_pipeline}.

Remember that we may have access to the true objective direction~$\bar{\theta}$ during training, but at prediction time, the CO oracle~$f$ is applied to the encoder output~$\theta = \varphi_w(x)$ instead.
Minimizing a naive square loss like~$\lVert \varphi_w(x) - \bar{\theta}\rVert^2$ completely neglects the asymmetric impacts of the prediction errors on~$\theta$: for example, overestimating or underestimating~$\theta$ may have very different consequences on the quality of the downstream solution.
That is why, according to \textcite{elmachtoubSmartPredictThen2022}, we need a loss function that takes the optimization step into account.
The same holds true when we have access to a precomputed solution~$\bar{y}$.
\textcite{berthetLearningDifferentiablePerturbed2020} present experiments showing that the naive square loss~$\lVert \widehat{f}(\varphi_w(x)) - \bar{y}\rVert^2$ performs poorly compared with more refined approaches.
Our own numerical findings (Section~\ref{sec:applications}) support their conclusion.

\medskip

To sum up, we want a loss that does not neglect the optimization step.
Let~$y$ temporarily denote the output of our pipeline.
When surveying the literature, we realized that most flavors of imitation learning use losses that combine the same components:
\begin{equation} \label{eq:loss_aux}
    \calL_{\ell, \Omega}^{\text{aux}}(\theta, \bar{t}, y)
    = \underbrace{\ell(y, \bar{t})}_{\text{base loss}} + \underbrace{\theta^\top(y - \bar{y})}_{\substack{\text{gap between} \\ \text{$y$ and~$\bar{y}$ for the} \\ \text{CO problem~\eqref{eq:co_oracle}}}} - \underbrace{\left(\Omega(y) - \Omega(\bar{y})\right)}_{\text{regularization term}}
\end{equation}
Here is another way to write it:
\begin{equation*}
    \calL_{\ell, \Omega}^{\text{aux}}(\theta, \bar{t}, y)
    = \underbrace{\ell(y, \bar{t})}_{\text{base loss}}
    + \underbrace{\left(\theta^\top y - \Omega(y)\right) - \left(\theta^\top \bar{y} - \Omega(\bar{y})\right)}_{\substack{\text{gap between~$y$ and~$\bar{y}$} \\ \text{for the regularized CO problem~\eqref{eq:regularized_co_layer}}}}
\end{equation*}
The base loss~$\ell(y, \bar{t})$ is similar in spirit to the cost function~$c(y)$ from Section~\ref{sec:experience}.
But it is the gap term that truly makes it possible for the optimization problem to play a role in the loss.
Indeed, minimizing the gap encourages the (regularized) CO problem to output a solution~$y$ that is close to the target~$\bar{y}$.

Putting these components together yields a linear function of~$\theta$, and we can remove the dependency in~$y$ by maximizing over~$y$:
\begin{equation} \label{eq:loss_decomposition}
    \calL_{\ell, \Omega}^{\text{gen}}(\theta, \bar{t}) =  \max_{y \in \dom(\Omega)} \calL_{\ell, \Omega}^{\text{aux}}(\theta, \bar{t}, y) = \max_{y \in \dom(\Omega)} \left[\ell(y, \bar{t}) + \theta^\top(y - \bar{y}) - \left(\Omega(y) - \Omega(\bar{y})\right) \right].
\end{equation}
The following result justifies why this is an interesting loss.

\begin{proposition}[Properties of the generic loss for learning by imitation] \label{prop:properties_loss_imitation}
    The function~$\calL_{\ell, \Omega}^{\text{gen}}(\theta, \bar{t})$ is convex with respect to~$\theta$, and a subgradient is given by
    \begin{equation} \label{eq:subgradient_loss_imitation}
        \Big(\argmax_{y \in \dom(\Omega)} \calL_{\ell, \Omega}^{\text{aux}}(\theta, \bar{t}, y)\Big) - \bar{y} \quad \in \quad \partial_\theta \calL_{\ell, \Omega}^{\text{gen}}(\theta, \bar{t}).
    \end{equation}
\end{proposition}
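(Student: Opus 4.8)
The plan is to exploit the affine structure of $\calL_{\ell, \Omega}^{\text{aux}}(\theta, \bar{t}, y)$ as a function of $\theta$ for each fixed feasible $y$. Reading off Equation~\eqref{eq:loss_aux}, this map equals $\theta^\top(y - \bar{y})$ plus the term $\ell(y, \bar{t}) - \Omega(y) + \Omega(\bar{y})$, which does not involve $\theta$; hence it is affine in $\theta$. Consequently, by Equation~\eqref{eq:loss_decomposition}, $\calL_{\ell, \Omega}^{\text{gen}}(\cdot, \bar{t})$ is a pointwise supremum over $y \in \dom(\Omega)$ of affine functions of $\theta$. A pointwise supremum of affine (in particular convex) functions is convex, which proves the first claim; this works even as an extended-real-valued statement, so we need not separately argue that the supremum is finite.

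For the subgradient, I would argue directly from the definition rather than invoke Danskin's theorem. Fix $\theta$ and pick $y^\star \in \argmax_{y \in \dom(\Omega)} \calL_{\ell, \Omega}^{\text{aux}}(\theta, \bar{t}, y)$, so that $\calL_{\ell, \Omega}^{\text{gen}}(\theta, \bar{t}) = \calL_{\ell, \Omega}^{\text{aux}}(\theta, \bar{t}, y^\star)$. For any $\theta' \in \bbR^d$, since $y^\star \in \dom(\Omega)$ is feasible in the maximization defining $\calL_{\ell, \Omega}^{\text{gen}}(\theta', \bar{t})$, and using the affine decomposition above,
\begin{equation*}
    \calL_{\ell, \Omega}^{\text{gen}}(\theta', \bar{t}) \geq \calL_{\ell, \Omega}^{\text{aux}}(\theta', \bar{t}, y^\star) = \calL_{\ell, \Omega}^{\text{aux}}(\theta, \bar{t}, y^\star) + (\theta' - \theta)^\top(y^\star - \bar{y}) = \calL_{\ell, \Omega}^{\text{gen}}(\theta, \bar{t}) + (\theta' - \theta)^\top(y^\star - \bar{y}).
\end{equation*}
This is precisely the subgradient inequality, so $y^\star - \bar{y} \in \partial_\theta \calL_{\ell, \Omega}^{\text{gen}}(\theta, \bar{t})$, which is Equation~\eqref{eq:subgradient_loss_imitation}.

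Since the integrand is affine in $\theta$, nothing beyond these two observations is needed, and I do not expect a real obstacle. The one point to watch is the existence of the maximizer $y^\star$: attainment of the $\max$ in Equation~\eqref{eq:loss_decomposition} is implicit in the proposition (it writes $\argmax$), and it holds in the concrete instantiations because $\dom(\Omega)$ is then compact -- e.g. $\dom(\Omega) \subseteq \conv(\calV)$ -- with $\ell(\cdot, \bar{t})$ continuous and $\Omega$ lower semicontinuous; in the multiplicative-perturbation case, where $\dom(\Omega_\varepsilon^\odot) \not\subseteq \conv(\calV)$, one would additionally use coercivity of $\Omega_\varepsilon^\odot$ to guarantee the supremum is attained. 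With that caveat handled, the proof reduces to the displayed chain of (in)equalities.
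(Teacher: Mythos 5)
Your proof is correct and follows essentially the same route as the paper: convexity is obtained by writing $\theta \longmapsto \calL_{\ell, \Omega}^{\text{gen}}(\theta, \bar{t})$ as a pointwise supremum of functions affine in $\theta$, and the subgradient is the slope $y^\star - \bar{y}$ associated with a maximizer. The only difference is that you verify the subgradient inequality directly rather than citing Danskin's theorem as the paper does, which makes the step self-contained; your caveat about attainment of the maximum in $\dom(\Omega)$ is a sensible precaution that the paper leaves implicit.
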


\begin{proof}
    As a pointwise maximum of affine functions,~$\theta \longmapsto \calL_{\ell, \Omega}^{\text{gen}}(\theta, \bar{t})$ is convex.
    Its subgradient is obtained using Danskin's theorem \parencite{danskinTheoryMaxMinIts1967}.
\end{proof}

The idea is that solving~$\argmax_{y \in \dom(\Omega)} \calL_{\ell, \Omega}^{\text{aux}}(\theta, \bar{t}, y)$ should not be much harder than the regularized CO problem~\eqref{eq:regularized_co_layer}.
Therefore, using such a loss function dispenses us from differentiating through the probabilistic CO layer: most of the time, we only need to compute the layer output in order to obtain a loss subgradient for free.

\subsection{Collection of losses for learning by imitation}

Several prominent loss functions from the literature are special cases of our decomposition~\eqref{eq:loss_decomposition}: we gather them in Table~\ref{tab:decomposition}.
Code sample~\ref{code:losses} clarifies their construction, while Code sample~\ref{code:loss_operations} displays supported operations.
Finally, the entire program necessary for learning by imitation is shown on Code sample~\ref{code:imitation_learning}.

In Sections~\ref{sec:ssvm},~\ref{sec:spo},~\ref{sec:fyl} and~\ref{sec:loss_generic}, we go over these special cases to explain how to compute each loss and its subgradient using Equation~\eqref{eq:subgradient_loss_imitation}.
They can be skipped without danger.

\begin{remark}
    While the S-SVM and SPO+ losses do not fall within the framework of probabilistic CO layers (due to the absence of regularization), we still include them for benchmarking purposes.
\end{remark}

\begin{table}
    \centering
    \begin{tabular}{@{}cccccc@{}}
        \toprule
        \textbf{Method} & \textbf{Notation}            & \textbf{Target}            & \textbf{Base loss}                & \textbf{Regul.} & \textbf{Loss formula}                                                                                           \\ \midrule
        S-SVM           & ~$\calL_\ell^{\text{S-SVM}}$ & ~$\bar{y}$                 & ~$\ell(y,\bar{y})$                & No              & ~$\displaystyle \max_y \ell(y, \bar{y}) + \theta^\top (y - \bar{y})$                                            \\
        SPO+            & ~$\calL^{\text{SPO+}}$       & ~$(\bar{\theta}, \bar{y})$ & ~$\bar{\theta}^\top(\bar{y} - y)$ & No              & ~$\displaystyle \max_y \bar{\theta}^\top (\bar{y} - y) + 2 \theta^\top (y - \bar{y})$                           \\
        FY              & ~$\calL_\Omega^{\text{FY}}$  & ~$\bar{y}$                 & ~$0$                              & Yes             & ~$\displaystyle \max_y \theta^\top (y - \bar{y}) - \left(\Omega(y) - \Omega(\bar{y})\right)$                    \\
        Generic         & ~$\calL_{\ell, \Omega}^{\text{gen}}$      & ~$\bar{t}$                 & ~$\ell(y, \bar{t})$               & Yes             & ~$\displaystyle \max_y \ell(y, \bar{t}) + \theta^\top (y - \bar{y}) - \left(\Omega(y) - \Omega(\bar{y})\right)$ \\ \bottomrule
    \end{tabular}
    \caption{A common decomposition for loss functions in imitation learning}
    \label{tab:decomposition}
\end{table}

\begin{figure}
    \centering
    \begin{minipage}{0.51\linewidth}
        \lstinputlisting[
            caption={Example imitation losses},
            label={code:losses}
        ]{code/losses.jl}
    \end{minipage}
    \hfill
    \begin{minipage}{0.45\linewidth}
        \lstinputlisting[
            caption={Supported operations for an imitation loss},
            label={code:loss_operations}
        ]{code/loss_operations.jl}
    \end{minipage}
\end{figure}

\begin{center}
    \begin{minipage}{0.8\textwidth}
        \lstinputlisting[
            caption={Learning with an imitation loss},
            label={code:imitation_learning}
        ]{code/imitation_learning.jl}
    \end{minipage}
\end{center}

\subsubsection{Structured support vector machines} \label{sec:ssvm}

The structured support vector machine (S-SVM) was among the first methods introduced for learning in structured spaces \parencite[Chapter 6]{nowozinStructuredLearningPrediction2010}.
Given a target solution~$\bar{y}$ and an underlying distance function~$\ell(y, \bar{y})$ on~$\calV$, the S-SVM loss is computed as follows:
\begin{equation}
    \calL_\ell^{\text{S-SVM}}(\theta, \bar{y}) = \max_{y \in \calV} \{ \ell(y, \bar{y}) + \theta^\top(y - \bar{y}) \}.
\end{equation}
The subgradient formula~\eqref{eq:subgradient_loss_imitation} becomes
\begin{equation*}
    \argmax_{y \in \calV} \{ \ell(y, \bar{y}) + \theta^\top(y - \bar{y}) \} - \bar{y} \quad \in \quad \partial_\theta \calL_\ell^{\text{S-SVM}}.
\end{equation*}
Note that due to the presence of~$\ell$, computing a subgradient requires an auxiliary solver that is different from the linear oracle~$f$.
This is why we do not illustrate the S-SVM with a code sample.
In \texttt{InferOpt.jl}, we only implement this auxiliary solver for the unit simplex, in the case where~$\ell$ is the Hamming distance.
However, we also provide a generic layer where the user can plug in the relevant auxiliary solver.

\subsubsection{Smart \enquote{predict, then optimize}} \label{sec:spo}

The smart \enquote{predict, then optimize} (SPO) paradigm is applicable when the true objective direction~$\bar{\theta}$ is known (remember that in this case, we have~$\bar{y} = f(\bar{\theta})$).
\textcite{elmachtoubSmartPredictThen2022} define the SPO+ loss function as follows:
\begin{align} \label{eq:spoplus_loss}
    \calL^{\text{SPO+}}(\theta, \bar{\theta})
     & = (2\theta - \bar{\theta})^\top f(2 \theta - \bar{\theta}) + (\bar{\theta} - 2\theta)^\top \bar{y}           \\
     & = \max_{y \in \calV} \left\{ \bar{\theta}^\top (\bar{y} - y) + 2\theta^\top(y - \bar{y}) \right\}. \nonumber
\end{align}
It can be seen as a special case of S-SVM.
But this time, computing the loss and its subgradient only requires calling~$f$ twice:
\begin{equation*}
    2f(2 \theta - \bar{\theta}) - 2\bar{y} \quad \in \quad \partial_\theta \calL^{\text{SPO+}}(\theta, \bar{\theta}).
\end{equation*}

\subsubsection{Fenchel-Young losses} \label{sec:fyl}

The framework of Fenchel-Young losses is built on the theory of convex conjugates, in particular the Fenchel-Young inequality \parencite{blondelLearningFenchelYoungLosses2020}.
Starting from a target solution~$\bar{y}$ and a regularization~$\Omega$, a loss is constructed as follows:
\begin{align} \label{eq:fenchel_young_loss}
    \calL_{\Omega}^{\text{FY}}(\theta, \bar{y})
     & = \Omega^*(\theta) + \Omega(\bar{y}) - \theta^\top \bar{y}                                                                         \\
     & = \max_{y \in \conv(\calV)} \left(\theta^\top y - \Omega(y)\right) - \left(\theta^\top \bar{y} - \Omega(\bar{y}) \right) \nonumber
\end{align}
This time, the loss and subgradient require access to~$\widehat{f}_\Omega$:
\begin{equation*}
    \widehat{f}_{\Omega}(\theta) - \bar{y} \quad \in \quad \partial_\theta \calL_{\Omega}^{\text{FY}}(\theta, \bar{y}).
\end{equation*}
As can be inferred from the expression above, there are deep connections between Fenchel-Young losses and the regularization paradigm of Section~\ref{sec:regularization}.
In particular, it is also possible to use implicit regularization by perturbation \parencite{berthetLearningDifferentiablePerturbed2020}.
The fact that we cannot compute~$\Omega_\varepsilon^+(y)$ or~$\Omega_\varepsilon^\odot(y)$ is not a real obstacle: since those terms do not depend on~$\theta$, we can just drop them from the loss during training.
We end up with the following estimators for the loss and its subgradient:
\begin{align*}
    \calL_{\Omega_\varepsilon^+}^{\text{FY}}(\theta, \bar{y})
     & = F_\varepsilon^+(\theta) - \theta^\top \bar{y}     & \widehat{f}_\varepsilon^+(\theta) - \bar{y}                       & \in \partial_\theta \calL_{\Omega_\varepsilon^+}^{\text{FY}}(\theta, \bar{y})     \\
    \calL_{\Omega_\varepsilon^\odot}^{\text{FY}}(\theta, \bar{y})
     & = F_\varepsilon^\odot(\theta) - \theta^\top \bar{y} & \widehat{f}_\varepsilon^{\odot \mathrm{scaled}}(\theta) - \bar{y} & \in \partial_\theta \calL_{\Omega_\varepsilon^\odot}^{\text{FY}}(\theta, \bar{y})
\end{align*}

\subsubsection{Generic imitation loss} \label{sec:loss_generic}

Of course, it is tempting to fill in the blanks of Table~\ref{tab:decomposition} by combining every single term of the loss decomposition~\eqref{eq:loss_decomposition}.
To the best of our knowledge, this has not been done before in the literature, but there is no theoretical obstacle.

If we use this generic loss together with regularization, then it is interesting to remark that~$\calL_{\ell, \Omega}^{\text{gen}}(\theta, \bar{t})$ acts as a convex upper bound on the base loss~$\ell(\widehat{f}_\Omega(\theta), \bar{t})$.
Indeed, since~$\bar{y}$ is a worse solution than~$\widehat{f}_\Omega(\theta)$ for~\eqref{eq:regularized_co_layer}, we have
\begin{align*}
    \ell(\widehat{f}_\Omega(\theta), \bar{t})
     & \leq \ell(\widehat{f}_\Omega(\theta), \bar{t}) + \left[ \theta^\top \widehat{f}_\Omega(\theta) - \Omega(\widehat{f}_\Omega(\theta))\right] - \left[\theta^\top \bar{y} - \Omega(\bar{y}) \right] \\
     & \leq \max_{y \in \conv(\calV)} \left( \ell(y, \bar{t}) + \theta^\top (y - \bar{y}) - \left(\Omega(y) - \Omega(\bar{y})\right) \right) = \calL_{\ell, \Omega}^{\text{gen}}(\theta, \bar{t}).
\end{align*}
Therefore, our generic loss can be seen as a crossover between the Fenchel-Young loss and a problem-specific base loss.
It is not yet implemented in \texttt{InferOpt.jl}, and we leave its thorough testing for future work.

\section{Applications} \label{sec:applications}

In this section, we define specific ML-CO pipelines, in both learning settings (by experience and by imitation), to address four problems: our shortest path problem on Warcraft maps, a stochastic vehicle scheduling problem, a single-machine scheduling problem, and a two-stage stochastic minimum weight spanning tree problem.
The first one stems from the ML community.
The other three are classics from the field of operations research.
They illustrate the idea of approximating hard optimization problems with simpler ones using ML-CO pipelines.


\subsection{Shortest paths on Warcraft maps}\label{subsec:warcraft_appli}

First we come back to our guiding example of Section~\ref{sec:warcraft}.
Our aim is to illustrate the various learning settings introduced in this paper, and to evaluate their relative performance.
We do so with two kinds of shortest path (SP) oracles.
The first one uses Dijkstra's algorithm.
The second one uses the Ford-Bellman algorithm with a bounded number of iterations.

\subsubsection{Experimental setting}\label{subsubsec:experimental_setting_warcraft}

In every experiment presented here, we only consider a sub-dataset, made up of $1\%$~of the original Warcraft dataset from \textcite{vlastelicaDifferentiationBlackboxCombinatorial2020}.
It contains $200$~samples, or maps, which we split into $80$~training samples, $100$~validation samples (for hyperparameter tuning) and $20$~test samples (for performance evaluation).
For each learning setting, the train, test and validation sets remain the same, and we individually tune a subset of the hyperparameters stated in Table~\ref{tab:hyperparameters_warcraft_overview}.
Our motivation for reducing the dataset is to show that we can still obtain convincing results with a limited amount of computation.

We use the \texttt{Metalhead.jl} package to build a truncated ResNet18 CNN, \texttt{Flux.jl} to train our pipelines with the Adam optimizer \parencite{kingmaAdamMethodStochastic2015}, and \texttt{GridGraphs.jl} to compute shortest paths.
Our code is available in the \texttt{WarcraftShortestPaths.jl}\footnote{\url{https://github.com/LouisBouvier/WarcraftShortestPaths.jl}} repository.
The experiments are conducted on a MacBook Pro with 2,3 GHz Intel Core i9, 8 cores and 16 Go 2667 MHz DDR4 RAM.

\begin{table}[ht]
    \centering
    \begin{tabular}{cc}
        \toprule
        \textbf{Hyperparameter} & \textbf{Description}                              \\ \midrule
        $\texttt{epsilon}$      & Scale of the noise for perturbation. \\
        $\texttt{nb\_samples}$  & Number of noise samples $M$ for perturbation.     \\
        $\texttt{batch\_size}$  & Size of the batches to compute gradients.         \\
        $\texttt{lr\_start}$    & Starting learning rate.                           \\
        \bottomrule
    \end{tabular}
    \caption{Hyperparameters for learning Warcraft shortest paths.}
    \label{tab:hyperparameters_warcraft_overview}
\end{table}

\medskip

To obtain a precise learning setting, we need to define:
\begin{enumerate}
    \item The combinatorial problem we need to solve, along with an appropriate oracle.
    \item The probabilistic CO layer used to wrap said oracle.
    \item The data we have at our disposal to train our pipeline.
    \item The loss function we want to minimize.
\end{enumerate}
No matter the setting, the data always contains a list of RGB map images.
When we learn by experience, we also have access to a black box cost function, which evaluates paths based on the true cell costs (see the beginning of Section~\ref{sec:experience}).
On the other hand, when we learn by imitation, we add targets to the maps (as defined in Section \ref{sec:imitation}).
The target in our case always includes the optimal path, with or without the true cell costs.

All those ingredients are detailed in Table~\ref{tab:learning_setting_warcraft} for each learning setting we consider.
The names given in the first column are reused in the legends of Figure~\ref{fig:gaps_warcraft}.
Most of the probabilistic CO layers considered in this paper do not prevent the objective vector $\theta$ from changing its sign, and the same goes for the losses
As a result, we need oracles able to accommodate negative cell costs.
That is why we use the Ford-Bellman algorithm, while limiting the number of iterations to the number of nodes in the grid graph (to ensure termination even with negative cycles).
Our multiplicative perturbation is the only approach that preserves non-negative costs.
It enables us to apply Dijkstra's algorithm, which has a smaller time and space complexity.

\begin{table}[ht]
    \setlength{\tabcolsep} {0.3cm}
    \scalebox{0.68} {
        \centering
        \begin{tabular}{lllll}
            \toprule
            \textbf{Seeting name}                                & \makecell{\textbf{CO problem}             \\ \textbf{(CO oracle)}}        & \textbf{Probabilistic CO layer}    & \makecell{\textbf{Exp./Imit.} \\ \textbf{Target}} & \textbf{Loss}      \\
            \midrule
            Cost perturbed multiplicative noise          & \makecell{SP with non-negative costs      \\ (Dijkstra)}  & \makecell{Multiplicative\\ perturbation}  & \makecell{Experience\\ No target}          & Perturbed cost     \\
            \midrule
            Cost perturbed additive noise                & \makecell{SP on an extended acyclic graph \\ (Ford-Bellman)}  & \makecell{Additive \\ perturbation}  & \makecell{Experience\\ No target}          & Perturbed cost     \\
            \midrule
            Cost regularized half square norm            & \makecell{SP on an extended acyclic graph \\ (Ford-Bellman)} & Half square norm           & \makecell{Experience\\ No target}         & Regularized cost   \\
            \midrule
            SPO+                                         & \makecell{SP on an extended acyclic graph \\ (Ford-Bellman)} & No regularization          & \makecell{Imitation\\ Cost and path}         & SPO+ loss          \\
            \midrule
            MSE perturbed multiplicative noise           & \makecell{SP with non-negative costs      \\ (Dijkstra)} & \makecell{Multiplicative\\ perturbation} & \makecell{Imitation\\ Path}         & Mean squared error \\
            \midrule
            MSE regularized half square norm             & \makecell{SP on an extended acyclic graph \\ (Ford-Bellman)} & Half square norm           & \makecell{Imitation\\ Path}         & Mean squared error \\
            \midrule
            Fenchel-Young perturbed multiplicative noise & \makecell{SP with non-negative costs      \\ (Dijkstra)} & \makecell{Multiplicative\\ perturbation} & \makecell{Imitation\\ Path}         & Fenchel-Young      \\
            \midrule
            Fenchel-Young perturbed additive noise       & \makecell{SP on an extended acyclic graph \\ (Ford-Bellman)} & \makecell{Additive\\ perturbation} & \makecell{Imitation\\ Path}         & Fenchel-Young      \\
            \midrule
            Fenchel-Young regularized half square norm   & \makecell{SP on an extended acyclic graph \\ (Ford-Bellman)} & Half square norm           & \makecell{Imitation\\ Path}         & Fenchel-Young      \\
            \bottomrule
        \end{tabular}
    }
    \caption{Learning settings for Warcraft shortest paths.}
    \label{tab:learning_setting_warcraft}
\end{table}

\subsubsection{Results}

In Figure~\ref{fig:gaps_warcraft}, we show the average train (Figure~\ref{fig:gaps_warcraft_train}) and test (Figure~\ref{fig:gaps_warcraft_test}) optimality gaps, computed using the true cell costs.
We compare all the settings detailed in Table~\ref{tab:learning_setting_warcraft}, with the exception of MSE base loss + additive noise (we could not get satisfactory results using only 80 training samples).
To quantify training effort, instead of counting epochs (\textit{i.e.} passes through the dataset), we use the number of optimizer calls, because these calls are the truly time-consuming part.
This aims at comparing learning settings which involve different amounts of computation per gradient step.
For instance, using SPO+, we need 2 optimizer calls to compute the loss gradient for one sample.
On the other hand, we need $M$ optimizer calls if we choose Fenchel-Young perturbed additive noise.

\medskip

When learning by imitation, SPO+ reaches almost zero average gap both on train and test sets after very few optimizer calls, even though we only kept $1\%$ of the initial dataset.
This impressive result can be understood since, with SPO+, we have access to the true cell costs during training, and we leverage the problem structure within the loss.

Assuming we only have access to target paths, we obtain better results with Fenchel-Young losses than with MSE losses.
The train and test average gaps are lower than $5\%$ with the former, and we observe good generalization performance.
This may be explained by the use of the optimization problem in the Fenchel-Young loss definition.
On the contrary, in the MSE setting, although we have access to target paths, we only seek to imitate them without truly accounting for solution cost.

\medskip

Perhaps surprisingly, we also manage to learn by experience with our small sub-dataset.
Indeed, using the techniques introduced in Section~\ref{sec:experience}, we reach $7\%$ average test gaps in the cost perturbed multiplicative noise setting, which is better than learning by imitation with an MSE loss.
To the best of our knowledge, it is the first time that learning by experience (as defined in Section~\ref{sec:experience}) is combined with CNNs.

\begin{figure}[ht]
    \begin{subfigure}{\textwidth}
        \makebox[\textwidth][c]{
            \includegraphics[width=1.3\linewidth,trim={0 40 0 0},clip]{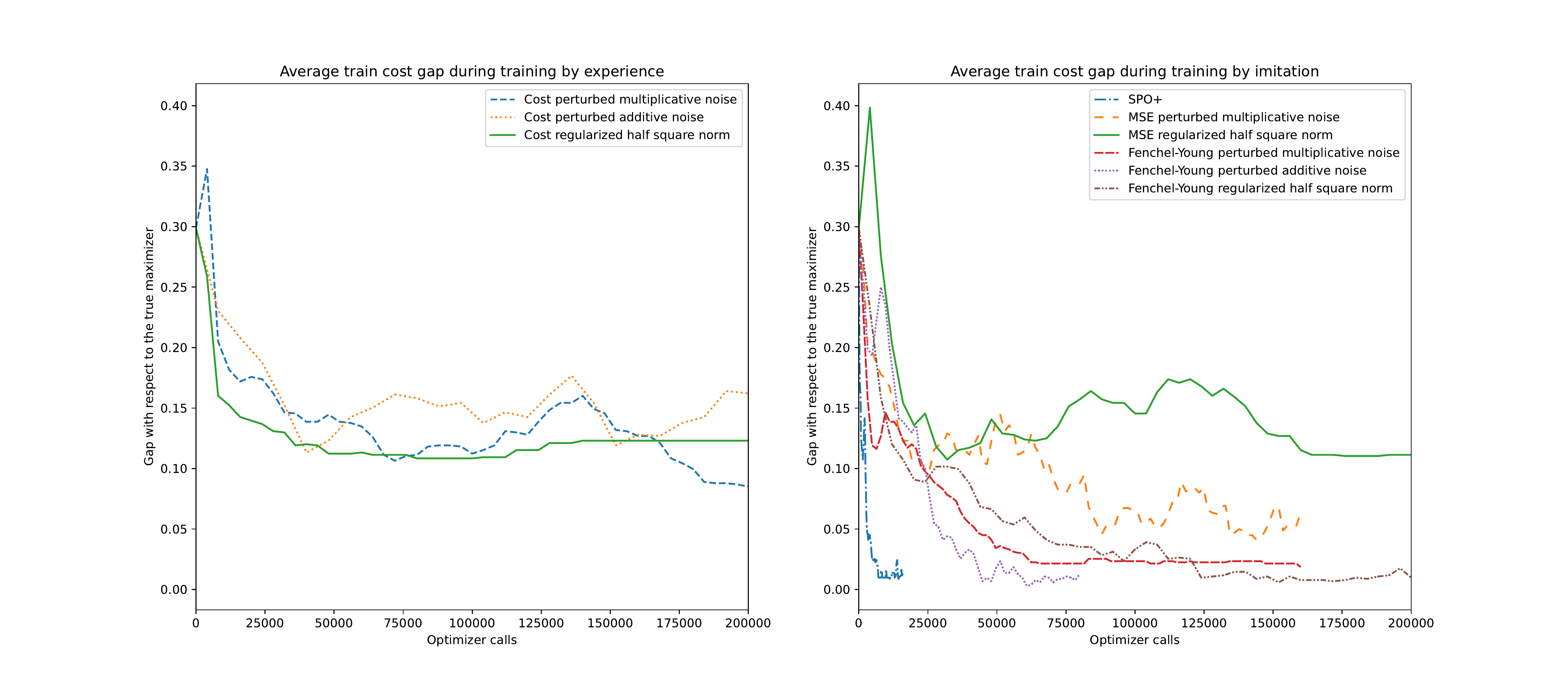}
        }
        \caption{Average train gap}
        \label{fig:gaps_warcraft_train}
    \end{subfigure}
    \begin{subfigure}{\textwidth}
        \makebox[\textwidth][c]{
            \includegraphics[width=1.3\linewidth,trim={0 40 0 0},clip]{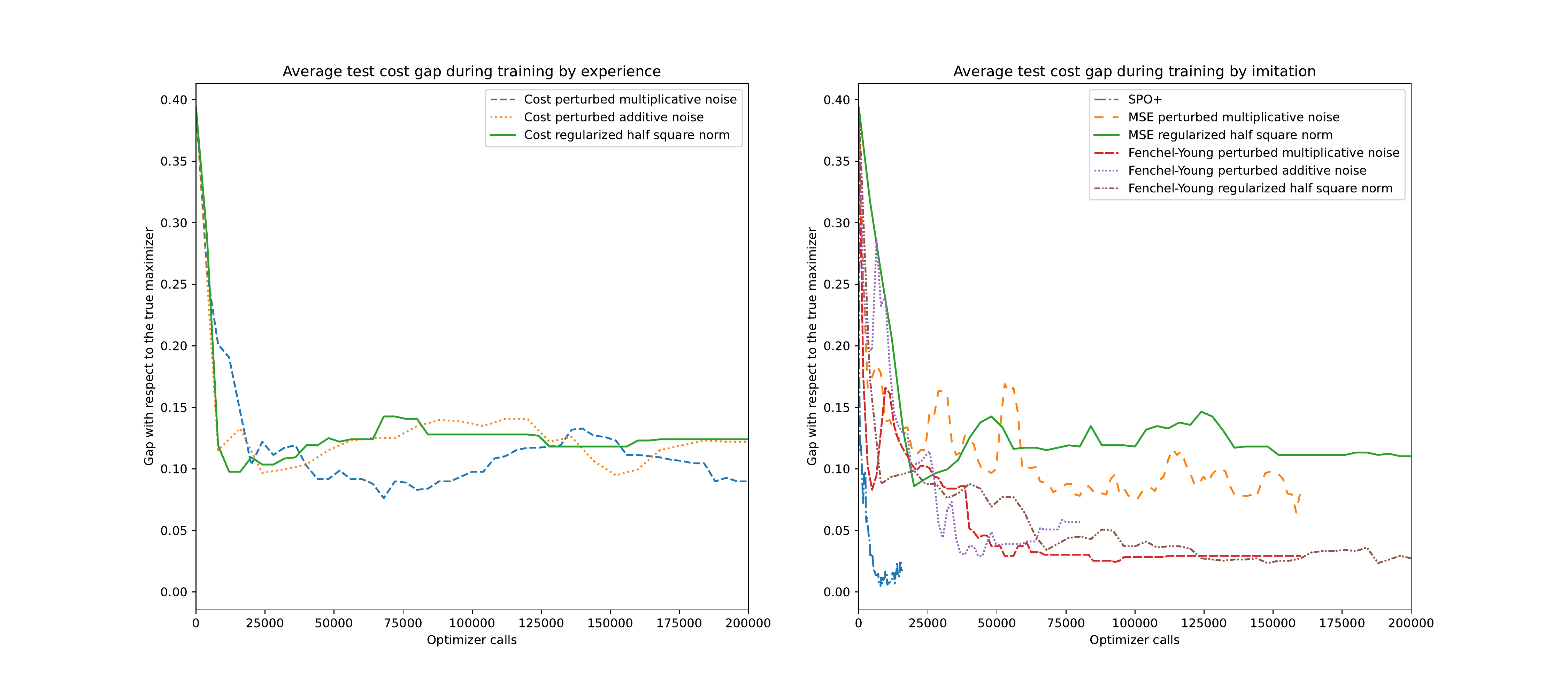}
        }
        \caption{Average test gap}
        \label{fig:gaps_warcraft_test}
    \end{subfigure}
    \caption{Train and test optimality gaps along training in the Warcraft application}
    \label{fig:gaps_warcraft}
\end{figure}

\subsection{Approximating hard optimization problems}

Solving hard optimization problems is one of the main applications of hybrid CO-ML pipelines.
The principle is to build a pipeline which approximates a hard combinatorial problem, for which we do not have an efficient algorithm.
The approximation relies on a similar but easier problem, for which an efficient algorithm exists.
This \enquote{easy} algorithm is then used as a CO layer.
In order to obtain the best possible approximation (in terms of closeness to the original problem), this CO layer is complemented with encoding and decoding layers, whose weights can be learned.

In the remaining of this section, we use \texttt{InferOpt.jl} in this framework, and apply it to three different problems.
In Section~\ref{subsec:StochasticVSP}, we focus on the \emph{stochastic vehicle scheduling problem} with similar experiments as in \textcite{parmentierLearningStructuredApproximations2021} for the learning by imitation, and \textcite{parmentierLearningApproximateIndustrial2021} for the learning by experience.
Then, in Section~\ref{subsec:scheduling}, we study the \emph{single machine scheduling problem} and compare our results to those in \textcite{parmentierLearningSolveSingle2021}.
Finally, in Section~\ref{subsec:spanning_tree}, we look at the \emph{two-stage stochastic minimum weight spanning tree problem}, and demonstrate the use of a GNN in combination with \texttt{InferOpt.jl}.
Source code for these three applications can be found in their respective satellite packages.

\subsection{Stochastic vehicle scheduling problem}\label{subsec:StochasticVSP}

We use \texttt{InferOpt.jl} to solve the \emph{stochastic vehicle scheduling problem} (StoVSP), by learning a transformation that approximates its instances as instances of the easier to solve \emph{vehicle scheduling problem} (VSP). Source code can be found in the satellite package \texttt{StochasticVehicleScheduling.jl}\footnote{\url{https://github.com/BatyLeo/StochasticVehicleScheduling.jl}}.

\subsubsection{Problem formulation}

\paragraph{Vehicle scheduling problem}

The (deterministic) VSP consists in assigning vehicles to cover a set of scheduled tasks, while minimizing the total cost. Let~$\bar V$ be the set of tasks. Each task~$v\in \bar V$ has a scheduled beginning time~$t_v^b$, and a scheduled end time~$t_v^e$, such that~$t_v^e > t_v^b$. We denote~$t^{tr}_{(u, v)}$ the travel time from task~$u$ to task~$v$. A task~$v$ can be scheduled after another task~$u$ only if we can reach it in time, before it starts:
\begin{equation}\label{eq:task_chaining}
    t_v^b \geq t_u^e + t^{tr}_{(u, v)}
\end{equation}

An instance of VSP can be modeled with a directed graph~$D = (V, A)$, with~$V = \bar V\cup\{o, d\}$, and~$o$,~$d$ origin and destination dummy nodes. For all task~$v\in V$,~$(o, v)$ and~$(v, d)$ are arcs in~$A$. Additionally, there is an arc between two tasks~$u$ and~$v$ only if (\ref{eq:task_chaining}) is satisfied. The resulting graph~$D$ is acyclic.

A solution of the VSP problem is a set of~$o-d$ paths partitioning~$D$, such that all tasks are covered exactly once. The objective is to minimize the sum of path edge costs. This can be formulated as an LP (see Appendix~\ref{sec:appendix_deterministicVSP}), and can be solved either using a flow algorithm, or using a general purpose LP solver.

\paragraph{Stochastic Vehicle Scheduling}

In the stochastic VSP, we consider the same setting as the deterministic version, to which we add the following. Once the scheduling decision is set, we observe random delays, which propagate along vehicle paths. The objective is to minimize the sum of vehicle costs and expected total delay costs.

We consider a finite set of scenarios~$s\in S$. For each task~$v\in \bar V$, we denote~$\gamma_v^s$ the intrinsic delay of~$v$ in scenario~$s$, and~$d_v^s$ its total delay. We also denote~$\delta_{u, v}^s$ the slack between tasks~$u$ and~$v$. These quantities follow the delay propagation equation when~$u$ and~$v$ are consecutively operated by the same vehicle:
\begin{equation}
d_v^s = \gamma_v^s + \max(d_u^s - \delta_{u, v}^s, 0)
\end{equation}

This leads to a much more difficult problem to solve. In Appendix~\ref{sub:solveStoVSP}, we provide a compact MILP formulation, which enables to easily solve optimally instances with up to 25 tasks using commercial MILP solvers.

\subsubsection{Datasets}

To generate our instance datasets, we use a generator similar to the one used by \textcite{parmentierLearningApproximateIndustrial2021}. More details are given in Appendix~\ref{sub:VSPdatagen}. We consider 3 training/validation datasets used to train our models. Each dataset contains 100 instances, and is divided into 50 training instances and 50 validation instances. The first dataset contains only instances with 25 tasks and 10 scenarios, the second one only instances with 50 tasks and 50 scenarios, and the last one only instances with 100 tasks and 50 scenarios. The 25 tasks dataset contains small instances for which we can easily compute optimal solutions to use as target solutions during the learning. For the two other datasets, we instead used a local search heuristic to compute \enquote{good} (non necessarily optimal) solutions of each instance. For testing purposes, we consider several additional test datasets we only use for evaluating the final performances and generalization abilities of our learned models. These test datasets contain larger instances with up to~$1000$ tasks. Datasets content is summarized in Table~\ref{tab:vsp-datasets}.

\begin{table}[H]
    \centering
    \begin{tabular}{ccccccccc}
        \toprule
        \multirow{2}{*}{\textbf{Dataset type}} & \multicolumn{8}{c}{\textbf{Number of tasks}}                                                                         \\ \cline{2-9} 
                                                & \textbf{25} & \textbf{50} & \textbf{100} & \textbf{200} & \textbf{300} & \textbf{500} & \textbf{750} & \textbf{1000} \\ \midrule
        \textbf{Training}                      & 50          & 50          & 50           & 0            & 0            & 0            & 0            & 0             \\ \midrule
        \textbf{Validation}                    & 50          & 50          & 50           & 0            & 0            & 0            & 0            & 0             \\ \midrule
        \textbf{Test}                          & 50          & 50          & 50           & 50           & 50           & 50           & 50           & 50            \\ \bottomrule
    \end{tabular}
    \caption{Summary of the number of instances in each used dataset}
    \label{tab:vsp-datasets}
\end{table}

\subsubsection{InferOpt pipeline}

We use the deterministic VSP as a CO layer in the following pipeline to solve pipeline to solve the stochastic VSP. The goal is to train a model able to generalize well enough to instances it has not seen before, especially bigger instances, for which we cannot compute a good solution in reasonable time.
For this, we consider the pipeline presented in Figure~\ref{fig:vsp-pipeline}.

\begin{figure}[H]
    \centering
    \includegraphics[width=\textwidth]{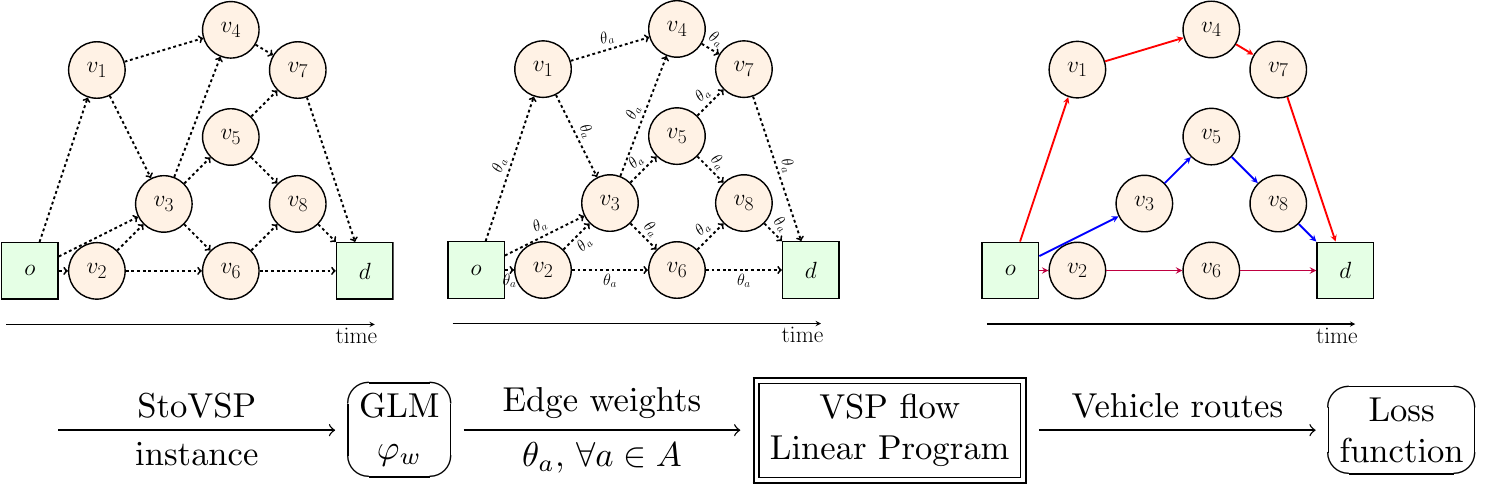}
    \caption{Pipeline for the stochastic VSP}
    \label{fig:vsp-pipeline}
\end{figure}

An instance~$x$ of StoVSP is encoded by computing a vector~$\phi(x, a)$ of 20 features on each arc~$a\in A$. These features contain edge length, slack quantiles and cumulative distribution information. For more details, refer to Appendix~\ref{appendix:vsp-features}.
Instance~$x$ is then given as input to a GLM~$\varphi_w$ with learnable weights~$w$, such that~$\varphi_w(x) = \theta$, with~$\theta_a = w^\top \phi(x, a)$. Then, we use the predicted~$\theta$ as the objective arc weights of the VSP flow linear program to compute the solution paths:

$$
\begin{aligned}
\max_{y} & \,\sum_{a\in A}\theta_a y_a\\
\text{s.t.} & \sum_{a\in\delta^-(v)}y_a = \sum_{a\in\delta^+(v)}y_a, &\forall v\in \bar V & \quad(\text{flow polytope})\\
& \sum_{a\in\delta^-(v)}y_a = 1, &\forall v\in \bar V & \quad \text{(task covering)}\\
& y_a \geq 0,\, & \forall a\in A
\end{aligned}
$$
Training this pipeline means finding parameter vector~$w$ that minimizes the chosen loss.

\subsubsection{Experimental setting}

For each of the three datasets considered, we train two models: we learn the first one by imitation thanks to a Fenchel-Young loss, and the second one by experience thanks to the expected regret.
In both cases, we use a \texttt{PerturbedAdditive} regularization with parameters~$\varepsilon$ and \texttt{nb\_samples}.
We use the \texttt{Flux.jl} Julia library and the Adam optimizer for training. Hyperparameters used for each model/dataset pair can be found in Table~\ref{tab:hyperparameters-vsp}.

\begin{table}[H]
    \centering
    \begin{tabular}{c ccc ccc }
        \toprule
        \multirow{2}{*}{\textbf{Hyperparameters}} & \multicolumn{3}{c}{\textbf{Learn by imitation models}}                                            & \multicolumn{3}{c}{\textbf{Learn by experience models}}                                           \\ \cline{2-7}
        & \multicolumn{1}{c}{25 tasks} & \multicolumn{1}{c}{50 tasks} & 100 tasks & 25 tasks & 50 tasks & 100 tasks \\ \toprule
       ~$\varepsilon$                             & 0.1      & 0.1      & 0.1       & 50       & 100      & 300     \\ \midrule
        \texttt{nb\_samples}                      & 20       & 20       & 20        & 20       & 20       & 20      \\ \midrule
        training epochs                       & 50       & 50       & 50        & 200      & 200      & 200       \\ \midrule
        batch size                      & 1        & 1        & 1         & 1        & 1        & 1         \\
        \bottomrule
    \end{tabular}
    \caption{Hyperparameter for Stochastic Vehicle Scheduling experiments}
    \label{tab:hyperparameters-vsp}
\end{table}

Datasets features are normalized during training by dividing each feature by its standard deviation in the train dataset. At the end of training, the obtained~$w$ is renormalized accordingly. We optimize hyperparameters using metric values on validation datasets. The experiments are conducted on a 2.6 GHz Intel Core i9-11950H, 16 cores, with 64 Go RAM. We faced one practical difficulty during the training: the calibration of hyperparameter~$\varepsilon$ when learning by experience. Indeed, training performance is quite sensible to its value, which lead to a lot of hyperparameter tuning on the validation set. Detailed training plots can be found in Appendix~\ref{appendix:vsp-training-plots}.

\subsubsection{Results}

Finally, we can look at the performance of the learned pipelines on all the different datasets, and compare them with each others. In Figure~\ref{fig:vsp-prediction}, we observe the resulting solution of three different algorithms on the same input instance. Interestingly, the model learned by imitation outputs a different solution than the model learned by experience.

\begin{figure}[H]
    \centering
    \begin{subfigure}[b]{0.32\textwidth}
        \centering
        \includegraphics[width=\textwidth]{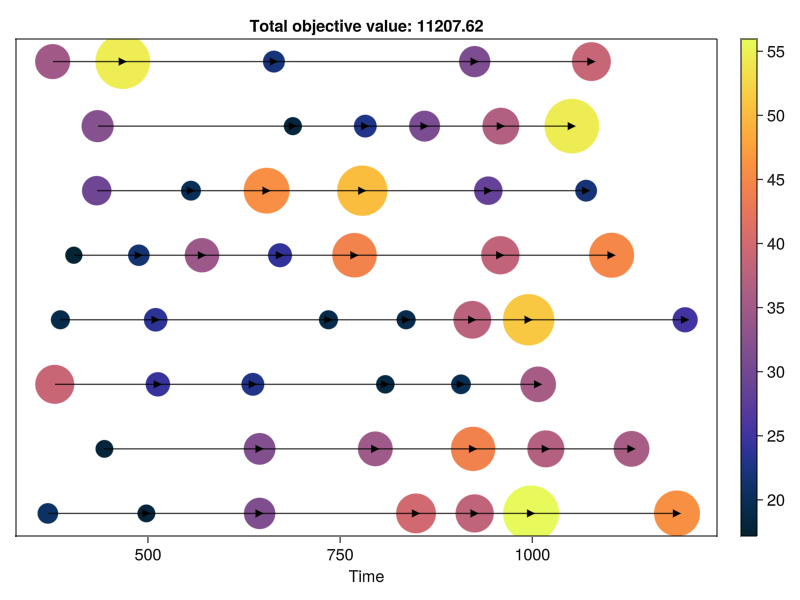}
        \caption{Local search heuristic}
    \end{subfigure}
    \begin{subfigure}[b]{0.32\textwidth}
        \centering
        \includegraphics[width=\textwidth]{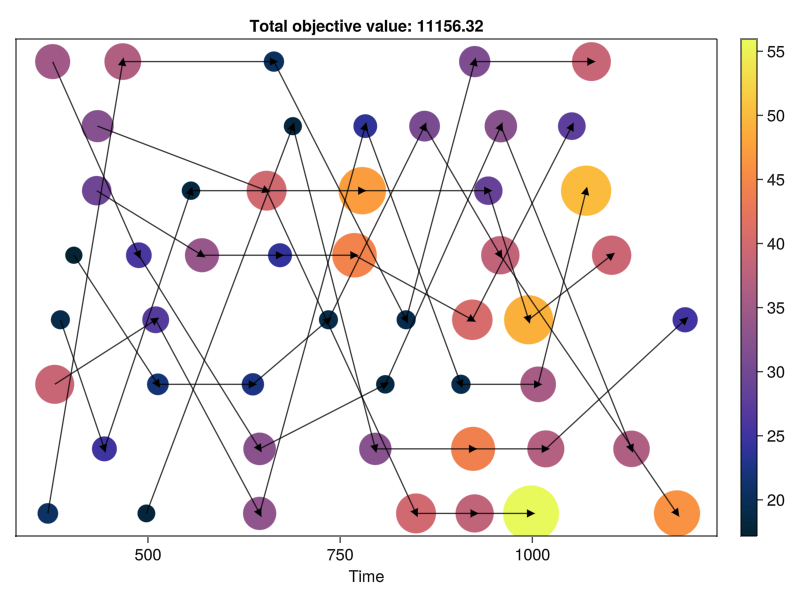}
        \caption{Model learned by imitation}
    \end{subfigure}
    \begin{subfigure}[b]{0.32\textwidth}
        \centering
        \includegraphics[width=\textwidth]{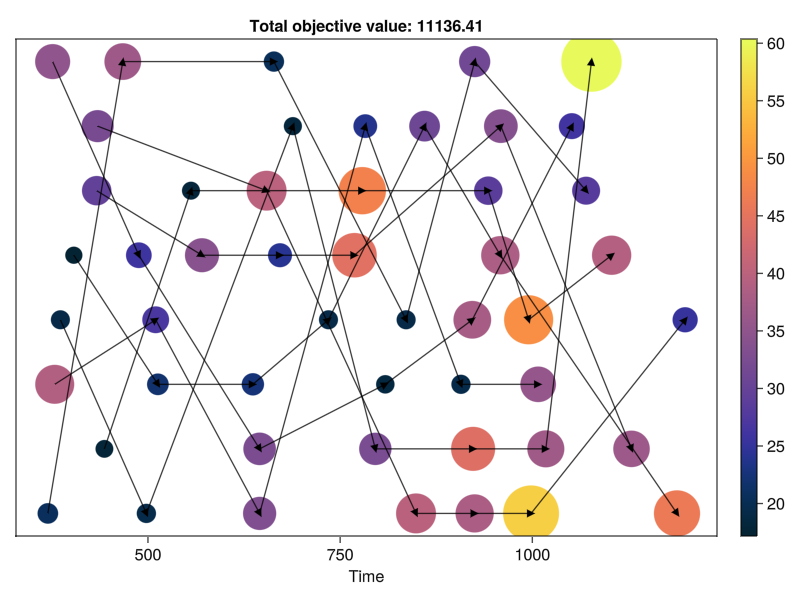}
        \caption{Model learned by experience}
    \end{subfigure}
    \caption{Comparison of predictions of three different models on the same 50-task instance. Each circle represents a task, with color value and size corresponding to its total delay (intrinsic + propagated) in the predicted solution. Tasks are ordered respect to start time along the~$x$ axis. Each arrow path represents a vehicle path in the predicted solution.}
    \label{fig:vsp-prediction}
\end{figure}

In Table~\ref{tab:vsp-time}, we compare the average execution time between the local search heuristic and one of our learned pipelines. We see that, once the model is learned offline, online prediction of a solution through the pipeline is very fast.

\begin{table}[H]
    \centering
    \begin{tabular}{cccccc}
\toprule
\multicolumn{6}{c}{\textbf{Test dataset}} \\ \midrule
\multicolumn{2}{c}{25 tasks} & \multicolumn{2}{c}{50 tasks} & \multicolumn{2}{c}{100 tasks}\\
\midrule
heuristic & learned pipeline & heuristic & learned pipeline & heuristic & learned pipeline\\
\midrule
$0.12s$ & $0.004s$ & $1.47s$ & $0.01s$ & $5.28s$ & $0.043s$\\
\bottomrule
\end{tabular}

    \caption{Learning by imitation: comparison of average execution time on an instance between the local search heuristic (10000 iterations) a learned pipeline}
    \label{tab:vsp-time}
\end{table}

\paragraph{Learning by imitation}

We train three models in the learning by imitation setting, one for each training dataset. Final results on test data are summarized in Tables~\ref{tab:vsp-imitation-gap} and~\ref{tab:vsp-imitation-cost}.
First, we observe the average and maximum cost gaps on test datasets with instances of the same size as in training datasets, between predicted solutions and labeled solutions. We recall that labeled solutions are optimal for 25 tasks, and heuristic (obtained from local search) for 50 and 100 tasks.
That is why gaps should stay positive when predicting on the 25 tasks test dataset, but can be negative when predicting on the two other datasets.
We can see that all three models obtain very good gap values, with less than~$1\%$ on 25 tasks instances on average, and average negative gaps on~$50$ and~$100$ tasks instances, which means better solutions on average than the local search heuristic used as labels for the learning.

\begin{table}[H]
    \centering
    \begin{tabular}{ccccccc}
\toprule
\multirow{3}{*}{\textbf{Train dataset}} & \multicolumn{6}{c}{\textbf{Test dataset}} \\ \cline{2-7}
 & \multicolumn{2}{c}{25 tasks} & \multicolumn{2}{c}{50 tasks} & \multicolumn{2}{c}{100 tasks}\\
\cline{2-7}
 & mean & max & mean & max & mean & max\\
\midrule
25 tasks & $0.68\%$ & $9.46\%$ & $-0.41\%$ & $4.26\%$ & $-1.02\%$ & $2.4\%$\\
\midrule
50 tasks & $0.49\%$ & $3.01\%$ & $-0.46\%$ & $2.34\%$ & $-1.6\%$ & $0.62\%$\\
\midrule
100 tasks & $0.62\%$ & $3.36\%$ & $-0.14\%$ & $9.9\%$ & $-1.2\%$ & $0.11\%$\\
\bottomrule
\end{tabular}

    \caption{Learning by imitation: cost gap}
    \label{tab:vsp-imitation-gap}
\end{table}
Then, we measure the performance of our models on test datasets with larger instances than training ones.
We do not necessarily have label solutions available for these big instances, but can evaluate the \emph{average cost per task}.
This metric is very useful to assess how well a model generalizes. Indeed, since all instances share the same map size regardless of the number of tasks, the average cost per task should decrease when the number of tasks increases, because optimal paths contain more tasks. It is the case for the model trained on 25 tasks instances, but not for the other two, which deteriorate on instances larger than 300 tasks.
This is probably due to the fact that 50 and 100 tasks datasets are not labeled with optimal solutions, but only with solutions from the local search heuristic. Indeed, the 100-task model performs even worse than the 50-task model, because the local search heuristic performance decreases when the number of tasks increases.

\begin{table}[H]
    \centering
    \begin{tabular}{ccccccccc}
\toprule
\multirow{2}{*}{\textbf{Train dataset}} & \multicolumn{8}{c}{\textbf{Test dataset} (number of tasks in each instance)} \\ \cline{2-9}
 & 25 & 50 & 100 & 200 & 300 & 500 & 750 & 1000\\
\midrule
25 tasks & 274.72 & 225.29 & 207.14 & 194.46 & 186.68 & 182.56 & 178.57 & 177.3\\
\midrule
50 tasks & 274.27 & 225.23 & 205.97 & 195.78 & 193.12 & 194.48 & 196.99 & 199.38\\
\midrule
100 tasks & 274.61 & 225.87 & 206.8 & 197.97 & 195.53 & 207.02 & 219.34 & 227.14\\
\bottomrule
\end{tabular}

    \caption{Learning by imitation: average cost per task}
    \label{tab:vsp-imitation-cost}
\end{table}

\paragraph{Learning by experience} In the learning by experience setting, we also train three models on the same three training datasets, and summarize results in  Tables~\ref{tab:vsp-experience-gap} and~\ref{tab:vsp-experience-cost}. First, we observe that average cost gaps and maximum cost gaps are lower on all models, respect to the corresponding models learned by imitation.

\begin{table}[H]
    \centering
    \begin{tabular}{ccccccc}
\toprule
\multirow{3}{*}{\textbf{Train dataset}} & \multicolumn{6}{c}{\textbf{Test dataset}} \\ \cline{2-7}
 & \multicolumn{2}{c}{25 tasks} & \multicolumn{2}{c}{50 tasks} & \multicolumn{2}{c}{100 tasks}\\
\cline{2-7}
 & mean & max & mean & max & mean & max\\
\midrule
25 tasks & $0.45\%$ & $4.2\%$ & $-0.77\%$ & $0.63\%$ & $-2.11\%$ & $-0.14\%$\\
\midrule
50 tasks & $0.43\%$ & $3.04\%$ & $-0.78\%$ & $0.74\%$ & $-2.06\%$ & $-0.22\%$\\
\midrule
100 tasks & $0.43\%$ & $3.28\%$ & $-0.83\%$ & $0.97\%$ & $-2.06\%$ & $-0.29\%$\\
\bottomrule
\end{tabular}

    \caption{Learning by experience: cost gap}
    \label{tab:vsp-experience-gap}
\end{table}

Finally, unlike the learning by imitation setting, we observe decreasing average cost per task on all three models for instances up to 1000 tasks. This means models learned by experience generalize much better, because they are not biased trying to imitate non-optimal solutions.

\begin{table}[H]
    \centering
    \begin{tabular}{ccccccccc}
\toprule
\multirow{2}{*}{\textbf{Train dataset}} & \multicolumn{8}{c}{\textbf{Test dataset} (number of tasks in each instance)} \\ \cline{2-9}
 & 25 & 50 & 100 & 200 & 300 & 500 & 750 & 1000\\
\midrule
25 tasks & 274.19 & 224.55 & 204.9 & 191.86 & 184.71 & 181.29 & 178.0 & 177.02\\
\midrule
50 tasks & 274.12 & 224.51 & 205.0 & 191.85 & 184.3 & 180.48 & 176.96 & 176.0\\
\midrule
100 tasks & 274.13 & 224.41 & 205.0 & 191.85 & 184.63 & 181.08 & 177.81 & 176.74\\
\bottomrule
\end{tabular}

    \caption{Learning by experience: average cost per task}
    \label{tab:vsp-experience-cost}
\end{table}

In the stochastic VSP, learning by experience seems more performant than learning by imitation, the only downside being the practical difficulty of the training. Indeed, results are more sensible to hyperparameters and the model needs to be trained for more epochs because the loss convergence is slower.

\subsection{Single-machine scheduling}\label{subsec:scheduling}

We solve a single-machine scheduling problem with release times and sum of completion times. Given an instance of this NP-hard problem, we use an ML predictor to define the input of a ranking problem, which is an LP on the permutahedron and can be solved instantaneously. Its optimal solution (permutation) can then be decoded into an approximate solution of the hard problem. This follows the recent work of \textcite{parmentierLearningSolveSingle2021}, re-coded in Julia in the satellite package \texttt{SingleMachineScheduling.jl}\footnote{\url{https://github.com/axelparmentier/SingleMachineScheduling.jl}}.  

\subsubsection{Problem formulation}

We need to schedule~$n$ jobs on a single machine. Each job~$j \in [n]$ has an associated processing time~$p_j$, and release date~$r_j$. Before~$r_j$, the job~$j$ cannot start. The machine can process only one job at a time. Preemption is not allowed, which means whenever a job has started, it must be completed, and cannot be paused to be finished later. Our aim is to find a schedule (permutation)~$s = (j_1, ..., j_n)$ of~$[n]$, such that the sum of completion times~$\sum_j C_j(s)$ is minimal. In this sum,~$C_j(s)$ is the completion time of job~$j$ in the schedule~$s$. More precisely, given the schedule~$s$, we have~$C_{j_1}(s) = r_{j_1} + p_{j_1}$, and for~$k>1$, we have~$C_{j_k}(s) = \max \big(r_{j_k}, C_{j_{k-1}}(s)\big) + p_{j_k}$.

\subsubsection{Existing algorithms for the hard problem}\label{subsubsec:scheduling_algo}
Two kinds of algorithms are considered, both to benchmark our learning pipelines, and to provide solutions for the learning dataset. The first kind includes several heuristics: the RDI/APRTF heuristic \parencite{chandIterativeHeuristicSingle1996}, referred to as \texttt{RDIA}, the RBS heuristic with beam width~$w = 2$ \parencite{dellacroceHybridHeuristicApproach2014}, referred to as \texttt{RBS}, and the matheuristic \parencite{dellacroceHybridHeuristicApproach2014}, referred to as \texttt{MATH}. The second kind is an exact algorithm, used to solve instances with up to~$n = 110$ jobs, the branch-and-memorize algorithm \parencite{shangBranchMemorizeExact2021}.

\subsubsection{Datasets}
We consider two types of datasets generated randomly: 
\begin{itemize}
    \item A train dataset made of~$420$ instances with~${n \in \{50, 70, 90, 110\}}$ jobs, and corresponding solutions derived by means of the branch-and-memorize algorithm \parencite{shangBranchMemorizeExact2021}. Each value of~$n$ is equally represented.
    \item A test dataset made of~$1820$ instances with~$n \in \{50+k \times 100, k \in [25]\}$ jobs, and the associated best solutions derived by the algorithms noted above. Each value of~$n$ is equally represented.
\end{itemize}

\subsubsection{InferOpt pipeline}
In the pipeline \eqref{eq:scheduling_learning_pipeline}, an instance of the hard problem~$x$ is passed through an ML layer involving the GLM~$\varphi_w$, leading to the parameter~$\theta = \varphi_w(x)$. We then sort the components of~$\theta$ using a ranking algorithm. It leads to a permutation~$s$. This permutation is then passed through a decoder. The output of the decoder~$y$ is the candidate solution to the hard problem.

\begin{equation}
    \xrightarrow[]{\begin{Bcenter}Hard problem\\ instance~$x$ \end{Bcenter}}
    \ovalbox{\begin{Bcenter}ML layer~$\varphi_w$ \end{Bcenter}}
    \xrightarrow[]{\begin{Bcenter}Parameter \\~$\theta = \varphi_w(x)$\end{Bcenter}}
    \doublebox{\begin{Bcenter} Ranking $f$ \end{Bcenter}}
    \xrightarrow[]{\begin{Bcenter}Permutation \\~$s = f(\theta)$\end{Bcenter}}
    \ovalbox{Decoder $\psi$}
    \xrightarrow[]{\begin{Bcenter}Solution \\~$y = \psi(s)$\end{Bcenter}}
    \label{eq:scheduling_learning_pipeline}
\end{equation}

\textcite{parmentierLearningSolveSingle2021} consider two possibilities for the decoder~$\psi$. First, we can use the permutation~$s = f(\theta)$ as the candidate solution of the hard problem. We call this algorithm \emph{pure machine learning heuristic}, referred to as \texttt{PMLH}. It corresponds to~$\psi = \operatorname{id}$. Second, in the \emph{improved machine learning heuristic}, referred to as \texttt{IMLH}, we combine a local search and the RDI procedure in the decoder~$\psi$. We consider \texttt{PMLH} in the following.

\subsubsection{Experimental setting}
\paragraph{A fast ML-CO pipeline.}
We focus on a learning by imitation setting, with~$\psi = \operatorname{id}$. Since our train dataset is made of small instances (with up to~$110$ jobs), and our test dataset has large instances (with up to~$2550$ jobs), we want to see if our model generalizes well. Since ranking is quasi-instantaneous, and the algorithms defined in Section~\ref{subsubsec:scheduling_algo} require much more computational effort, small average gaps on the test set would lead to an ML-CO algorithm that saves a lot of computation time on large-scale instances.
\paragraph{Learning process.}
In the learning phase, we use the permutation~$s = f(\theta)$ as candidate solution to be compared with the associated target solution in the dataset. We use three types of probabilistic CO layers and associated Fenchel-Young loss functions: additive perturbation, multiplicative perturbation, and half square norm regularization. Our hyperparameters are detailed in Table~\ref{tab:hyperparameters_scheduling}. The value of~$\varepsilon$ is tuned with a grid search on a validation dataset for each probabilistic CO layer, selecting the one leading to the smallest average gap. We use a batch size of~$1$ sample and learn through~$300$ epochs in each case.
Experiments are done on a computing cluster with processor 32 x Intel Xeon E5-2667, 3.30GHz (hyperthreading), and 192 Go of RAM.

\begin{table}
    \centering
    \begin{tabular}{ccc}
        \toprule
        \textbf{Probabilistic CO layer} &~$\varepsilon$ & \texttt{nb\_samples}          \\ \midrule
        Additive perturbation   &~$0.01$ &~$100$\\
        Multiplicative perturbation  & ~$0.2$ &~$100$  \\
        Half square norm  &  ~$0.2$ &       \\
        \bottomrule
    \end{tabular}
    \caption{Hyperparameters for scheduling experiments.}
    \label{tab:hyperparameters_scheduling}
\end{table}

\subsubsection{Results}

In Table~\ref{tab:scheduling_gaps}, we compare the average train and test gaps given by each of the three probabilistic CO layers detailed in Table~\ref{tab:hyperparameters_scheduling}, as well as the GLM-based pipeline from \textcite{parmentierLearningSolveSingle2021}. In each case, we derive solutions in the following way: given an instance~$x$, we follow the pipeline \eqref{eq:scheduling_learning_pipeline}, where the weights of the GLM~$w$ are fixed thanks to the learning phase based on the specific probabilistic CO layer (or by \textcite{parmentierLearningSolveSingle2021}), and where the decoder is~$\psi = \operatorname{id}$. We recall that on the train set, gaps are computed with respect to the optimal solutions. On the test set, we consider the best solutions found by the algorithms detailed in Section~\ref{subsubsec:scheduling_algo}. Overall, we manage to obtain low test gaps. We even get better solutions than the heuristics with our ML-CO pipelines on the test set.

\begin{table}[H]
    \centering
    \begin{tabular}{lccc}
    \toprule
                 & \textbf{Train set gap (\%)}   & \textbf{Test set gap (\%)}   \\
    \toprule
    \textbf{Additive perturbation (GLM)} & 1.68             & -0.26     \\
    \midrule
    \textbf{Multiplicative perturbation (GLM)} & 3.02             & 0.34    \\
    \midrule
    \textbf{Half square norm regularization (GLM)} & 1.52             & 0.59        \\
    \midrule
    \textbf{\textcite{parmentierLearningSolveSingle2021} model (GLM)} & 2.32             & 1.93        \\
    \bottomrule
    \end{tabular}
    \caption{Train and test average gaps in \% for pipelines with decoder~$\psi = \operatorname{id}$.}
    \label{tab:scheduling_gaps}
\end{table}

In Figure~\ref{fig:gaps_scheduling}, we show the histograms of train and test gaps given by the pipeline trained with the additive perturbation, which gives the best average results in Table~\ref{tab:scheduling_gaps}. It emphasizes a good generalization performance, even trained on small instances.
Since ranking is very fast, it highlights the practical interest of this kind of ML-CO algorithm. For further studies in a learning by experience setting, as well as additional details on the decoder part, we refer to \textcite{parmentierLearningStructuredApproximations2021} and \textcite{parmentierLearningSolveSingle2021}.

\begin{figure}[H]
    \includegraphics[width=0.49\linewidth,clip]{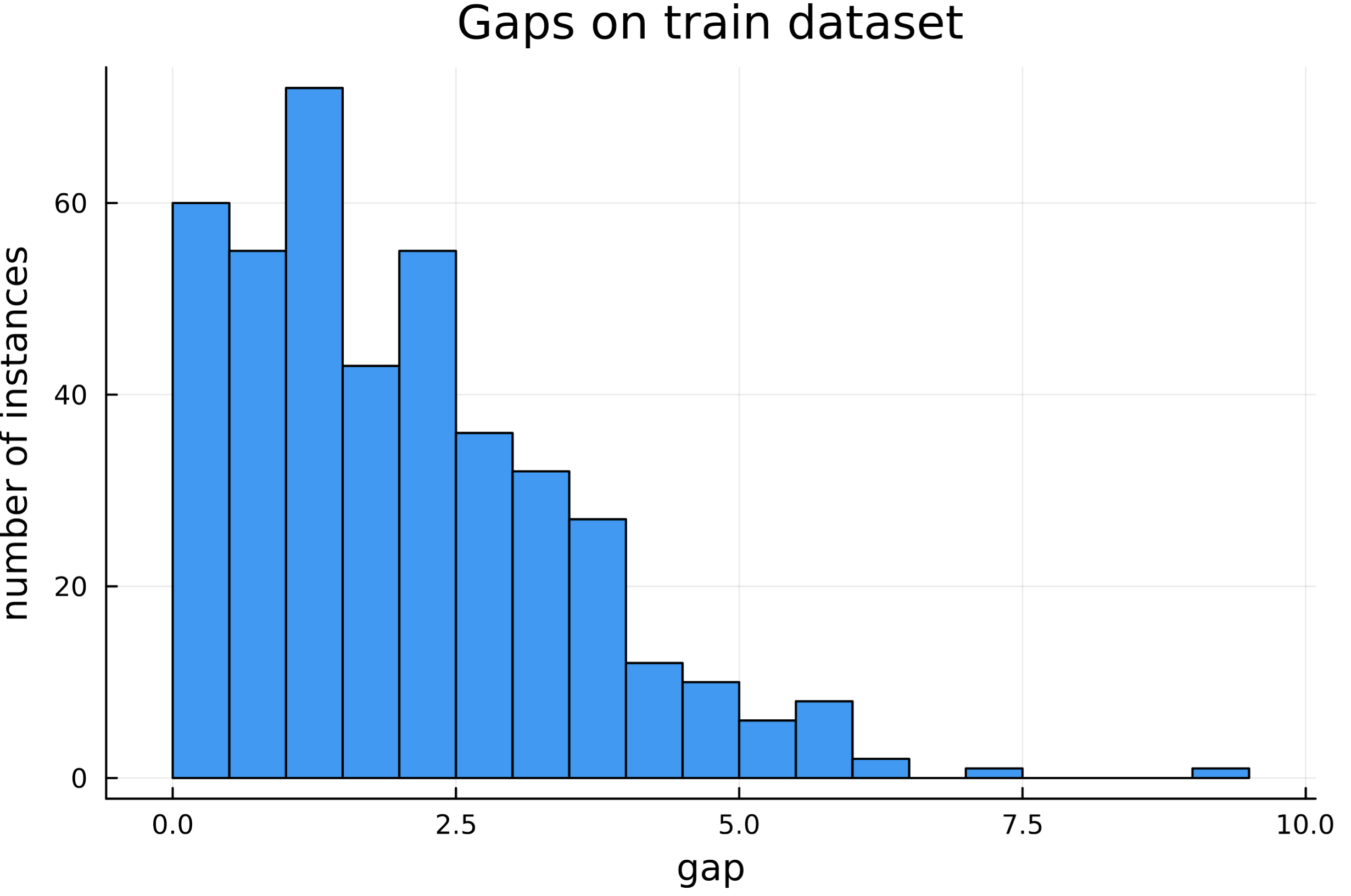}
    \includegraphics[width=0.49\linewidth,clip]{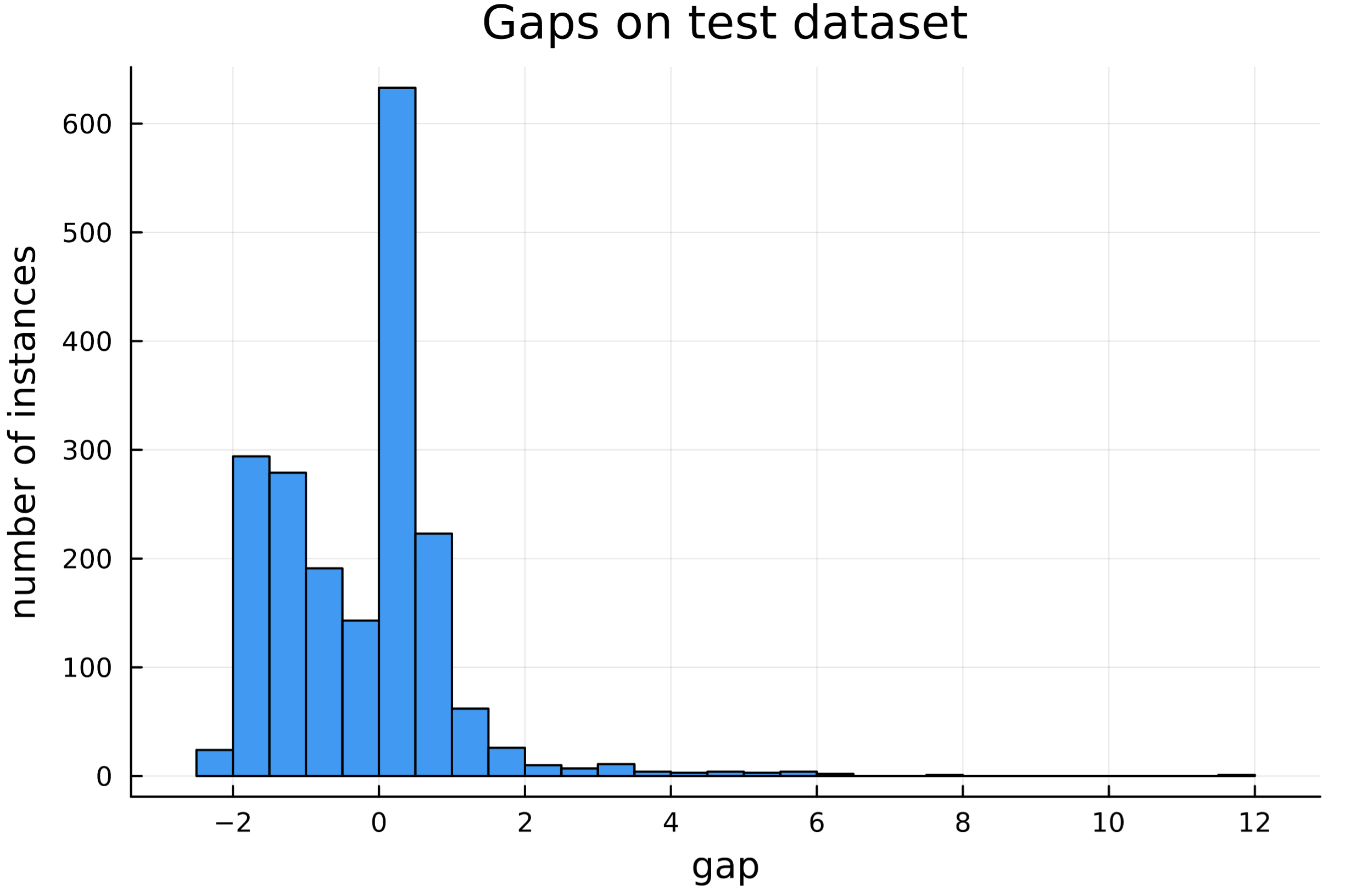}
    \caption{Histograms of train and test gaps in \% given by the pipeline trained with the additive perturbation CO layer and decoder~$\psi = \operatorname{id}$.}
    \label{fig:gaps_scheduling}
\end{figure}

\subsection{Two-stage stochastic minimum weight spanning tree} \label{subsec:spanning_tree}

Let us now focus on the \emph{two-stage minimum weight spanning tree problem}. This last subsection has two main purposes: first, having an application on a two stage stochastic optimization problem where we use the single stage problem as a CO layer, and second, we show how to use \texttt{InferOpt.jl} with a pipeline containing a graph neural network (GNN), and compare it to a basic generalized linear model (GLM). Source code for this application can be found in the corresponding satellite package \texttt{TwoStageSpanningTree.jl}\footnote{\url{https://github.com/axelparmentier/MinimumWeightTwoStageSpanningTree.jl}}.

\subsubsection{Problem formulation}

Let~$G = (V,E)$ be an undirected graph, and~$S$ be a (finite) set of scenarios. The goal is to build a spanning tree on~$G$ over two stages at minimum cost, the second-stage building costs being unknown when first-stage decisions are taken. For each edge~$e\in E$ and scenario and~$s\in S$, we denote by~$c_e$ in~$\mathbb{R}$ the scenario-independent first stage cost of building~$e$, and by~$d_{es}$ the scenario-dependent second stage cost. 
The two-stage spanning tree problem can be formulated as the following ILP:
\begin{subequations}
    \begin{empheq}{align}
        \min\limits_{y, z}\,& \displaystyle\sum_{e \in E} c_e y_e + \frac{1}{|S|}\sum_{e \in E} \sum_{s \in S} d_{es} z_{es},\\
        \mathrm{s.t.}\,&\displaystyle\sum_{e \in E} y_e + z_{es} = |V|-1, & \forall s \in S, \label{subeq:tree1} \\
        &\displaystyle\sum_{e \in E(Y)} y_e + z_{es} \leq |Y| - 1, \qquad &\forall Y,\, \emptyset \subsetneq Y \subsetneq V,\, \forall s \in S, \label{subeq:tree2} \\
        & y_e \in \{0,1\}, & \forall e \in E, \\
        & z_{es} \in \{0,1\}, & \forall e\in E,\, \forall s\in S,
    \end{empheq}
    \label{eq:two-stage-spanning-tree}
\end{subequations}
where~$y_e$ is a binary variable indicating if~$e$ is selected in the first-stage solution, and~$z_{es}$ is a binary variable indicating if~$e$ is selected in the second-stage solution for scenario~$s$. The two constraints~\eqref{subeq:tree1} and~\eqref{subeq:tree2} jointly ensures that~$y_e + z_{es}$ belongs to the spanning tree polytope.

Formulation~\eqref{eq:two-stage-spanning-tree} contains an exponential number of constraints.
Several mathematical programming approaches enable to solve it.
Instances with up to 50 vertices can be solved to optimality using a cut-generation approach or a Benders decomposition.
A Lagrangian relaxation and a Lagrangian heuristic can provide solutions within a 2\% optimality gap for instances with up to 10000 vertices in approximately one hour.
A detailed description of this heuristic can be found in Appendix~\ref{sub:appendix_lagrangian_heuristic}.

\subsubsection{Datasets}

We build three datasets, one for training, one for validation, and one for testing. Each dataset contains 600 grid graphs instances whose widths range from 10 to 60 (that is, with~$100$ to~$3600$ vertices), and from~$5$ to~$20$ scenarios. For all instances, first stage costs are drawn uniformly between~$0$ and~$20$. For the second stage costs, depending on the instances, they are drawn uniformly between~$0$ and~$n$, with~$n$ varying between~$10$ and~$30$. Each instance is labeled with a solution given by the Lagrangian heuristic.

\begin{figure}[H]
    \centering
    \includegraphics{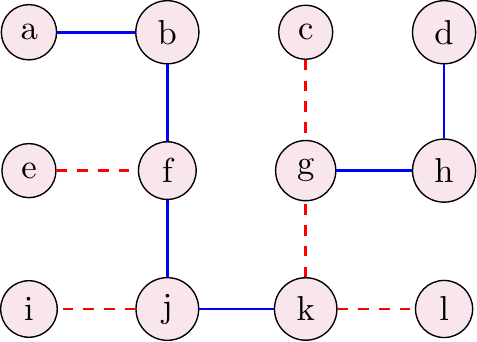}
    \caption{Example of a two stage spanning tree on a~$3\times 4$ grid graph. Blue edges correspond to first-stage forest, and red dashed edges correspond to second-stage forest.}
    \label{fig:two-stage-spanning-tree}
\end{figure}

\subsubsection{InferOpt pipeline}

The learning pipeline is presented in Figure~\ref{fig:two-stage-spanning-tree-pipeline}.

\begin{figure}[H]
    \centering
    \includegraphics{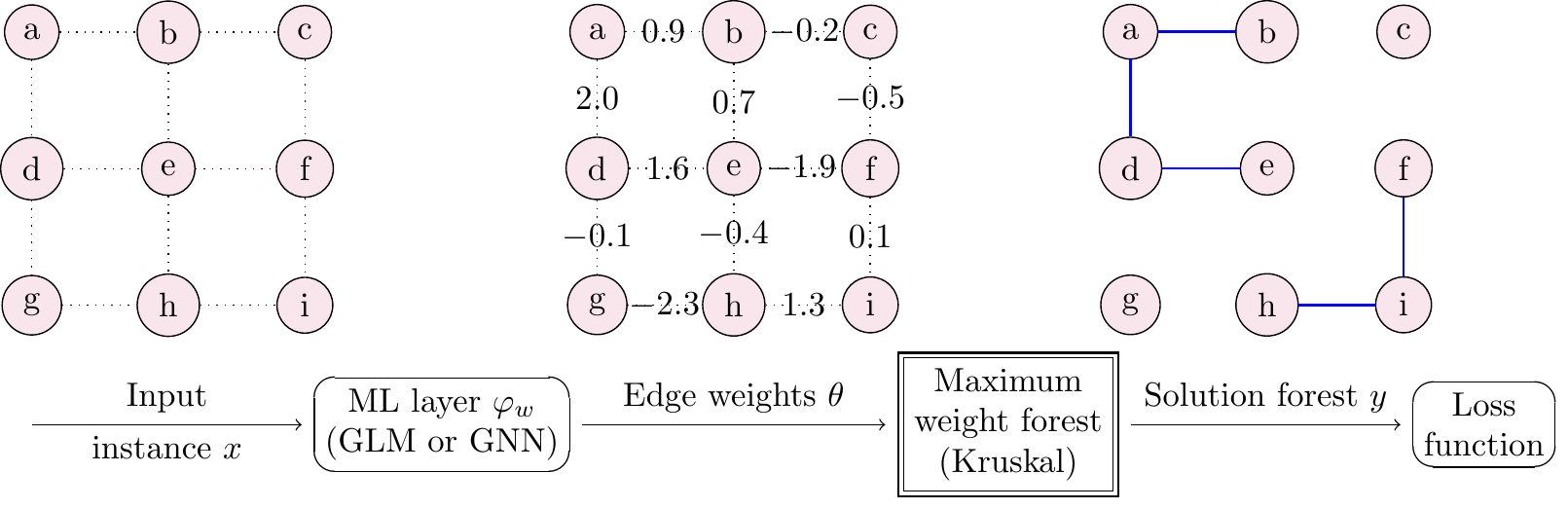}
    \caption{Two-stage minimum spanning tree pipeline}
    \label{fig:two-stage-spanning-tree-pipeline}
\end{figure}

\paragraph{Input}
Each edge~$e$ of an instance~$x$ is encoded by a feature vector~$\phi(x, e)$. In the following experiments, we test two different set of features.
The first one is called \textit{Basic features}, and only contains information about first-stage costs and quantiles of second-stage costs.
The second feature set is called \textit{Advanced features}, and contains multiple other statistics.
See Appendix~\ref{subsec:two-stage-spanning-tree-features} for a detailed description of the features.

\paragraph{ML predictor}
The feature matrix is given as input to an ML layer~$\varphi_w$ with learnable parameters~$w$, which predicts edge weights~$\theta_e$.
We tested and compared two different predictors: first a GLM such that~$\theta_e = w^\top \phi(x, e)$, and then a GNN, in particular the \texttt{GraphConv} architecture (\cite{morrisWeisfeilerLemanGo2019}). 1

\paragraph{Combinatorial layer}
We use the predicted edge weights~$\theta$ as the objective of a maximum weight forest problem layer. A maximum weight forest can be efficiently found using Kruskal's algorithm.

\paragraph{Decoder}
As a post-processing, the output forest of the training pipeline is interpreted as a first stage solution of instance~$x$.
For each scenario, we can complete it into a minimum cost spanning tree containing this forest, using Kruskal's algorithm.

\subsubsection{Experimental setting}

We train each model by imitation using a \texttt{FenchelYoungLoss} with \texttt{PerturbedAdditive} regularization. Hypeparameters used for each model can be found in table~\ref{tab:hyperparameters_spanning-tree}.

\begin{table}[h]
    \centering
    \begin{tabular}{cccccc}
    \toprule
        & $\varepsilon$ & \texttt{nb\_samples} & learning rate & batch size & training epochs \\
    \midrule
    GLM & 1             & 20         & 0.01 & 1 & 20  \\
    \midrule
    GNN & 1             & 20         & 0.001 & 1 & 20 \\
    \bottomrule
    \end{tabular}
    \caption{Hypeparameters for two-stage spanning tree experiments}
    \label{tab:hyperparameters_spanning-tree}
\end{table}

We use the \texttt{Flux.jl} Julia library and the Adam optimizer for training.
We construct our GNN with the \texttt{GraphNeuralNetworks.jl}\footnote{\url{https://github.com/CarloLucibello/GraphNeuralNetworks.jl}} Julia library \parencite{lucibelloGraphNeuralNetworksJlGeometric2021}. The GNN architecture contains 3 \texttt{GraphConv} layers with hidden size 50.

The GLM and GNN models are both trained on each of the two feature sets. Features are normalized by dividing each feature by its train dataset standard deviation. We optimize hyperparameters and perform model selection by using the average optimality gap values on the validation dataset. The experiments are conducted on a 2.6 GHz Intel Core i9-11950H, 16 cores, with 64 Go RAM. Epochs with best gaps on the validation dataset are selected for each experiment.

\subsubsection{Results}

The CO layer takes the same CPU time as one iteration of the Lagrangian relaxation, and the feature computation takes about the same time as~$2$ iterations. Therefore, resulting pipelines are roughly 15000 times faster than the Lagrangian heuristic.

\medskip
We evaluate the performance of the trained pipelines on the test dataset. For this we compute the average, minimum, and maximum gaps respect to the lower bound and upper bounds respectively given by the Lagrangian relaxation and the Lagrangian heuristic. Results are gathered in Table~\ref{tab:two-stage-spanning-tree-basic} below.

\begin{table}[H]
    \centering
    \begin{tabular}{lcccccc}
    \toprule
                 & \multicolumn{3}{c}{\textbf{Optimality gap (\%)}}   & \multicolumn{3}{c}{\textbf{Lagrangian heuristic gap (\%)}}   \\
                 & \textbf{Average} & \textbf{Min} & \textbf{Max} & \textbf{Average} & \textbf{Min} & \textbf{Max} \\
    \toprule
    \textbf{GLM on basic features} & 4.77             & 1.08         & 12.33        & 2.91             & 0.71         & 8.81         \\
    \midrule
    \textbf{GNN on basic features} & 3.06             & 0.09         & 12.09        & 1.23             & -0.24        & 7.07\\
    \midrule
    \textbf{GLM on advanced features} & 2.21             & 0.09         & 7.66         & 0.41             & -0.73        & 3.12         \\
    \midrule
    \textbf{GNN on advanced features} & 2.11             & 0.09         & 7.91         & 0.31             & -0.73        & 2.86\\
    \bottomrule
    \end{tabular}
    \caption{Test gaps in \% for models trained with the basic features}
    \label{tab:two-stage-spanning-tree-basic}
\end{table}

\noindent On both feature sets we observe low gaps for both GLM and GNN, with the GNN performing better. We also see that the GLM trained on advanced features performs better that the GNN on basic features, which means that feature engineering is more effective than replacing the linear model with a more complex neural network. Advanced features combined with the GNN performs best. As can be seen in section~\ref{subsec:StochasticVSP} on the stochastic  VSP, we could maybe further improve performance if we learn by experience instead of learning by imitation. Indeed, the labels computed by the Lagrangian heuristic are not necessarily optimal and probably introduce bias in the learning.

\section{Conclusion} \label{sec:conclusion}

In this paper, we introduce \texttt{InferOpt.jl}, a Julia package that seamlessly turns CO algorithms into layers of ML pipelines.
The package is compatible with most approximate differentiation techniques and structured losses from the literature, but also includes novel ones.
Its theoretical foundation is a probabilistic point of view on CO layers, which unifies existing and new methods.
Our probabilistic framework notably provides natural \enquote{learning by experience} counterparts to the more common \enquote{learning by imitation} approaches.
\texttt{InferOpt.jl} also unlocks the use of large neural networks in hybrid pipelines, even when learned by experience.
We demonstrate it with a hybrid ML-CO pipeline containing a CNN with tens of thousands of parameters, which to the best of our knowledge, has never been achieved.

Using \texttt{InferOpt.jl}, we benchmark pipeline architectures and learning algorithms.
In general, we observe that hybrid pipelines can yield fast algorithms with state-of-the-art performance on well-studied applications such as the single machine scheduling problem.
Depending on the learning setting, we may require more or less information in the training set.
Unsurprisingly, the more information we have in the training set, the better our predicted solutions become. This is notably clear on the Warcraft shortest path problem.
However, on very hard combinatorial tasks where no algorithm scales on larges instances, learning by experience can outperform learning by imitation, as can be seen on the stochastic vehicle scheduling problem.

When we learn by imitating precomputed solutions, Fenchel-Young losses seem to outperform alternatives.
If we use a CO algorithm that is sensitive to the sign of the input, the multiplicative perturbation seems the best approach.
Concerning the pipeline architecture, the most important aspect seems to be the choice of the combinatorial optimization layer and of the features.
Using GNNs instead of GLMs for encoding can be beneficial, as we remarked on the two-stage spanning tree problem.

\medskip

We also highlight some perspectives. In this paper, we learn the objective function of CO problems based on the notion of probabilistic CO layers. We do not consider the case when the desired flexibility concerns the constraints, as studied by \textcite{paulusCombOptNetFitRight2021} for instance, even though this perspective would be equally natural.
Furthermore, we make the assumption that the CO oracle derives optimal solutions within our layer. We briefly mention the case of inexact CO oracles, but additional research would be necessary in this direction.
Finally, a large part of the ML literature is dedicated to reinforcement learning. The connection between this field and structured learning deserves further exploration.

\subsection*{Acknowledgements}

The authors want to thank Mathieu Besançon for his support with various Julia packages, as well as his expert proofreading.
We also thank Éloïse Berthier, Alexandre Forel, Jérôme Malick, Clément Mantoux, and  Pierre Marion for their time and advice.

\addcontentsline{toc}{section}{References}
\printbibliography

\appendix

\section{Proofs} \label{sec:proofs}

In this Section, the dominated convergence theorem is used implicitly to justify any differentiation under the integral sign.
When dealing with polyhedral functions such as~$\theta \longmapsto \max_{v \in \calV} \theta^\top v$, we often write~$\nabla_\theta$ for simplicity even though they are only subdifferentiable, because the set of non-differentiability has measure zero.
We denote the standard Gaussian density by~$\nu$.

\subsection{Additive perturbation}

These proofs are already given by \textcite{berthetLearningDifferentiablePerturbed2020}, but we include them for comparison purposes.

\subsubsection{Derivatives} \label{sec:proof_additive_perturbation}

Proof of Proposition~\ref{prop:additive_perturbation}.

\begin{proof}

    The following change of variable is a diffeomorphism:
    \begin{equation*}
        u = \theta + \varepsilon z \quad \iff \quad \frac{u - \theta}{\varepsilon} = z.
    \end{equation*}
    We apply it to the definition of~$\widehat{p}_\varepsilon^+(v|\theta)$.
    \begin{align*}
        \widehat{p}_\varepsilon^+(v | \theta)
         & = \int_{\bbR^d} \oneindicator{\{f(\theta + \varepsilon z) = v\}} ~ \nu(z) ~ \rmd z                                       \\
         & = \int_{\bbR^d} \oneindicator{\{f(u) = v\}} ~ \nu\left(\frac{u-\theta}{\varepsilon}\right) \frac{\rmd u}{\varepsilon^d}.
    \end{align*}
    We now differentiate with respect to~$\theta$ before applying the reverse change of variable:
    \begin{align*}
        \nabla_\theta \widehat{p}_\varepsilon^+(v | \theta)
         & = \int_{\bbR^d} \oneindicator{\{f(u) = v\}} ~ \left( \frac{-1}{\varepsilon} \nabla \nu\left(\frac{u-\theta}{\varepsilon}\right) \right) \frac{\rmd u}{\varepsilon^d} \\
         & = \frac{-1}{\varepsilon} \int_{\bbR^d} \oneindicator{\{f(\theta + \varepsilon z) = v\}} ~ \nabla \nu(z) ~ \rmd z                                                     \\
         & = \frac{1}{\varepsilon} \int_{\bbR^d} \oneindicator{\{f(\theta + \varepsilon z) = v\}} ~  z \nu(z) ~ \rmd z.
    \end{align*}
    The last equality holds because the standard Gaussian density satisfies~$\nabla \nu(z) = -z \nu(z)$.
    From there, we deduce the Jacobian of~$\widehat{f}_\varepsilon^+ (\theta)$:
    \begin{align*}
        J_\theta \widehat{f}_\varepsilon^+(\theta)
         & = \sum_{v \in \calV} v \nabla_\theta \widehat{p}_\varepsilon^+(\theta, y)^\top                                                                                                              \\
         & = \frac{1}{\varepsilon} \int_{\bbR^d} \underbrace{\Big(\sum_{v \in \calV} v ~ \oneindicator{\{f(\theta + \varepsilon z) = v\}} \Big)}_{f(\theta + \varepsilon z)} ~  z^\top \nu(z) ~ \rmd z
    \end{align*}
    We arrive at the following simple expression, which was already given by \textcite{berthetLearningDifferentiablePerturbed2020}:
    \begin{equation*}
        J_\theta \widehat{f}_\varepsilon^+(\theta) = \frac{1}{\varepsilon} \bbE \left[f(\theta + \varepsilon Z) Z^\top\right]
    \end{equation*}

\end{proof}

\subsubsection{Regularization} \label{sec:proof_additive_perturbation_omega}

Proof of Proposition~\ref{prop:additive_perturbation_omega}.

\begin{proof}
    Because~$\Omega_\varepsilon^+ = (F_\varepsilon^+)^*$ is a Fenchel conjugate, it is automatically convex.
    Furthermore,
    \begin{align*}
        \Omega_\varepsilon^+(\mu)
         & = \sup_{\theta \in \bbR^d} \left\{\theta^\top \mu - F_\varepsilon^+(\theta)\right\}                                               \\
         & = \sup_{\theta \in \bbR^d} \left\{\theta^\top \mu - \bbE \left[\max_{v \in \calV} (\theta + \varepsilon Z)^\top v\right]\right\}.
    \end{align*}
    We consider~$\mu \notin \conv(\calV)$.
    By convex separation, there exists~$\tilde{\theta} \in \bbR^d$ and~$\alpha > 0$ such that~$\tilde{\theta}^\top \mu \geq \alpha + \tilde{\theta}^\top v$ for all~$v \in \calV$.
    This implies that, for all~$t > 0$,
    \begin{align*}
        \Omega_\varepsilon^+(\mu)
         & \geq t \tilde{\theta}^\top \mu - \bbE \left[\max_{v \in \calV} \left(t\tilde{\theta} + \varepsilon Z\right)^\top v\right]                                 \\
         & \geq t \tilde{\theta}^\top \mu - t \bbE \left[\max_{v \in \calV} \tilde{\theta}^\top v\right] - \varepsilon \bbE \left[\max_{v \in \calV} Z^\top v\right] \\
         & \geq t \alpha - \varepsilon \bbE \left[\max_{v \in \calV} Z^\top v\right] \xrightarrow[t \to +\infty]{} +\infty
    \end{align*}
    We have shown that~$\mu \notin \dom(\Omega_\varepsilon^+)$, and therefore~$\dom(\Omega_\varepsilon^+) \subset \conv(\calV)$.
    We define
    \begin{equation*}
        f_Z(\theta, v) = (\theta + \varepsilon Z)^\top v \qquad \text{so that} \qquad F_\varepsilon^+(\theta) = \bbE \left[\max_{v \in \calV} f_Z(\theta, v)\right].
    \end{equation*}
    Danskin's theorem helps us compute the gradient of~$F_\varepsilon^+$:
    \begin{align*}
        \nabla_\theta F_\varepsilon^+(\theta)
         & = \bbE \left[\nabla_\theta \left(\max_{v \in \calV} f_Z(\theta, v)\right)\right]                                \\
         & = \bbE \left[\nabla_1 f_Z\left(\theta, \argmax_{v \in \calV} f_Z(\theta, v)\right)\right]                       \\
         & = \bbE \left[ \argmax_{v \in \calV} f_Z(\theta, v)\right]                                                       \\
         & = \bbE \left[ \argmax_{v \in \calV} (\theta + \varepsilon Z)^\top v\right] = \widehat{f}_\varepsilon^+(\theta).
    \end{align*}
    As shown by \textcite[Proposition 2.2]{berthetLearningDifferentiablePerturbed2020}, the function~$\Omega_\varepsilon^+$ is a Legendre type function, which means that
    \begin{equation*}
        \nabla_\theta F_\varepsilon^+ = \nabla_\theta (\Omega_\varepsilon^+)^* = (\nabla_\theta \Omega_\varepsilon^+)^{-1}.
    \end{equation*}
    From this, we deduce
    \begin{align*}
        \nabla_\theta F_\varepsilon^+(\theta)
         & = \argmax_{\mu \in \bbR^d} \left\{\theta^\top \mu - \Omega_\varepsilon^+(\mu)\right\}                                                                   \\
         & = \argmax_{\mu \in \dom(\Omega_\varepsilon^+)} \left\{\theta^\top \mu - \Omega_\varepsilon^+(\mu)\right\} = \widehat{f}_{\Omega_\varepsilon^+}(\theta).
    \end{align*}
    Hence, we can conclude:
    \begin{equation*}
        \widehat{f}_\varepsilon^+(\theta) = \nabla_\theta F_\varepsilon^+(\theta) = \argmax_{\mu \in \dom(\Omega_\varepsilon^+)} \{\theta^\top \mu - \Omega_\varepsilon^+(\mu)\} = \widehat{f}_{\Omega_\varepsilon^+}(\theta).
    \end{equation*}
    To recover the formula given in Proposition~\ref{prop:additive_perturbation_omega}, we simply remember that~$\dom(\Omega_\varepsilon^+) \subseteq \conv(\calV)$.
\end{proof}

\subsection{Multiplicative perturbation}

\subsubsection{Derivatives} \label{sec:proof_multiplicative_perturbation}

Proof of Proposition~\ref{prop:multiplicative_perturbation}.

\begin{proof}

    Suppose~$\theta \in \bbR^d$ only has positive components.
    Then the following change of variable is a diffeomorphism:
    \begin{equation*}
        u = \theta \odot e^{\varepsilon z - \varepsilon^2 \onevector / 2} \quad \iff \quad \frac{\log(u) - \log(\theta)}{\varepsilon} + \frac{\varepsilon \onevector}{2} = z
    \end{equation*}
    We apply it to the definition of~$\widehat{p}_\varepsilon^\odot(v|\theta)$.
    \begin{align*}
        \widehat{p}_\varepsilon^\odot(\theta, y)
         & = \int_{\bbR^d} \oneindicator{\left\{f\left(\theta \odot e^{\varepsilon z - \varepsilon^2 \onevector / 2}\right) = v\right\}} ~ \nu(z) ~ \rmd z                                              \\
         & = \int_{(0,+\infty)^d} \oneindicator{\{f(u) = v\}} ~ \nu\left(\frac{\log(u) - \log(\theta)}{\varepsilon} + \frac{\varepsilon \onevector}{2}\right) \frac{\rmd u}{\varepsilon^d \prod_i u_i}.
    \end{align*}
    We now differentiate with respect to~$\theta$ before applying the reverse change of variable:
    \begin{align*}
        \nabla_\theta \widehat{p}_\varepsilon^\odot(\theta, y)
         & = \int_{(0,+\infty)^d} \oneindicator{\{f(u) = v\}} ~ \left( \frac{-1}{\varepsilon \theta} \odot \nabla \nu\left(\frac{\log(u) - \log(\theta)}{\varepsilon} + \frac{\varepsilon \onevector}{2}\right) \right) \frac{\rmd u}{\varepsilon^d \prod_i u_i} \\
         & = \frac{-1}{\varepsilon \theta} \odot \int_{\bbR^d} \oneindicator{\left\{f\left(\theta \odot e^{\varepsilon z - \varepsilon^2 \onevector / 2}\right) = v\right\}} ~ \nabla \nu(z) ~ \rmd z                                                            \\
         & = \frac{1}{\varepsilon \theta} \odot \int_{\bbR^d} \oneindicator{\left\{f\left(\theta \odot e^{\varepsilon z - \varepsilon^2 \onevector / 2}\right) = v\right\}} ~ z \nu(z) ~ \rmd z.
    \end{align*}
    From there, we deduce the Jacobian of~$\widehat{f}_\varepsilon^\odot(\theta)$:
    \begin{align*}
        J_\theta \widehat{f}_\varepsilon^\odot(\theta)
         & = \sum_{v \in \calV} v \nabla_\theta \widehat{p}_\varepsilon^\odot(\theta, y)^\top                                                                                                                                                                                                                                      \\
         & = \frac{1}{\varepsilon \theta} \odot \int_{\bbR^d} \underbrace{\Big(\sum_{v \in \calV} v ~ \oneindicator{\left\{f\left(\theta \odot e^{\varepsilon z - \varepsilon^2 \onevector / 2}\right) = v\right\}} \Big)}_{f\left(\theta \odot e^{\varepsilon z - \varepsilon^2 \onevector / 2}\right)} ~  z^\top \nu(z) ~ \rmd z
    \end{align*}
    We arrive at a simple variant of the previous expression:
    \begin{equation*}
        J_\theta \widehat{f}_\varepsilon^\odot(\theta) = \frac{1}{\varepsilon \theta} \odot \bbE \left[f\left(\theta \odot e^{\varepsilon z - \varepsilon^2 \onevector / 2}\right) Z^\top\right]
    \end{equation*}

\end{proof}

\subsubsection{Regularization} \label{sec:proof_multiplicative_perturbation_omega}

Proof of Proposition~\ref{prop:multiplicative_perturbation_omega}.

\begin{proof}
    Because~$\Omega_\varepsilon^\odot = (F_\varepsilon^\odot)^*$ is a Fenchel conjugate, it is automatically convex.
    Furthermore,
    \begin{align*}
        \Omega_\varepsilon^\odot(\mu)
         & = \sup_{\theta \in \bbR^d} \left\{\theta^\top \mu - F_\varepsilon^\odot(\theta)\right\}                                                                                              \\
         & = \sup_{\theta \in \bbR^d} \left\{\theta^\top \mu - \bbE \left[\max_{v \in \calV} \left(\theta \odot e^{\varepsilon Z - \varepsilon^2 \onevector / 2}\right)^\top v\right]\right\}   \\
         & = \sup_{\theta \in \bbR^d} \left\{\theta^\top \mu - \bbE \left[\max_{v \in \calV} \theta^\top \left(v \odot e^{\varepsilon Z - \varepsilon^2 \onevector / 2} \right)\right]\right\}.
    \end{align*}
    This last expression shows why we do not have~$\dom(\Omega_\varepsilon^\odot) \subset \conv(\calV)$ (unlike in the additive case).
    Indeed, even when~$\mu \notin \conv(\calV)$, the multiplicative scaling of~$v$ might allow it to compensate the inner product~$\theta^\top \mu$ and stop~$\Omega_\varepsilon^\odot(\mu)$ from going to~$+\infty$.
    We define
    \begin{equation*}
        f_Z(\theta, v) = \left(\theta \odot e^{\varepsilon Z - \varepsilon^2 \onevector / 2}\right)^\top v \qquad \text{so that} \qquad F_\varepsilon^\odot(\theta) = \bbE \left[\max_{v \in \calV} f_Z(\theta, v)\right].
    \end{equation*}
    Danskin's theorem helps us compute the gradient of~$F_\varepsilon^\odot$:
    \begin{align*}
        \nabla_\theta F_\varepsilon^\odot(\theta)
         & = \bbE \left[\nabla_\theta \left( \max_{v \in \calV} f_Z(\theta, v) \right) \right]                                                                                                 \\
         & = \bbE \left[\nabla_1 f_Z \left(\theta, \argmax_{v \in \calV} f_Z(\theta, v) \right) \right]                                                                                        \\
         & = \bbE \left[e^{\varepsilon Z - \varepsilon^2 \onevector / 2} \odot \argmax_{v \in \calV} f_Z(\theta, v) \right]                                                                    \\
         & = \bbE \left[e^{\varepsilon Z - \varepsilon^2 \onevector / 2} \odot \argmax_{v \in \calV} \left(\theta \odot e^{\varepsilon Z - \varepsilon^2 \onevector / 2}\right)^\top v \right] \\
         & = \widehat{f}_\varepsilon^{\odot \mathrm{scaled}}(\theta) \neq \widehat{f}_\varepsilon^\odot(\theta)
    \end{align*}
    We could prove in a way similar to \textcite[Proposition 2.2]{berthetLearningDifferentiablePerturbed2020}
    that~$\Omega_\varepsilon^\odot$ is a Legendre type function, which means that
    \begin{equation*}
        \nabla_\theta F_\varepsilon^\odot = \nabla_\theta (\Omega_\varepsilon^\odot)^* = (\nabla_\theta \Omega_\varepsilon^\odot)^{-1}.
    \end{equation*}
    From this, we deduce
    \begin{align*}
        \nabla_\theta F_\varepsilon^\odot(\theta)
         & = \argmax_{\mu \in \bbR^d} \left\{\theta^\top \mu - \Omega_\varepsilon^\odot(\mu)\right\}                                                                       \\
         & = \argmax_{\mu \in \dom(\Omega_\varepsilon^\odot)} \left\{\theta^\top \mu - \Omega_\varepsilon^\odot(\mu)\right\} = \widehat{f}_{\Omega_\varepsilon^+}(\theta).
    \end{align*}
    This time we cannot replace~$\dom(\Omega_\varepsilon^\odot)$ by~$\conv(\calV)$, but we still obtain a similar conclusion:
    \begin{equation*}
        \widehat{f}_\varepsilon^{\odot \mathrm{scaled}}(\theta) = \nabla_\theta F_\varepsilon^\odot(\theta) = \argmax_{\mu \in \dom(\Omega_\varepsilon^\odot)} \{\theta^\top \mu - \Omega_\varepsilon^\odot(\mu)\} = \widehat{f}_{\Omega_\varepsilon^\odot}(\theta).
    \end{equation*}
\end{proof}

\subsection{Inexact oracles} \label{sec:proof_inexact_oracles}

Proof of Proposition~\ref{prop:inexact_oracles}.

\begin{proof}
    We start with additive perturbation.
    By Proposition~\ref{prop:additive_perturbation}, we have:
    \begin{align*}
        J_\theta \widehat{g}_\varepsilon^+(\theta) - J_\theta \widehat{f}_\varepsilon^+(\theta)
         & = \frac{1}{\varepsilon} \bbE \left[g(\theta + \varepsilon Z) Z^\top\right] - \frac{1}{\varepsilon} \bbE \left[f(\theta + \varepsilon Z) Z^\top\right] \\
         & = \frac{1}{\varepsilon} \bbE \left[ \left(g(\theta + \varepsilon Z) - f(\theta + \varepsilon Z)\right) Z^\top\right].
    \end{align*}
    We bound the spectral norm of the error using Jensen's inequality:
    \begin{align*}
        \left\lVert J_\theta \widehat{g}_\varepsilon^+(\theta) - J_\theta \widehat{f}_\varepsilon^+(\theta) \right\lVert^2
         & \leq \frac{1}{\varepsilon^2} \bbE \left\lVert \left(g(\theta + \varepsilon Z) - f(\theta + \varepsilon Z)\right) Z^\top\right\rVert^2 \\
         & = \frac{1}{\varepsilon^2} \bbE \left[ \lVert g(\theta + \varepsilon Z) - f(\theta + \varepsilon Z) \rVert^2 \lVert Z \rVert^2 \right] \\
         & \leq \frac{1}{\varepsilon^2} \lVert g - f \rVert_{\infty}^2 \bbE \left[\lVert Z \rVert^2\right]                                       \\
         & = \frac{d}{\varepsilon^2} \lVert g - f \rVert_{\infty}^2.
    \end{align*}
    We now move on to multiplicative perturbation.
    For multiplicative perturbation, following the same proof starting from Proposition~\ref{prop:multiplicative_perturbation} yields a similar inequality:
    \begin{align*}
        \left\lVert J_\theta \widehat{g}_\varepsilon^\odot(\theta) - J_\theta \widehat{f}_\varepsilon^\odot(\theta) \right\lVert^2
         & \leq \frac{d}{\varepsilon^2 \min_i \lvert \theta_i \rvert^2} \lVert g - f \rVert_{\infty}^2.
    \end{align*}
\end{proof}

\section{More details on applications}

\subsection{Stochastic vehicle scheduling problem}

\subsubsection{Linear program formulation of (VSP):}\label{sec:appendix_deterministicVSP}

The deterministic Vehicle Scheduling Problem (VSP), can be formulated as the following linear program:
\begin{equation}
    \begin{aligned}
        \min & \, \sum_{a\in A} \theta_a y_a &\\
        s.t. & \sum_{a\in \delta^-(v)} y_a = \sum_{a\in \delta^+(v)} y_a, & \forall v \in V\backslash \{o, d\}\\
        & \sum_{a\in \delta^-(v)} y_a = 1, & \forall v \in V\backslash \{o, d\}\\
        & y_a \in \{0, 1\}, &\forall a\in A
    \end{aligned}
    \label{eq:deterministicVSP}
\end{equation}
\noindent Since the constraint matrix is totally unimodular, the constraint polytope gives a perfect formulation, therefore binary constraints can be relaxed, and this problem can be solved using either a flow algorithm or a linear programming solver.

\subsubsection{Two MILP formulations for the StoVSP}\label{sub:solveStoVSP}

\paragraph{MILP formulation}

We denote respectively $c_{\text{vehicle}}$ and $c_{\text{delay}}$ the cost of one vehicle, and the cost of one minute of delay. The StoVSP problem can be formulated as the following MIP with quadratic constraints:
\begin{subequations}    
    \begin{align}
        \min_{d, y} & \,\delaycost\dfrac{1}{|S|}\sum\limits_{s\in S}\sum\limits_{v\in V\backslash\{o,d\}} d_v^s + \vehiclecost \sum\limits_{a\in\delta^+(o)}y_a\\
        \text{s.t.} & \sum_{a\in\delta^-(v)}y_a = \sum_{a\in\delta^+(v)}y_a &\forall v\in V\backslash\{o, d\}\label{eq:delay-constraints}\\
        & \sum_{a\in\delta^-(v)}y_a = 1 &\forall v\in V\backslash\{o, d\}\\
        & d_v^s \geq \gamma_v^s + \sum_{\substack{a\in\delta^-(v) \\ a=(u, v)}} (d_u^s - s_{u, v}^s) y_a & \forall v\in V\backslash\{o, d\}, \forall s\in S\\
        & d_v^s\geq \gamma_v^s & \forall v\in V\backslash\{o, d\}, \forall s\in S\\
        & y_a\in\{0,1\} & \forall a\in A
    \end{align}
\end{subequations}
These quadratic delay constraints \eqref{eq:delay-constraints} can be linearized using McCormick inequalities, and we can then solve the resulting MILP with industrial solvers. This approach does not scale well on instances with large number of tasks and scenarios. 

\paragraph{Column generation formulation}

Another possible formulation for StoVSP is the column generation approach. Let $\mathcal P$ the set of feasible vehicle routes. Let us define the cost of a vehicle route $P\in\mathcal P$ by $c_P^s = \vehiclecost +\delaycost\times \sum_{v\in P} d_v^s$. We introduce decision variables $y_P\in\{0, 1\}$ that equals $1$ if the vehicle route $P$ is chosen. We can then formulate StoVSP as follows:
\begin{equation}
    \begin{aligned}
        \min & \frac{1}{|S|}\sum_{s\in S}\sum_{P\in\mathcal{P}}c_P^s y_P &\\
        \text{s.t.} & \sum_{p\ni v} y_P = 1 & \forall v\in V\backslash\{o, d\} & \quad(\lambda_v\in\mathbb R)\\
        & y_P\in\{0,1\} & \forall p\in \mathcal{P} &
    \end{aligned}
    \label{eq:vsp-column-generation}
\end{equation}
Since there is an exponential number of variables, this formulation can be solved using a column generation algorithm. We denote $\lambda_v$ the dual variables of \eqref{eq:vsp-column-generation}. The associated sub-problem is a constrained shortest path problem :
\begin{equation}
    \min_{P\in\mathcal P} (c_P  - \sum_{v\in P}\lambda_v)
    \label{eq:vsp-subproblem}
\end{equation}
The sub-problem \eqref{eq:vsp-subproblem} can be solved using a stochastic constrained shortest path algorithm (see \cite{parmentierAlgorithmsNonlinearStochastic2019} for in-depth details about these algorithms, and \texttt{ConstrainedShortestPaths.jl}\footnote{\url{https://github.com/BatyLeo/ConstrainedShortestPaths.jl}} for a Julia implementation).

Solving \eqref{eq:vsp-column-generation} using column generation gives good bounds on instances with up to 50 tasks. A Branch-and-price could be used to try to find the optimal solution.

\subsubsection{Data generation}\label{sub:VSPdatagen}

\paragraph{Instance generator}

To generate our dataset, we use a similar approach as by \cite{parmentierLearningApproximateIndustrial2021}. We define a squared map of 50 minutes width, divided in 25 squared ($10\times 10$) districts. Each task $v$ has a (uniformly drawn) start point, a start time $t_v^b$, end point, and end time $t_v^e$.

\begin{figure}[H]
    \centering
    \includegraphics[width=0.5\textwidth]{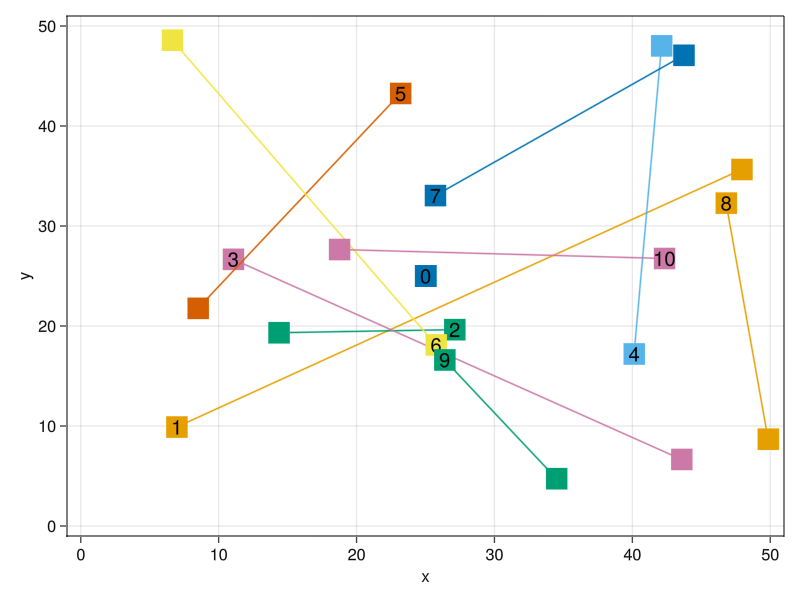}
    \caption{Example of a map with 10 tasks. Task 0 represents the depot, in the middle of the map. Square with task indices are starting points, and tasks without are end points.}
    \label{fig:vsp-instance}
\end{figure}

We draw each scenario $s\in S$ separately and independently. For each district $d$ and hour $h$ of the day, we draw $\beta_{d, h}\sim \log\mathcal{N}(\mu_{d, h}, \sigma_{d, h})$, with $(\mu_{d, h}, \sigma_{d, h})\sim\mathcal{U}([1, 3])\times \mathcal{U}([0.4, 0.6])$. For each district $d$ and hour $h$ of the day, $\zeta_{d,h}^{dis}$ models the congestion in the district at this time, and is computed as follows:

\begin{equation}
    \begin{cases}
        \forall d,\, \zeta^{dis}_{d, 0} = \beta_{d,0}\\
        \forall d,h,\, \zeta^{dis}_{d, h+1} = \frac{1}{2}\zeta^{dis}_{d, h} + \beta_{d,h}
    \end{cases}
\end{equation}
For each hour $h$ of the day, $\zeta^{inter}_h$ models the congestion on roads between districts, and is computed similarly. With $I\sim \log\mathcal{N}(\mu=0.02, \sigma=0.05)$:
\begin{equation}
    \begin{cases}
        \zeta^{inter}_0 = I\\
        \zeta^{inter}_{h+1} = (\zeta^{inter}_{h} + 0.1)I
    \end{cases}
\end{equation}
Let $v$ be a task corresponding to a trip between district $d_1$ and $d_2$. We compute the perturbed start and end times are:
\begin{equation}
    \begin{cases}
        \xi_v^b = t_v^b + \beta_v\\
        \xi_v^e = \xi_v^b + t_v^e - t_v^b + \zeta^{dis}_{d_1,h(\xi_1)} + \zeta^{inter}_{h(\xi_2)} + \zeta^{dis}_{d_2, h(\xi_3)}
    \end{cases}
\end{equation}
with $\xi_1 = \xi_v^b$, $\xi_2 = \xi_1 + \zeta^{dis}_{d_1,h(\xi_1)}$, and $\xi_3 = \xi_2 + t_v^e - t_v^b + \xi^{inter}_{h(\xi_2)}$.
And the perturbed travel times along arcs $a=(u, v)$:
\begin{equation}
    \xi_a^{tr} = \xi_v^b + t_a^{tr} + \zeta^{dis}_{d_1,h(\xi_1)} + \zeta^{inter}_{h(\xi_2)} + \zeta^{dis}_{d_2, h(\xi_3)}
\end{equation}
with $\xi_1 = \xi_u^e$, $\xi_2 = \xi_1 + \zeta^{dis}_{d_1,h(\xi_1)}$, and $\xi_3 = \xi_2 + t_a^{tr} + \xi^{inter}_{h(\xi_2)}$.

\paragraph{Encoding features}\label{appendix:vsp-features}

For an instance, we compute a matrix of 20 features for every arc of the corresponding graph. Let $a=(u, v)\in A$. The first feature is the length of $a$ in minutes, i.e. the deterministic travel time between $u$ and $v$. The second feature is equal to $c_{\text{vehicle}}$ if $a$ is connected to the source (i;.e if $u=o$), else 0. Then, we add the 9 deciles of $\xi_v^b - (\xi_u^e + \xi_a^{tr})$ which represents the slack between $u$
and $v$. Finally, we add the values of the cumulative probability distribution of the slack, evaluated in $-100$, $-50$, $-20$, $-10$, $0$, $50$, $200$, and $500$.
 
\paragraph{Label solutions}

We label each instance with a local search heuristic initialized with the solution of the deterministic problem. Its implementation can be found on the GitHub repository\footnote{\url{https://github.com/BatyLeo/StochasticVehicleScheduling.jl}}.

\subsubsection{Training plots}\label{appendix:vsp-training-plots}

\begin{figure}[H]
    \centering
    \includegraphics[width=0.8\textwidth]{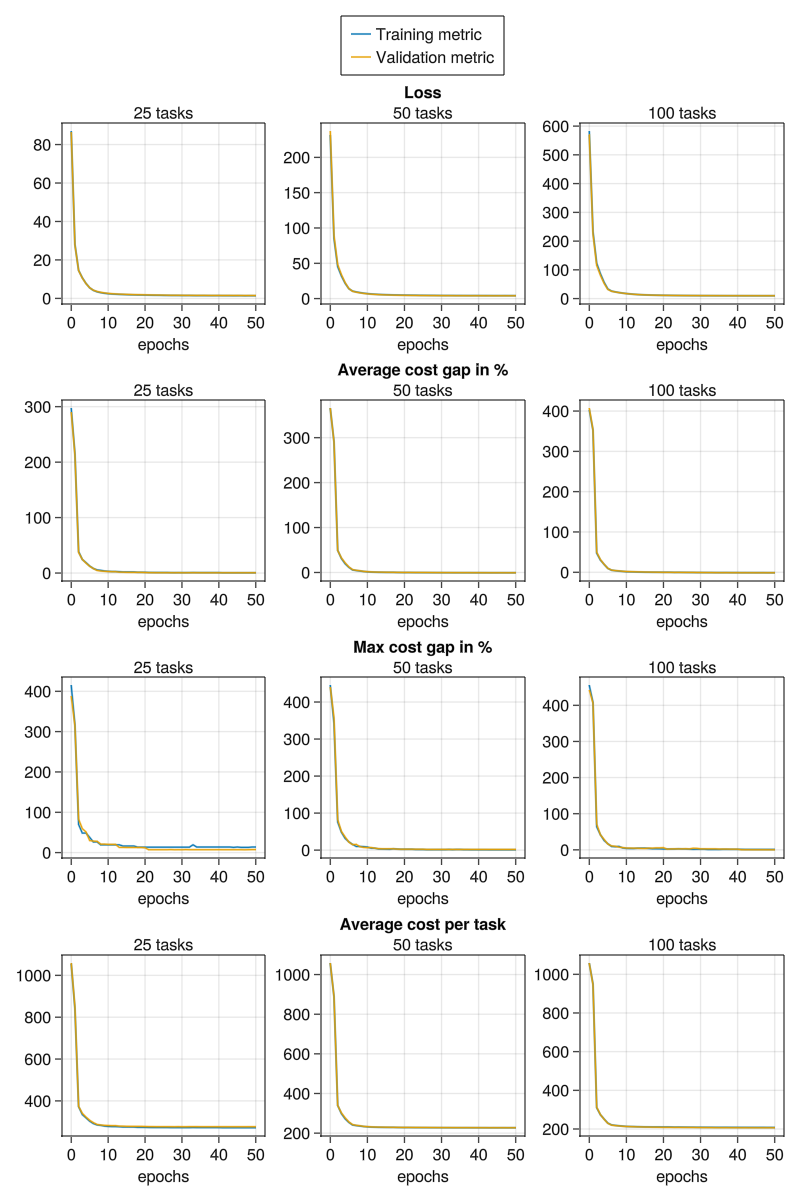}
    \caption{Learning by imitation metric evolution during learning}
\end{figure}

\begin{figure}[H]
    \centering
    \includegraphics[width=0.8\textwidth]{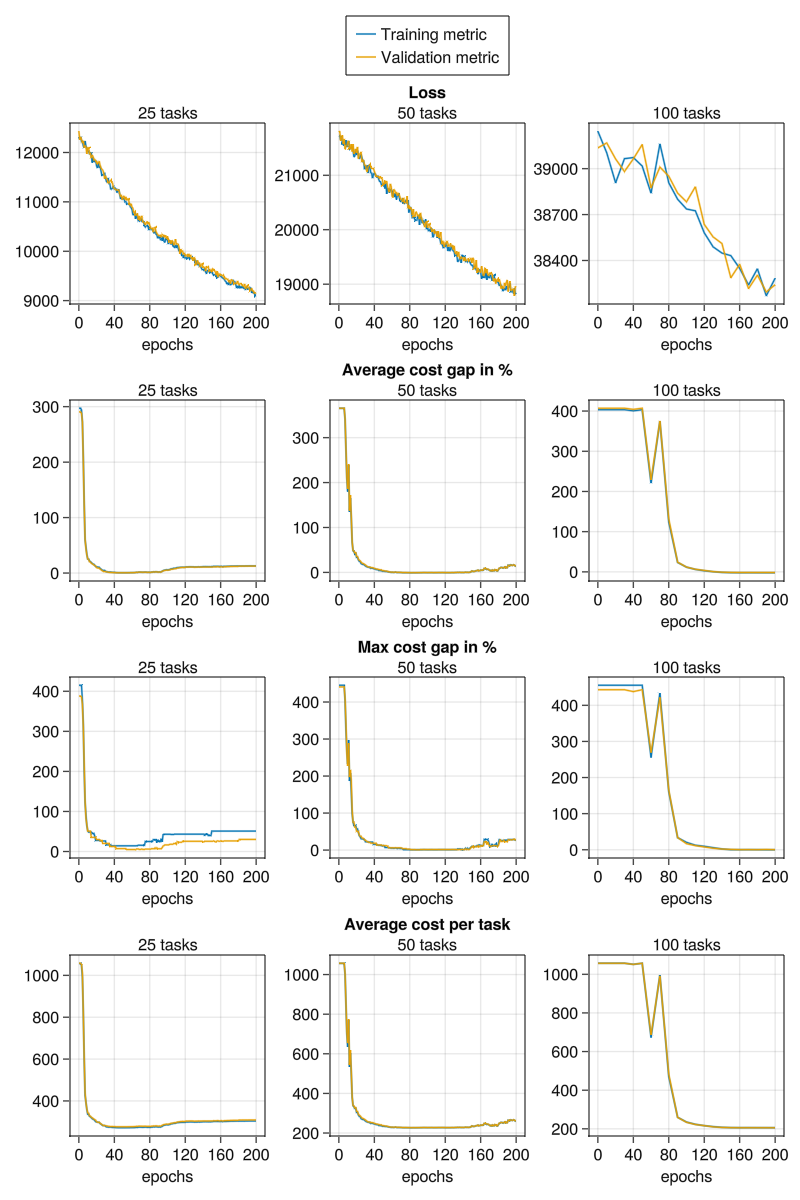}
    \caption{Learning by experience metric evolution during learning}
\end{figure}

\subsection{Two-stage minimum weight spanning tree}\label{sub:two-stage-spanning-tree}

This subsection contains additional material about the two-stage stochastic spanning tree problem.

\subsubsection{Lagrangian heuristic}\label{sub:appendix_lagrangian_heuristic}

We present the heuristic based on Lagrangian relaxation used to label our datasets.

\paragraph{Lagrangian relaxation}

An option to compute a lower bound on the optimal value of the \emph{two-stage minimum weight spanning tree} is to apply a Lagrangian relaxation. For this, we duplicate complicating variables $y$ for each scenario and introduce associated $\theta_{es}$ dual variable for each new constraint:

\begin{equation}    
    \begin{array}{rll}
        \min\,& \displaystyle\sum_{e \in E}c_e y_e + \frac{1}{|S|}\sum_{e \in E}\sum_{s \in S}d_{es}z_{es}\\
        \mathrm{s.t.}\,&\displaystyle\sum_{e \in E} y_e + z_{es} = |V|-1 & \forall s \in S \\
        &\displaystyle\sum_{e \in E(Y)} y_e + z_{es} \leq |Y| - 1, \qquad &\forall Y,\, \emptyset \subsetneq Y \subsetneq E,\, \forall s \in S \\
        & \textcolor{red}{y_{es} = y_e,}\, & \textcolor{red}{\forall e\in E,\, \forall s\in S \text{ ($\theta_{es}$ dual variables)}}\\
        & y, z \in \{0,1\}
    \end{array}
\end{equation}

\noindent Lagrangian dual problem can be formulated as follows:
\begin{equation}
    \max_\theta G(\theta) = \left\{\begin{aligned}
        \min_y\quad & \sum_{e\in E}(c_e + \dfrac{1}{|S|}\sum_{s\in S}\theta_{es})y_e\\
        \mathrm{s.t.}\quad & 0 \leq y\leq M
    \end{aligned}
    \right. + \frac{1}{|S|}\sum_{s\in S}\left\{\begin{aligned}
        \min_{y_s, z_s}\quad & \sum_{e\in E} d_{es}z_{es} - \theta_{es}y_{es}\\
        \mathrm{s.t.}\quad & y_s + z_s \in \text{spanning tree polytope}
    \end{aligned}\right.,
\end{equation}
where $M$ is a large constant. In theory $M$ is not needed, we can take $M=+\infty$, but taking a finite $M$ leads to more informative gradients. $G$ can be maximized by gradient ascent:
\begin{equation}
    (\nabla_\theta G(\theta))_{es} = \frac{1}{|S|}(y_e - y_{es})
\end{equation}

\paragraph{Lagrangian heuristic}

Once we find a good $\theta$ with the Lagrangian relaxation, we can retrieve the corresponding $y_{es}$ solution, and use it as input of the Lagrangian heuristic \ref{alg:lagrangian-heuristic}. The general idea is to rebuild a forest from the edges that are selected in the first stage for most scenarios.

\SetKwComment{Comment}{/* }{ */}
\begin{algorithm}[H]
    \caption{Lagrangian heuristic}\label{alg:lagrangian-heuristic}
    \KwData{$y_{es}\in \{0, 1\},\, \forall e\in E,\, \forall s\in S$}
    \ForAll{$e \in E$}{
\eIf{$\frac{1}{|S|}\sum_{s\in S} y_{es} > 0.5$}{
    $w_e \gets 1$\;
    }{
        $w_e \gets -1$\;        
        }
        }
        \text{Find maximum weight forest for weights $w$}\;
        \KwResult{$y_e\in \{0, 1\},\, \forall e\in E$}
    \end{algorithm}
            
In the numerical experiments, we use the Lagrangian heuristic to build the training set. Practically, we stop the Lagrangian relaxation after 50.000 iterations or if the gap between the lower bound provided by the $G(\theta)$ and the upper bound obtained from the heuristic is non-grater than 0.1\%.

\subsubsection{Instance features}\label{subsec:two-stage-spanning-tree-features}

We detail here all the features used for training.

\paragraph{Basic feature set}
Basic features only contain twelve features. Let $e$ be an edge. The first one is the first stage cost $c_e$, and the other 11 are the quantiles of the second stage costs $(d_{es})_{s\in S}$ for values $\{0,\, 0.1,\, 0.2,\, 0.3,\, 0.4,\, 0.5,\, 0.6,\, 0.7,\, 0.8,\, 0.9,\, 1\}$. All quantiles in the advanced feature set use the same values.

\paragraph{Advanced feature set}
Advanced features contain 79 features in total. Let $e$ be an edge. The first 12 features are the same as the basic features. To these, we add quantiles of best stage costs (\textit{i.e} for each scenario $s$ the value $\min(c_e, d_{es})$), quantiles of neighbors first stage costs, and quantiles of neighbors second stage costs. Then, we compute several single stage minimum spanning trees with Kruskal's algorithm: one with first stage costs, one for each scenario with corresponding second stage costs, and one for each scenario with corresponding best stage costs $\min(c_e, d_{es})$. We use these spanning trees to compute new useful features: a binary features that tells if $e$ is in the first stage spanning tree, and quantile features on its presence in second stage trees. Finally, we use best stage spanning trees to compute quantiles for the cost of $e$ in the best stage costs spanning tree: one set of quantiles for first edge costs, and another set for second stage costs.

\end{document}